\newcommand{\mbb}{\mathbb}
\newcommand{\mcal}{\mathcal}
\newcommand{\regret}{\mathbf{Regret}}
\newcommand{\stoch}{\text{stoch}}
\newcommand{\oline}{\overline}
\newcommand*{\rom}[1]{\expandafter\@slowromancap\romannumeral #1@}
\title{\huge Linear Contextual Bandits with Adversarial Corruptions}
\author
{
Heyang Zhao\thanks{IIIS, Tsinghua University, Beijing, CN; e-mail: {\tt zhaoheya18@mails.tsinghua.edu.cn}}
	~~~and~~~
	Dongruo Zhou\thanks{Department of Computer Science, University of California, Los Angeles, CA 90095, USA; e-mail: {\tt drzhou@cs.ucla.edu}}
	~~~and~~~
	Quanquan Gu\thanks{Department of Computer Science, University of California, Los Angeles, CA 90095, USA; e-mail: {\tt qgu@cs.ucla.edu}}
}
\begin{document}

\date{}
\maketitle

\begin{abstract}
We study the linear contextual bandit problem in the presence of adversarial corruption, where the interaction between the player and a possibly infinite decision set is contaminated by an adversary that can corrupt the reward up to a corruption level $C$ measured by the sum of the largest alteration on rewards in each round. 
We present a variance-aware algorithm that is adaptive to the level of adversarial contamination $C$. The key algorithmic design includes (1) a multi-level partition scheme of the observed data, (2) a cascade of confidence sets that are adaptive to the level of the corruption, and (3) a variance-aware confidence set construction that can take advantage of low-variance reward. We further prove that the regret of the proposed algorithm is $\tilde{O}(C^2d\sqrt{\sum_{t = 1}^T \sigma_t^2} + C^2R\sqrt{dT})$, where $d$ is the dimension of context vectors, $T$ is the number of rounds, $R$ is the range of noise and $\sigma_t^2,t=1\ldots,T$ are the variances of instantaneous reward. We also prove a gap-dependent regret bound for the proposed algorithm, which is instance-dependent and thus leads to better performance on good practical instances. To the best of our knowledge, this is the first variance-aware corruption-robust algorithm for contextual bandits. Experiments on synthetic data corroborate our theory. 
\end{abstract}

\section{Introduction}

Multi-armed bandit algorithms are widely applied in online advertising \citep{li2010contextual}, clinical trials \citep{villar2015multi}, recommendation system \citep{deshpande2012linear} and many other real-world tasks. In the model of multi-armed bandits, the algorithm needs to decide which action (or arm) to take (or pull) at each round and receives a reward for the chosen action. In the stochastic setting, the reward is subject to a fixed but unknown distribution for each action. In reality, however, these rewards can easily be ``corrupted'' by some malicious users. A typical example is click fraud \citep{lykouris2018stochastic}
, where botnets simulate the legitimate users clicking on an ad to fool the 
recommendation systems. This motivates the studies of bandit algorithms that are robust to adversarial corruptions. For example, \cite{lykouris2018stochastic} introduced a bandit model in which an adversary could corrupt the stochastic reward generated by an arm pull. They proposed an algorithm and showed that the regret of the algorithm 
degrades smoothly as the amount of corruption diminishes. \cite{pmlr-v99-gupta19a} proposed an alternative algorithm which gives a significant improvement in regret. 

While the algorithms that are robust to the corruptions have been studied in the setting of multi-armed bandits in a number of prior works, they are still understudied in the setting of linear contextual bandits. The linear contextual bandit problem can be regarded as an extension of the multi-armed bandit problem to linear optimization, in order to tackle an unfixed and possibly infinite set of feasible actions. 
There is a large body of literature on efficient algorithms for linear contextual bandits without corruptions \citep{abe2003reinforcement, auer2002using,  dani2008stochastic,li2010contextual,rusmevichientong2010linearly, chu2011contextual,AbbasiYadkori2011ImprovedAF, li2019nearly}, to mention a few.
The significance of this setting lies in the fact that in real world applications, arms often come with contextual information that can be utilized to facilitate arm selection 
\citep{li2010contextual, deshpande2012linear,jhalani2016linear}. 
Linear contextual bandits with adversarial corruptions is an arguably more challenging setting since most of the previous corruption-robust algorithms are based on the idea of action elimination \citep{lykouris2018stochastic, pmlr-v99-gupta19a, pmlr-v130-bogunovic21a}, which is not applicable to the contextual bandit settings where the decision set is time varying and possibly infinite at each round. \citet{garcelon2020adversarial} showed that a malicious agent can force a linear contextual bandit algorithm to take any desired action $T - o(T)$ times over $T$ rounds, while applying adversarial corruptions to rewards with a cumulative cost that only grow logarithmically. This poses a big challenge for designing corruption-robust algorithms for linear contextual bandits.

In this paper, we make a first attempt to study a linear contextual bandit model where an adversary can corrupt the rewards up to a corruption level $C$, which is defined as the the sum of biggest alteration on the reward the adversary makes in each round.  
We propose a linear contextual bandit algorithm that is robust to reward corruption, dubbed multi-level optimism-in-the-face-of-uncertainty weighted learning (Multi-level weighted OFUL). More specifically, our algorithm consists of the following novel techniques: (1) We design a multi-level partition scheme and adopt the idea of \emph{sub-sampling} to perform a robust estimation of the model parameters; (2) We maintain a cascade of \emph{candidate confidence sets} corresponding to different corruption level (which is unknown) and randomly select a confidence set at each round to take the action; and (3) We design confidence sets that depend on the \emph{variances of rewards}, which lead to a potentially tighter regret bound. 

Our contributions are summarized as follows:
\begin{itemize}[leftmargin = *, nosep]
    \item We propose a variance-aware algorithm which is adaptive to the amount of adversarial corruptions $C$. To the best of our knowledge, it is the first algorithm for the setting of linear contextual bandits with adversarial corruptions which does not rely on the finite number of actions and other additional assumptions. 
    \item We prove that the regret of our algorithm is $\tilde{O}\left(C^2d\sqrt{\sum_{t = 1}^T \sigma_t^2} + C^2R\sqrt{dT}\right)$, where $d$ is the dimension of context vectors, $T$ is the number of rounds, $R$ is the range of noise and $\sigma_t^2,t=1\ldots,T$ are the variances of instantaneous reward. Our regret upper bound has a multiplicative dependence on $C^2$ 
    which indicates that our algorithm achieves a sub-linear regret when the corruption level satisfies $C = o(T^{1 / 4})$. 
    \item We also derive a gap-dependent regret bound $ \tilde{O}\left(\frac{1}{\Delta}\cdot C^2R^2d + \frac{1}{\Delta}\cdot d^2C^2 \max_{t \in[T]} \sigma_t^2\right)$ for our proposed algorithm, which is instance-dependent and thus leads to a better performance on good instances.
\end{itemize}

Concurrent to our work, \citet{ding2021robust} also studied linear contextual bandits under adversarial attacks where the adversary can attack on the rewards. However, a careful examination of their proof finds that their proof is flawed, and their seemingly better regret is questionable\footnote{Their application of Theorem 1 in the proof of Theorem 2 seems untenable since Theorem 1 essentially provides a bound on $\EE\left[\sum_{t \in [T]} x_{t, *}^\top \theta - X_t(C^*)^\top \theta | J_{j_s} = C^* \ \forall s\in [T]\right]$ using their notation, which is not necessarily greater than or equal to Quantity(A) in their proof.}.

\paragraph{Notation.} 
We use lower case letters to denote scalars, and use lower and upper case bold face letters to denote vectors and matrices respectively. We denote by $[n]$ the set $\{1,\dots, n\}$. For a vector $\xb\in \RR^d$ and matrix $\bSigma\in \RR^{d\times d}$, a positive semi-definite matrix, we denote by $\|\xb\|_2$ the vector's Euclidean norm and define $\|\xb\|_{\bSigma}=\sqrt{\xb^\top\bSigma\xb}$. For two positive sequences $\{a_n\}$ and $\{b_n\}$ with $n=1,2,\dots$, 
we write $a_n=O(b_n)$ if there exists an absolute constant $C>0$ such that $a_n\leq Cb_n$ holds for all $n\ge 1$ and write $a_n=\Omega(b_n)$ if there exists an absolute constant $C>0$ such that $a_n\geq Cb_n$ holds for all $n\ge 1$. We use $\tilde O(\cdot)$ to further hide the polylogarithmic factors. We use $\ind(\cdot)$ to denote the indicator function. 

\section{Related Work}

\textbf{Bandits with Adversarial Rewards:} 
There is a large body of literature on the problems of multi-armed bandits in the adversarial setting (See \citet{auer2002nonstochastic, bubeck2012regret} and references therein). 
Many recent efforts in this area aim to design algorithms that achieve desirable regret bound in both stochastic multi-armed bandits and adversarial bandits simultaneously, known as ``the best of both worlds'' 
guarantees \citep{pmlr-v23-bubeck12b, pmlr-v32-seldinb14, pmlr-v49-auer16, pmlr-v65-seldin17a, pmlr-v89-zimmert19a}. These works mainly focus on achieving sublinear regret in the worst case and the case where there is no adversary. As a result, these algorithms are either not robust to instances with moderate amount of corruptions, or suffer from restrictive assumptions on adversarial corruptions (e.g., \cite{pmlr-v32-seldinb14} and \cite{pmlr-v89-zimmert19a} assumed that the adversarial corruptions do not reduce the gap by more than a constant factor at any point of time). Different from the above line of research, \cite{lykouris2018stochastic} studied a variant of classic multi-armed bandit model in the ``middle ground'', where each pull of an arm generates a stochastic reward that may be contaminated by an adversary before it is revealed to the player. In their work, the corruption level $C$ is defined as $C = \sum_{t} \max_a |r^t(a) - r_\cS^t(a)|$ where $r_\cS^t(a)$ is the stochastic reward of arm $a$ and $r^t(a)$ is the corrupted reward of arm $a$ at round $t$. 
They proposed an algorithm that is adaptive to the unknown corruption level, which achieves an $O(K^{1.5} C \sqrt{T})$ regret bound. \cite{pmlr-v99-gupta19a} proposed an improved algorithm that can achieve a regret bound with only additive dependence on $C$. 
On the flip side, many research efforts have also been devoted into designing adversarial attacks that cause standard bandit algorithms to fail \citep{jun2018adversarial, liu2019data, lykouris2018stochastic, garcelon2020adversarial}. 

\noindent \textbf{Stochastic Linear Bandits with Corruptions: }
\cite{Li2019StochasticLO} studied stochastic linear bandits with adversarial corruptions and achieved $\tilde{O}( d^{5/2} C/\Delta + d^6/\Delta^2)$ regret bound where $d$ is the dimension of the context vectors, $\Delta$ is the gap between the rewards of the best and the second best actions in the decision set $\cD$. \cite{pmlr-v130-bogunovic21a} also studied corrupted linear bandits with a fixed decision set of $k$ arms and obtained $\tilde{O}(\sqrt{dT} + C d^{3/2} + C^2)$ regret upper bound. Recently, \cite{lee2021achieving} considered corrupted linear bandits with a finite and fixed decision set and achieved a regret of $\tilde{O}(d\sqrt{T} + C)$. 
While both \citet{lee2021achieving} and \citet{Li2019StochasticLO} focused on corrupted linear stochastic bandits, \cite{lee2021achieving} used a slightly different definition of regret and adopted a strong assumption on corruptions that at each round $t$, the corruptions on rewards are linear in the actions. 

\noindent \textbf{Linear Contextual Bandits with Corruptions:} 
\cite{pmlr-v130-bogunovic21a} studied linear contextual bandits with adversarial corruptions and considered the setting under the assumption that context vectors undergo small random perturbations, which is previously introduced by \cite{Kannan2018ASA}. Besides the additional assumption, another major distinction in \cite{pmlr-v130-bogunovic21a} is that the number of actions $k$ is finite and the regret bound depends on $k$ in the contextual setting with unknown corruption level $C$. 
\cite{neu2020efficient} studied misspecified linear contextual bandits with a finite decision set (i.e., $K$ actions) and proved an $\tilde{O}((Kd)^{\frac{1}{3}} T^{\frac{2}{3}}) + \epsilon \cdot \sqrt{d}T$ regret bound for their proposed algorithm. 
\cite{kapoor2019corruption} considered the corrupted linear contextual bandits under an assumption on corruptions that for any prefix, at most an
$\eta$ fraction of the rounds are corrupted. 


\section{Preliminaries} \label{section:3}


We will introduce our model and some basic concepts in this section. 

\noindent \textbf{Corrupted linear contextual bandits. } We consider the the linear contextual bandits model studied in \citet{AbbasiYadkori2011ImprovedAF} under the same corruption studied by \cite{lykouris2018stochastic}. In detail, distinctive from the linear contextual bandits \citet{AbbasiYadkori2011ImprovedAF}, the interaction between the agent and the environment is now contaminated by an adversary. The protocol between the agent and the adversary at each round $t \in [T]$ is described as follows: 
\fbox{\begin{minipage}{\linewidth - 3mm}
\begin{enumerate}[leftmargin = *, nosep]
    \item \label{model:step:1}At the beginning of round $t$, the environment generates an arbitrary decision set $\cD_t \subseteq \RR^d$ where each element represents a feasible action that can be selected by the agent. 
    \item \label{model:step:2} The environment generates stochastic reward function $r_t'(\ab) = \langle \ab, \bmu^* \rangle + \epsilon_t(\ab)$ together with an upper bound on the standard variance of $\epsilon_t(\ab)$, i.e., $\sigma_t(\ab)$ for all $\ab \in \cD_t$.  
    \item \label{model:step:3} The adversary observes $\cD_t$, $r_t'(\ab)$, $\sigma_t(\ab)$ for all $\ab \in \cD_t$ and decides a corrupted reward function $r_t$ defined over $\cD_t$.  
    \item \label{model:step:4} The agent observes $\cD_t$ and selects $\ab_t \in \cD_t$. 
    \item \label{model:step:5} The adversary observes $\ab_t$ and then returns $r_t(\ab_t)$ and $\sigma_t(\ab_t)$. 
    \item \label{model:step:6} The agent observes $r_t(\ab_t), \sigma_t(\ab_t)$. 
\end{enumerate}\end{minipage}}

\vspace{0.5em}

Let $\cF_t$ be the $\sigma$-algebra generated by $\cD_{1:t}, \ab_{1:t - 1}, \epsilon_{1:t - 1}, r_{1:t - 1}$ and $\sigma_{1:t - 1}$. 

At step \ref{model:step:2}, $\bmu^*$ is a hidden vector unknown to the agent which can be observed by the adversary at the beginning. 
We assume that for all $t \ge 1$ and all $\ab \in \cD_t$, $\|\ab\|_2 \le A$, $|\langle \ab, \bmu^* \rangle| \le 1$ and $\|\bmu^*\|_2 \le B$ almost surely. $\epsilon_t(\ab)$ can be any form of random noise as long as it satisfies 
\begin{align}
    \forall t \ge 1, &\forall \ab \in \cD_t, |\epsilon_t(\ab)| \le R, \quad\EE[\epsilon_t(\ab)|\cF_t] = 0, \EE[\epsilon_t^2(\ab)|\cF_t] \le \sigma_t^2(\ab). \label{noise}
\end{align}


This assumption on $\epsilon_t$ is a variant of that in \cite{Zhou2020NearlyMO}: Here we require the noise to be generated for all $\ab \in \cD_t$ in advance before the adversary decides the corrupted reward function. 
Our assumption on noises is more general than those in \citet{Li2019StochasticLO, pmlr-v130-bogunovic21a, kapoor2019corruption} where the noises are assumed to be 1-sub-Gaussian or Gaussian. The motivation behind this assumption is that the environment may change over time in practical applications. Also, on the theoretical aspect, the setting of heteroscedastic noise is more general and can be extended to the Markov decision processes (MDPs) in reinforcement learning \citep{Zhou2020NearlyMO}, where the ``noise" is caused by the transition of states and the variance can be estimated by the agent. 

At step \ref{model:step:3}, the adversary has observed all the previous information and thus may predict which policy the agent will take at the current round. However, since the agent can take a randomized policy, the adversary may not know exactly which action the agent will take. 

\noindent \textbf{Corruption level. } We define corruption level 
\begin{align}
C = \frac{1}{R + 1} \sum_{t = 1}^T \sup_{\ab \in \cD_t} |r_t'(\ab) - r_t(\ab)|. \label{def:corruption}
\end{align} 
to indicate the level of adversarial contamination.
We say a model is $C$-corrupted if the corruption level is no larger than $C$. 

Our definition of corruption level is equivalent to the counterpart in \cite{lykouris2018stochastic} and \cite{pmlr-v99-gupta19a} where they define $C = \sum_{t = 1}^T \max_{\ab} |r_t'(\ab) - r_t(\ab)|$ in our notation of rewards. We introduce a normalization factor of $R + 1$ since the noise is of range $R$ in our model, while they assume all the rewards are in range $[0, 1]$. 

\noindent \textbf{Regret.} Since the actions selected by the agent may not be deterministic, we define the regret for this model as follows: \begin{align}
    \regret(T) = \sum_{t = 1}^T \langle \ab_t^*, \bmu^*\rangle - \EE\left[\sum_{t = 1}^T \langle \ab_t, \bmu^* \rangle\right] \label{regret}. 
\end{align}

Our definition follows from the definition in \cite{pmlr-v99-gupta19a} where the pseudo-regret (a standard metric in stochastic multi-armed bandit models of) is adopted. It is worth noting that we need to take the expectation on $\sum_{t = 1}^T r_t'(\ab_t)$ (the second term in \eqref{regret}), since a randomized policy is applied in each round. 

\noindent \textbf{Gap.} 
Let $\Delta_t$ be the gap between the rewards of the best and the second best actions in the decision set $\cD_t$ as defined in \cite{dani2008stochastic} which can be formally written as \begin{align}
    \Delta_t = \min_{\ab \in \cD_t, \ab \not\in \cA_t^*} \left(\langle \ab_t^*, \bmu^* \rangle - \langle \ab, \bmu^* \rangle\right). \label{def:gap}
\end{align} where $\cA_t^* = \argmax_{\ab \in \cD_t}\langle \ab, \bmu^* \rangle$ and $\ab_t^*$ is an arbitrary element in $\cA_t^*$. Let $\Delta$ denotes the smallest gap $\min_{t\in[T]} \Delta_t$. 

\section{Warm up: Algorithm for Known Corruption Level} \label{section:warmup}

\begin{algorithm}[t!]
    \caption{Robust weighted OFUL}\label{alg:2}
    \begin{algorithmic}[1]
        \STATE Set $\bSigma_1 \leftarrow \lambda \Ib, \bmu_1 \leftarrow \zero, \cbb_1 \leftarrow 0. $
        \FOR {$\ell = 1, \cdots, T$}
        \STATE Observe $\cD_t$. 
        \STATE Set $\cC_t$ as defined in \eqref{eq:def:ct}. 
        \STATE Select $\ab_t \leftarrow \argmax_{\ab \in \cD_t} \max_{\bmu \in \cC_t}$ and observe $r_t, \sigma_t$. 
        \STATE Set $\oline{\sigma}_t = \max\{(R + 1) / \sqrt{d}, \sigma_t\}. $
        \STATE Update estimator: $\bSigma_{t + 1} \leftarrow \bSigma_{t} + \ab_t\ab_t^\top / \oline{\sigma}_t^2$, $\cbb_{t + 1} \leftarrow \cbb_t + r_t \ab_t / \oline{\sigma}_t^2$, $\bmu_{t + 1} \leftarrow \bSigma_{t + 1}^{-1} \cbb_{t + 1}. $
        \ENDFOR
    \end{algorithmic}
\end{algorithm}


In this section, we show that if the corruption level $C$ is revealed to the agent, a robust version of weighted OFUL \citep{AbbasiYadkori2011ImprovedAF,Zhou2020NearlyMO} in Algorithm \ref{alg:2} can achieve a regret upper bound of $\tilde{O}(CRd\sqrt{T})$. 

The key idea is that we use an enlarged confidence bound to adapt to the known corruption level $C$: 
\begin{align} 
    \cC_t = \{\bmu | \|\bmu - \bmu_t \| \le \alpha_t\} \label{eq:def:ct}
\end{align} where \begin{align} 
    \alpha_t &= 8\sqrt{d\log \frac{(R + 1)^2\lambda + tA^2}{(R + 1)^2\lambda}\log(4t^2 / \delta)} \notag \\&\quad +4\sqrt{d}\log(4t^2 / \delta) +  C{\sqrt{d}} + \sqrt{\lambda}B.  \label{eq:def:alpha}
\end{align}

\begin{lemma}[Enlarged Confidence Ellipsoid. ] \label{lemma:oful1}
With probability at least $1 - \delta$, we have $\bmu^* \in \cC_t$ for all $t\ge 1$. 
\end{lemma}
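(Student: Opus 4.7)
The plan is to prove this by a variance-weighted, corruption-adjusted OFUL-style argument. First I would expand the closed-form ridge estimator $\bmu_{t+1}=\bSigma_{t+1}^{-1}\cbb_{t+1}$, substitute the corrupted reward $r_s(\ab_s)=\la\ab_s,\bmu^*\ra+\epsilon_s(\ab_s)+c_s$, where $c_s:=r_s(\ab_s)-r_s'(\ab_s)$ is the corruption at round $s$ (so that $\sum_{s=1}^{T}|c_s|\le(R+1)C$ by \eqref{def:corruption}), and obtain the three-term decomposition
\begin{align*}
    \bmu_{t+1}-\bmu^* = -\lambda\bSigma_{t+1}^{-1}\bmu^* \;+\; \bSigma_{t+1}^{-1}\sum_{s=1}^{t}\frac{\epsilon_s(\ab_s)\ab_s}{\oline{\sigma}_s^2} \;+\; \bSigma_{t+1}^{-1}\sum_{s=1}^{t}\frac{c_s\ab_s}{\oline{\sigma}_s^2}.
\end{align*}
Measuring in the $\|\cdot\|_{\bSigma_{t+1}}$ norm and applying the triangle inequality reduces the claim $\bmu^*\in\cC_{t+1}$ to bounding these three contributions and matching them against the four summands of $\alpha_{t+1}$.

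Next I would handle the three terms. The regularization term contributes $\sqrt{\lambda}B$ via $\lambda\bSigma_{t+1}^{-1}\preceq\Ib$ and $\|\bmu^*\|_2\le B$. For the corruption term, the key observation is that $\ab_s\ab_s^\top/\oline{\sigma}_s^2$ is one of the summands defining $\bSigma_{t+1}$, so $\ab_s\ab_s^\top/\oline{\sigma}_s^2\preceq\bSigma_{t+1}$, which gives $\|\ab_s\|_{\bSigma_{t+1}^{-1}}\le\oline{\sigma}_s$. Combined with the floor $\oline{\sigma}_s\ge(R+1)/\sqrt{d}$, this yields
\begin{align*}
    \bigg\|\sum_{s=1}^{t}\frac{c_s\ab_s}{\oline{\sigma}_s^2}\bigg\|_{\bSigma_{t+1}^{-1}} \;\le\; \sum_{s=1}^{t}\frac{|c_s|}{\oline{\sigma}_s} \;\le\; \frac{\sqrt{d}}{R+1}\sum_{s=1}^{t}|c_s| \;\le\; C\sqrt{d},
\end{align*}
matching the $C\sqrt{d}$ summand in $\alpha_{t+1}$. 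For the stochastic noise term, I would invoke the Bernstein-type self-normalized concentration inequality of \cite{Zhou2020NearlyMO}: the weighted noise $\epsilon_s(\ab_s)/\oline{\sigma}_s$ is conditionally mean-zero given $\cF_s$ by \eqref{noise}, has conditional variance at most $\sigma_s^2(\ab_s)/\oline{\sigma}_s^2\le 1$, and is bounded in magnitude by $R/\oline{\sigma}_s\le R\sqrt{d}/(R+1)\le\sqrt{d}$. Together with the elliptic-potential estimate $\log\det(\bSigma_{t+1}/\lambda)\le d\log\frac{(R+1)^2\lambda+tA^2}{(R+1)^2\lambda}$ (which follows from $\oline{\sigma}_s\ge(R+1)/\sqrt{d}$ and $\|\ab_s\|_2\le A$), this produces exactly the first two summands of $\alpha_{t+1}$; a union bound with $\delta_t=\delta/(2t^2)$ upgrades the statement to hold simultaneously for all $t\ge 1$.

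The main obstacle is verifying that the martingale prerequisites of the Bernstein inequality are not broken by the adversary. The subtlety is that the adversary sees the entire noise function $\epsilon_s(\cdot)$ before choosing the corruption $c_s$ and the variance proxy $\sigma_s(\ab_s)$ reported to the agent; however, $\epsilon_s(\ab)$ is specified to satisfy \eqref{noise} for every $\ab\in\cD_s$ simultaneously, and the selected arm $\ab_s$ together with the clipped weight $\oline{\sigma}_s$ are $\cF_s$-measurable once the agent's internal randomness is absorbed into $\cF_s$. Thus $\epsilon_s(\ab_s)/\oline{\sigma}_s$ remains a martingale difference sequence with the required variance and magnitude controls, and the concentration bound applies. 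Assembling the three pieces gives $\|\bmu_{t+1}-\bmu^*\|_{\bSigma_{t+1}}\le\alpha_{t+1}$, which is precisely $\bmu^*\in\cC_{t+1}$.
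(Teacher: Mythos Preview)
Your proposal is correct and follows essentially the same route as the paper. The paper's proof simply invokes its Lemma~\ref{Bernstein with corruptions(bandit)}, which in turn (i) applies the Bernstein self-normalized bound of \cite{Zhou2020NearlyMO} to the uncorrupted weighted estimator and (ii) bounds the corruption contribution via $\|\xb_i\|_{\Zb_t^{-1}}\le 1$, i.e., $\|\ab_i\|_{\bSigma_t^{-1}}\le\oline\sigma_i$; this is exactly your three-term decomposition with the same bounds on each piece, just packaged behind an intermediate lemma. The only minor superfluity is your extra union bound over $t$: Lemma~\ref{Bernstein(bandit)} already holds simultaneously for all $t\ge 1$, so no further union is needed.
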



According to the above lemma, we can compute such enlarged confidence ellipsoids at each round $t$ to ensure that $\bmu^*$ is still in the confidence sets under adversarial attacks if we know the corruption level in advance. 

With the enlarged confidence sets, we show that Algorithm \ref{alg:2} achieves a regret upper bound adapting to the known corruption level $C$ in the following theorem. 

\begin{theorem} \label{thm:alg:1}
    Set $\lambda = 1 / B^2$. Suppose $C = \Omega(1), R = \Omega(1)$, for all $t \ge 1$ and all $\ab \in \cD_t$, $\langle \ab, \bmu^* \rangle \in [-1, 1]$. Then with probability at least $1 - \delta$, the regret of Algorithm \ref{alg:2} is bounded as follows:
    \begin{align} 
        \regret(T) = \tilde{O}\left(C d \sqrt{\sum_{t = 1}^T \sigma_t^2} + CR \sqrt{dT}\right). 
    \end{align}
\end{theorem}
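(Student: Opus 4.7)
The plan is to run the standard optimism-in-the-face-of-uncertainty argument using the enlarged confidence set of Lemma \ref{lemma:oful1}, whose $C\sqrt{d}$ term already absorbs the adversarial corruption, and then recover the variance-aware rate via the weighted elliptic potential lemma.

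First, I would condition on the good event of Lemma \ref{lemma:oful1} so that $\bmu^* \in \cC_t$ for every $t \in [T]$. Let $\tilde\bmu_t$ denote the inner maximizer chosen in line 5 of Algorithm \ref{alg:2}. Since $(\ab_t^*, \bmu^*)$ is feasible for the joint maximization over $\cD_t \times \cC_t$, optimism yields $\langle \ab_t, \tilde\bmu_t\rangle \geq \langle \ab_t^*, \bmu^*\rangle$, hence
\[
\langle \ab_t^*, \bmu^*\rangle - \langle \ab_t, \bmu^*\rangle \;\leq\; \langle \ab_t, \tilde\bmu_t - \bmu^*\rangle \;\leq\; 2\alpha_t\, \|\ab_t\|_{\bSigma_t^{-1}},
\]
where the last inequality is Cauchy--Schwarz in the $\bSigma_t$-inner product, using $\|\tilde\bmu_t - \bmu^*\|_{\bSigma_t} \leq 2\alpha_t$ (interpreting $\cC_t$ in the $\bSigma_t$-weighted norm, consistent with how $\alpha_t$ scales).

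Next I would sum over $t$ and apply a variance-weighted Cauchy--Schwarz: factoring $\|\ab_t\|_{\bSigma_t^{-1}} = \oline{\sigma}_t \cdot \|\ab_t/\oline{\sigma}_t\|_{\bSigma_t^{-1}}$ gives
\[
\sum_{t=1}^{T} \|\ab_t\|_{\bSigma_t^{-1}} \;\leq\; \sqrt{\sum_{t=1}^{T} \oline{\sigma}_t^{\,2}} \cdot \sqrt{\sum_{t=1}^{T} \|\ab_t/\oline{\sigma}_t\|_{\bSigma_t^{-1}}^{2}}.
\]
The second factor is $\tilde{O}(\sqrt{d})$ by the standard log-determinant / elliptic potential lemma applied to the reweighted design matrix $\bSigma_{t+1} = \bSigma_t + \ab_t\ab_t^\top/\oline{\sigma}_t^{\,2}$, once I verify that $\|\ab_t/\oline{\sigma}_t\|_{\bSigma_t^{-1}}$ is uniformly $O(1)$ so that the usual $\min(1,\cdot)$ clipping is essentially inactive; this uses precisely the truncation $\oline{\sigma}_t \geq (R+1)/\sqrt{d}$ together with $\|\ab_t\|_2 \leq A$ and $\lambda = 1/B^2$. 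For the first factor, the truncation gives $\oline{\sigma}_t^{\,2} \leq (R+1)^2/d + \sigma_t^{\,2}$, so
\[
\sqrt{\sum_{t=1}^T \oline{\sigma}_t^{\,2}} \;\leq\; (R+1)\sqrt{T/d} + \sqrt{\sum_{t=1}^T \sigma_t^{\,2}}.
\]

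Finally, under $C = \Omega(1)$ and $R = \Omega(1)$ the dominant part of $\alpha_T$ is $\tilde{O}(C\sqrt{d})$, so combining the three bounds gives
\[
\sum_{t=1}^T\!\bigl(\langle \ab_t^*,\bmu^*\rangle - \langle \ab_t,\bmu^*\rangle\bigr) \;\leq\; 2\alpha_T\!\sum_{t=1}^T \|\ab_t\|_{\bSigma_t^{-1}} \;=\; \tilde{O}\!\left(Cd\sqrt{\textstyle\sum_{t=1}^T \sigma_t^{\,2}} + CR\sqrt{dT}\right),
\]
matching the claimed bound. The passage to the expected-regret form \eqref{regret} is routine since Algorithm \ref{alg:2} is deterministic on the good event and the failure event (probability $\delta$) contributes only an $\tilde{O}(1)$ additive term once we take $\delta = 1/T$. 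The main technical obstacle I anticipate is the elliptic-potential step: because the standard lemma only bounds $\sum_t \min(1, \|\ab_t/\oline{\sigma}_t\|_{\bSigma_t^{-1}}^2)$, I must argue quantitatively that the truncation $\oline{\sigma}_t \geq (R+1)/\sqrt{d}$ is strong enough to keep each summand $O(1)$ so that the $\min$ is harmless; everything else is a variance-aware twist on the classical OFUL regret calculation.
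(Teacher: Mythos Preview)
Your high-level strategy (condition on Lemma~\ref{lemma:oful1}, apply optimism, then a variance-weighted Cauchy--Schwarz plus the elliptic potential lemma) matches the paper's proof, but the step you flag as ``the main technical obstacle'' does not go through as you propose, and the failure is quantitative rather than cosmetic.

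You want to argue that the truncation $\oline\sigma_t \ge (R+1)/\sqrt d$ forces $\|\ab_t/\oline\sigma_t\|_{\bSigma_t^{-1}}^2 = O(1)$ so that the $\min(1,\cdot)$ in Lemma~\ref{lemma:summation11} is harmless. But from $\bSigma_t \succeq \lambda \Ib$ and $\lambda = 1/B^2$ you only get
\[
\|\ab_t/\oline\sigma_t\|_{\bSigma_t^{-1}}^2 \;\le\; \frac{1}{\lambda}\,\frac{\|\ab_t\|_2^2}{\oline\sigma_t^2} \;\le\; \frac{B^2 A^2 d}{(R+1)^2},
\]
which is $\Theta(d)$, not $O(1)$; indeed at $t=1$ this bound is attained. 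If you then use $x_t \le M\min(1,x_t)$ with $M=\Theta(d)$ before applying Lemma~\ref{lemma:summation11}, the second Cauchy--Schwarz factor becomes $\tilde O(d)$ rather than $\tilde O(\sqrt d)$, and your final bound picks up an extra $\sqrt d$ in both terms.

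The paper closes this gap not by improving the a priori bound on $\|\ab_t/\oline\sigma_t\|_{\bSigma_t^{-1}}$ but by exploiting the per-round reward cap $|\langle \ab,\bmu^*\rangle|\le 1$, which you have not used. Concretely, it first clips the instantaneous regret by $2$, then splits $[T]$ into $\cI_1 = \{t:\|\ab_t/\oline\sigma_t\|_{\bSigma_t^{-1}}>1\}$ and $\cI_2 = [T]\setminus\cI_1$. On $\cI_1$ the contribution is at most $2|\cI_1|$, and $|\cI_1|\le \sum_t \min(1,\|\ab_t/\oline\sigma_t\|_{\bSigma_t^{-1}}^2)=\tilde O(d)$ by Lemma~\ref{lemma:summation11}. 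On $\cI_2$ the summands are $\le 1$ by definition, so your Cauchy--Schwarz step now legitimately yields the $\tilde O(\sqrt d)$ factor. Adding this split (and the clipping by $2$) to your argument recovers the claimed rate; without it the proof loses a $\sqrt d$.
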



\begin{remark} \label{remark:4.3}
   Theorem \ref{thm:alg:1} suggests that when the corruption level $C$ is known, Algorithm \ref{alg:2} incurs a regret that has a linear dependence on $C$. On the other hand, if we trivially upper bound $\sigma_t$'s by $R$, the regret of Algorithm \ref{alg:2} degenerates to $ \tilde{O}\left(CdR\sqrt{T}\right)$. This suggests that the use of variance information can lead to a tighter regret bound.
\end{remark}

\begin{remark}
When $\sigma_t , R= \Omega(1)$, the regret bound in Theorem \ref{thm:alg:1} matches the regret bound of weighted OFUL proposed in \citet{Zhou2020NearlyMO} when the corruption level $C$ is a constant. 
\end{remark}


\section{Algorithm for Unknown Corruption Level} \label{section:4}

In this section, we propose an algorithm, Multi-level weighted OFUL, in Algorithm \ref{alg:1}, to tackle the corrupted linear contextual bandit problem with unknown corruption level. At the core of our algorithm is an action partition scheme to group historical selected actions and use them to select the future actions in different groups with different probabilities. Such a scheme is introduced to deal with the unknown corruption level. For simplicity, we denote $r_t(\ab_t), \sigma_t(\ab_t)$ in Section \ref{section:3} by $r_t, \sigma_t$ in our algorithm. 

\begin{algorithm}[t!]
    \caption{Multi-level weighted OFUL}\label{alg:1}
        \begin{algorithmic}[1]
            \STATE Set the largest level of confidence sets: $\ell_{\max} \leftarrow \lceil \log_2 2T \rceil$.  
            \STATE For $\ell \in [\ell_{\max}]$, set $\bSigma_{1, \ell} \leftarrow \lambda \Ib, \bmu_{1, \ell} \leftarrow \zero, \cbb_{1, \ell} \leftarrow \zero. $
            \STATE Set $\bSigma_{1} \leftarrow \lambda \Ib, \bmu_{1} \leftarrow \zero, \cbb_{1} \leftarrow \zero.$
            \FOR{$t = 1, \cdots, T$}
                \STATE Observe $\cD_t. $
                \FOR {$\ell = 1, \cdots, \ell_{\max}$} \label{line:6}
                \STATE Set $\beta_{t, \ell}$ and $\gamma_{t, \ell}$ as defined in \eqref{def:beta} and \eqref{def:gamma}. 
                \STATE $\cC_{t, \ell}' \leftarrow \left\{\bmu|\|\bmu - \bmu_t\|_{\bSigma_{t}} \le \beta_{t, \ell}\right\} \cap \left\{\bmu|\|\bmu - \bmu_{t, \ell}\|_{\bSigma_{t, \ell}} \le \gamma_{t, \ell}\right\}. $ 
                \STATE $\cC_{t, \ell} \leftarrow \begin{cases}
                \cC_{t, \ell}', & \cC_{t, \ell}' \neq \varnothing \\
                \cC_{t, \ell + 1}, & \text{otherwise}
                \end{cases}. $
                \ENDFOR \label{line:10}
                 \STATE Set $f(t) = \begin{cases}
                    \ell\quad\quad  \text{w.p.} \ 2^{-\ell} & 1 < \ell \le \ell_{\max} \\
                    1 & \text{otherwise}
                \end{cases}. $ \label{line:11}
                
            \STATE Select $\ab_{t} \leftarrow \argmax_{\ab \in \cD_t}\max_{\bmu \in \cC_{t, f(t)}}\langle \bmu, \ab \rangle $ and observe $r_t$, $\sigma_t$.  \label{line:12}
                \STATE Set $\oline{\sigma}_t = \max\{(R + 1) / \sqrt{d}, \sigma_t\}. $ \label{line:13}
                \STATE $\bSigma_{t + 1} \leftarrow \bSigma_{t} + \ab_t\ab_t^\top / \oline{\sigma}_t^2$, $\cbb_{t + 1} \leftarrow \cbb_t + r_t \ab_t / \oline{\sigma}_t^2$, \\$\bmu_{t + 1} \leftarrow \bSigma_{t + 1}^{-1} \cbb_{t + 1}. $\label{line:14} 
            \FOR {$\ell \neq f(t)$} \label{line:15}
                \STATE $\bSigma_{t + 1, \ell} \leftarrow \bSigma_{t, \ell}, \cbb_{t + 1, \ell} \leftarrow \cbb_{t, \ell}, \bmu_{t + 1, \ell} \leftarrow \bmu_{t, \ell}. $
                \ENDFOR
                \STATE $\bSigma_{t + 1, f(t)} \leftarrow \bSigma_{t, f(t)} + \ab_t\ab_t^\top / \oline{\sigma}_t^2,$ \\ $\cbb_{t + 1, f(t)} \leftarrow \cbb_{t, f(t)} + r_t \ab_t / \oline{\sigma}_t^2. $ 
                \STATE $\bmu_{t + 1, f(t)} \leftarrow \bSigma_{t + 1, f(t)}^{-1} \cbb_{t + 1, f(t)}. $
            \ENDFOR  \label{line:20}
        \end{algorithmic}
\end{algorithm}




\noindent \textbf{Action partition scheme. }To address the unknown $C$ issue, besides the original estimator $\bmu_t$ which uses all previous data, Algorithm \ref{alg:1} maintains several additional learners to learn $\bmu^*$ at different accuracy level simultaneously, and it \emph{randomly} selects one of the learners with different probabilities at each round. Such a ``parallel learning'' idea is inspired by \cite{lykouris2018stochastic}. In detail, we partition the observed data into $\ell_{\max}$ levels indexed by $[\ell_{\max}]$ and maintain $\ell_{\max}$ sub-sampled estimators $\bmu_{t, 1}, \cdots, \bmu_{t, \ell_{\max}}$. 
According to line~\ref{line:11}, the observed data in round $t$ goes into level $\ell$ with probability $2^{-\ell}$ if $1 < \ell \le \ell_{\max}$ and it goes to level 1 with probability $1 - \sum_{\ell = 2}^{\ell_{\max}} 2^{-\ell} = 1 / 2 + 2^{-\ell_{\max}}$. The intuition is that if $2^\ell \ge C$, then the corruption experienced at level $\ell$ 
\begin{align}
    \text{Corruption}_{t, \ell} = \sum_{i = 1}^t \frac{\mathds{1}(f(i) = \ell)}{R + 1} \sup_{\ab \in \cD_i} \left|r_i(\ab) - r_i'(\ab)\right| \label{def:sbcorruption}
\end{align} 
can be bounded by some quantity that is \emph{independent of} $C$. That says, the individual learners whose level is greater than $\log C$ can learn $\bmu^*$ successfully, even with the corruption. For the learners whose level is less than $\log C$, we can also control the error by controlling the probability for the agent to select them.  

\noindent \textbf{Weighted regression estimator. }
After introducing the partition scheme, we still need to deal with the varying variance (heteroscedastic) case. Similar to \citet{kirschner2018information, Zhou2020NearlyMO}, we proposed the following \emph{weighted ridge regression estimator}, which incorporates the variance information of the rewards into estimation:
\begin{align}
    \bmu_t = \argmin_{\bmu \in \RR^d} \lambda \|\bmu\|_2^2 + \sum_{i = 1}^{t - 1} [\langle \bmu, \ab_i \rangle - r_i]^2 / \oline{\sigma}_i^2. \label{global:mu}
\end{align} 
Here $\oline{\sigma}_t$ is defined as the upper bound of the true variance $\sigma_t$ in line \ref{line:13}. The closed-form solution to \eqref{global:mu} is calculated at each round in line \ref{line:14}. The use of $\oline{\sigma}_t$, as we will show later, makes our estimator statistically more efficient in the heteroscedastic case. Meanwhile, we also apply our weighted regression estimator to each individual learner, and their estimator $\bmu_{t, \ell}$ can be written as 
\begin{align}
    \argmin_{\bmu \in \RR^d} \lambda \|\bmu\|_2^2 + \sum_{i = 1}^{t - 1} \mathds{1}(f(i) = \ell)\frac{[\langle \bmu, \ab_i \rangle - r_i]^2}{\oline{\sigma}_i^2}. \label{subsample:mu} 
\end{align}
The closed-form solution to \eqref{subsample:mu} is calculated at each round in lines \ref{line:15}--\ref{line:20}.

\noindent \textbf{Final Multi-Level confidence sets. }With the estimators $\bmu_t$, $\bmu_{t, 1}, \cdots, \bmu_{t, \ell_{\max}}$ at the beginning of round $t$, we define a cascade of candidate confidence sets as in lines \ref{line:6}--\ref{line:10}, where \begin{align}
    \beta_{t, \ell} &= 8\sqrt{d\log \frac{(R + 1)^2\lambda + tA^2}{(R + 1)^2\lambda}\log(4t^2 / \delta)} +4\sqrt{d}\log(4t^2 / \delta) +  2^\ell{\sqrt{d}} + \sqrt{\lambda}B, \label{def:beta} \\
    \gamma_{t, \ell} &= 8\sqrt{d\log \frac{(R + 1)^2\lambda + tA^2}{(R + 1)^2\lambda}\log(8t^2 T / \delta)}+4\sqrt{d}\log(8t^2 T / \delta) + \oline{C}_\ell {\sqrt{d}}  + \sqrt{\lambda}B, \label{def:gamma}
\end{align} with $\oline{C}_\ell = \log(2 \ell^2 / \delta) + 3$.  
For brevity, we define \begin{align} 
\ell^* = \max\{2, \lceil \log_2 C \rceil\} \label{robustlayer} \end{align} as an important threshold in our later proof for regret bound analysis. 
Later we will prove that $\cC_{t, \ell}$ contains $\bmu^*$ for all $\ell \ge \ell^*$, $t \ge 1$ with high probability. 

Note that each candidate confidence set can be written as the intersection of two ellipsoids. The intuition behind our construction of candidate confidence sets is that we hope that $\cC_{t, \ell}$ is robust enough to handle the $2^\ell$-corrupted case, i.e., $\bmu^* \in \cC_{t, \ell}$ with high probability. To achieve this, the first ellipsoid makes use of the global information and the ``radius'' $\beta_{t, \ell}$ need to contain a factor of $2^\ell$ to tolerate a corruption level of $2^\ell$, and the second ellipsoid only makes use of the observed data assigned to level $\ell$ since this part of data only encounters $O(\log(\ell / \delta))$ corruption with probability at least $1 - \delta$ when $\ell$ is large enough as we will show later. 

\noindent \textbf{Action selection. }With the candidate confidence sets, we use line \ref{line:11} to randomly choose one confidence set and select an action based on the optimism-in-the-face-of-uncertainty (OFU) principle in line \ref{line:12}. Then we update the estimators for the $t + 1$-th round . 

\begin{remark}
Our algorithm shares a similar strategy for partitioning the observed data as the algorithm in \cite{lykouris2018stochastic}. Nevertheless, there is a major difference: \cite{lykouris2018stochastic} regard the partition scheme as a ``layer structure'', i.e., their algorithm further uses different estimators in layers of parallel learners and does action elimination layer by layer in each round. In contrast, the sub-sampled estimators in our algorithm are used independently, i.e., the selected action only relies on one of the partitions. As a result, Algorithm \ref{alg:1} does not need to do action elimination, thus is capable of handling the cases where the number of actions is huge or even infinite. 
\end{remark}

\section{Main Results}\label{sec:main results}

In this section, we present our main theorem, which establishes the regret bound for Multi-level weighted OFUL.
\begin{theorem}\label{thm:regret}
    Set $\lambda = 1/B^2$. Suppose that $C = \Omega(1)$, $R = \Omega(1)$, for all $t \ge 1$ and all $\ab \in \cD_t$, $\langle \ab, \bmu^* \rangle \in [-1, 1]$. Then with probability at least $1 - 3 \delta$, the regret of Algorithm \ref{alg:1} is bounded as follows: 
    $$\regret(T) = \tilde{O}\left(C^2d\sqrt{\sum_{t = 1}^T \sigma_t^2} + C^2 R\sqrt{dT}\right). $$ 
\end{theorem}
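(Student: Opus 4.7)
The plan is to adapt the standard OFU regret analysis to the randomized multi-level scheme, pivoting on the threshold level $\ell^{*} = \max\{2,\lceil\log_{2}C\rceil\}$. I would split the analysis into three parts: establish that the confidence sets $\cC_{t,\ell}$ contain $\bmu^{*}$ for all ``safe'' levels $\ell\ge\ell^{*}$; apply OFU on those levels to get a width-based regret bound; and carefully handle the ``unsafe'' levels $\ell<\ell^{*}$ via a triangle-inequality argument combined with the fallback mechanism.

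First I would establish two high-probability events. Event (E1): for every $\ell\ge\ell^{*}$ and every $t$, $\bmu^{*}\in\cC_{t,\ell}$. For the global-ellipsoid piece I would decompose $\bmu_{t}-\bmu^{*}$ into a stochastic-noise contribution and a corruption contribution. The noise part is handled by a self-normalized Bernstein inequality in the spirit of Lemma~\ref{lemma:oful1} and \cite{Zhou2020NearlyMO}, while the corruption part is controlled by the key bound $\|\ab_{s}\|_{\bSigma_{t}^{-1}}\le\oline{\sigma}_{s}$ (which follows from $\bSigma_{t}\succeq\ab_{s}\ab_{s}^{\top}/\oline{\sigma}_{s}^{2}$), giving
\[
\Bigl\|\bSigma_{t}^{-1}\sum_{s}\ab_{s}(r_{s}-r_{s}')/\oline{\sigma}_{s}^{2}\Bigr\|_{\bSigma_{t}}\le\sum_{s}\frac{|r_{s}-r_{s}'|}{\oline{\sigma}_{s}}\le\frac{\sqrt{d}}{R+1}\sum_{s}|r_{s}-r_{s}'|\le C\sqrt{d},
\]
which is absorbed by the $2^{\ell}\sqrt{d}$ term in $\beta_{t,\ell}$ whenever $\ell\ge\ell^{*}$. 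The local-ellipsoid piece is analogous, except the corruption sum is restricted to rounds with $f(s)=\ell$: this is a $2^{-\ell}$-thinned version of the total corruption whose conditional expectation is at most $2^{-\ell}C\le1$ for $\ell\ge\ell^{*}$, so Freedman's inequality yields an $\tilde O(1)$ bound, absorbed by $\oline{C}_{\ell}\sqrt{d}$. Event (E2) is the standard martingale concentration linking $\sum_{t}2^{-\ell}(\cdot)$ to $\sum_{t}\mathbf 1(f(t)=\ell)(\cdot)$, obtained by applying Azuma/Freedman to the martingale difference $\mathbf 1(f(t)=\ell)-2^{-\ell}$.

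Under (E1) and (E2) I would decompose
\[
\regret(T)=\sum_{t=1}^{T}\sum_{\ell=1}^{\ell_{\max}}P(f(t)=\ell)\bigl\langle\ab_{t}^{*}-\ab_{t}^{(\ell)},\bmu^{*}\bigr\rangle,
\]
where $\ab_{t}^{(\ell)}$ is the action that would be played if $f(t)=\ell$, and split the sum at $\ell^{*}$. For $\ell\ge\ell^{*}$ (the easy part), OFU optimality of $\ab_{t}^{(\ell)}$ combined with $\bmu^{*}\in\cC_{t,\ell}$ bounds the per-round regret by $2\gamma_{t,\ell}\|\ab_{t}^{(\ell)}\|_{\bSigma_{t,\ell}^{-1}}$. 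Weighting by $2^{-\ell}$, using tower rule to rewrite as $\EE[\sum_{t}\mathbf 1(f(t)=\ell)\|\ab_{t}\|_{\bSigma_{t,\ell}^{-1}}]$, and applying Cauchy--Schwarz plus the elliptical potential lemma on $\bSigma_{t,\ell}$ (which only updates when $f(s)=\ell$), followed by a geometric sum over $\ell\ge\ell^{*}$, produces a contribution of order $\tilde O(d\sqrt{\sum_{t}\sigma_{t}^{2}/C}+R\sqrt{dT/C})$, which is strictly dominated by the target bound.

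The hard part will be the $\ell<\ell^{*}$ contribution, where $\bmu^{*}$ need not lie in $\cC_{t,\ell}$ so OFU provides no direct guarantee. My plan is to exploit the fallback mechanism together with the triangle inequality: whenever $\cC_{t,\ell}'\ne\varnothing$, any $\tilde\bmu\in\cC_{t,\ell}'$ lies within $\beta_{t,\ell}\le\beta_{t,\ell^{*}}=\tilde O(C\sqrt{d})$ of $\bmu_{t}$ in $\bSigma_{t}$-norm, and $\bmu_{t}$ is within $\tilde O(C\sqrt{d})$ of $\bmu^{*}$ from the (E1) analysis, so $\|\tilde\bmu-\bmu^{*}\|_{\bSigma_{t}}\le\tilde O(C\sqrt{d})$; if $\cC_{t,\ell}'=\varnothing$ the recursion falls back to some $\cC_{t,\ell'}$ with $\ell\le\ell'\le\ell^{*}$ and the same argument applies (eventually reaching $\cC_{t,\ell^{*}}\ni\bmu^{*}$). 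Writing
\[
\bigl\langle\ab_{t}^{*}-\ab_{t}^{(\ell)},\bmu^{*}\bigr\rangle=\langle\ab_{t}^{*},\bmu^{*}-\tilde\bmu\rangle+\bigl(\langle\ab_{t}^{*},\tilde\bmu\rangle-\langle\ab_{t}^{(\ell)},\bmu_{t}^{(\ell)}\rangle\bigr)+\langle\ab_{t}^{(\ell)},\bmu_{t}^{(\ell)}-\bmu^{*}\rangle,
\]
the middle parenthesis is $\le 0$ by OFU optimality of $(\ab_{t}^{(\ell)},\bmu_{t}^{(\ell)})$, and the other two terms are each at most $\tilde O(C\sqrt{d})\,\|\cdot\|_{\bSigma_{t}^{-1}}$. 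The $\|\ab_{t}^{(\ell)}\|_{\bSigma_{t}^{-1}}$ term is handled by the elliptical potential lemma on the global $\bSigma_{t}$ (which collects all historical data) after weighting by $2^{-\ell}$ and summing via tower rule. The $\|\ab_{t}^{*}\|_{\bSigma_{t}^{-1}}$ term is trickier because $\ab_{t}^{*}$ is not the played action and so lacks a direct potential bound; handling it requires an additional martingale-based argument that pays another factor of $\tilde O(C)$, which is precisely where the $C^{2}$ (rather than $C$) dependence in the theorem originates. Putting together the two regimes and taking a union bound over the three high-probability events (two concentration events plus the noise concentration) yields the claimed $\tilde O(C^{2}d\sqrt{\sum_{t}\sigma_{t}^{2}}+C^{2}R\sqrt{dT})$ regret bound with probability $1-3\delta$.
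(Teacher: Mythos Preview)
Your overall architecture is right, and your treatment of the safe levels $\ell\ge\ell^{*}$ essentially matches the paper's (Lemma~\ref{sec6:error:ell2} plus the elliptical potential lemma on $\bSigma_{t,\ell}$). The gap is in the unsafe levels. Your three-term decomposition produces the quantity $\|\ab_{t}^{*}\|_{\bSigma_{t}^{-1}}$, and you defer its control to an unspecified ``martingale-based argument.'' This cannot work: $\ab_{t}^{*}$ depends only on $\cD_{t}$ and $\bmu^{*}$ and is never added to $\bSigma_{t}$, so there is no potential or martingale structure to exploit. It is easy to construct instances in which $\sum_{t}\|\ab_{t}^{*}\|_{\bSigma_{t}^{-1}}=\Omega(T/\sqrt{\lambda})$ (take $\ab_{t}^{*}$ to be a fixed direction orthogonal to every action actually played), so no amount of martingale bookkeeping will rescue that term.

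The paper avoids $\ab_{t}^{*}$ entirely. Its Lemma~\ref{sec6:error:ell} pivots instead through $\ab_{t,\ell^{*}}$, the action that \emph{would be} chosen at the threshold level $\ell^{*}$: on $\cE_{1}$ the fallback mechanism guarantees that the confidence set actually used, $\cC_{t,f(t)}$, is contained in the global ellipsoid $\{\bmu:\|\bmu-\bmu_{t}\|_{\bSigma_{t}}\le\beta_{t,\ell^{*}}\}$, so the OFU value at level $f(t)$ can be lower-bounded via $\ab_{t,\ell^{*}}$, giving
\[
\langle\ab_{t}^{*}-\ab_{t},\bmu^{*}\rangle\le 2\beta_{t,\ell^{*}}\bigl(\|\ab_{t}\|_{\bSigma_{t}^{-1}}+\|\ab_{t,\ell^{*}}\|_{\bSigma_{t}^{-1}}\bigr).
\]
The crucial second step is that $\ab_{t,\ell^{*}}$ \emph{is} the played action with probability exactly $2^{-\ell^{*}}$: since $\ab_{t,\ell^{*}}$ is $\cF_{t}$-measurable and $\PP(f(t)=\ell^{*}\mid\cF_{t})=2^{-\ell^{*}}$, the tower property gives
\[
\EE\bigl[\ind(f(t)\le\ell^{*})\,\|\ab_{t,\ell^{*}}\|_{\bSigma_{t}^{-1}}\bigr]
\le 2^{\ell^{*}}\,\EE\bigl[\ind(f(t)=\ell^{*})\,\|\ab_{t}\|_{\bSigma_{t}^{-1}}\bigr]
\le 2^{\ell^{*}}\,\EE\bigl[\|\ab_{t}\|_{\bSigma_{t}^{-1}}\bigr],
\]
and now the elliptical potential lemma does apply to the global matrix $\bSigma_{t}$. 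The extra factor of $C$ you were looking for is precisely this $2^{\ell^{*}}$ conversion (combined with $\beta_{t,\ell^{*}}=\tilde O(C\sqrt{d})$); it has nothing to do with $\ab_{t}^{*}$. Incidentally, because the regret in \eqref{regret} is already an expectation, the paper never needs your event (E2): the tower property does all the work.
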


A few remarks are in order. 
\begin{remark}
Compared with the regret upper bound in Theorem \ref{thm:alg:1} for known corruption level case, our regret bound in unknown corruption level case suffers an extra factor of $C$, which is caused by the multi-level structure to deal with the unknown corruption level. It remains an open problem if the dependence on $C$ is tight. 
\end{remark}

\begin{remark}\label{remark:regret1}
Compared with the $\tilde O(d\sqrt{T} + C)$ result in \citet{lee2021achieving}, our result has a multiplicative quadratic dependence on $C$, which seems to be worse. Nevertheless, we want to emphasize that we focus on the linear contextual bandit setting, where the decision sets $\cD_t$ at each round are not identical, which is more challenging than stochastic linear bandit setting in \citet{lee2021achieving}, where the decision set is pregiven before the execution of the algorithm and fixed during the execution of the algorithm. 
Therefore, our result and that in \citet{lee2021achieving} are not directly comparable. 
\end{remark}

\begin{remark}
It is worth noting that our regret upper bound also holds in a stronger model than the one described in Section \ref{section:3}, where the adversary can even decide the decision set $\cD_t$ at each round $t$ since our regret bound holds without any assumption on the decision sets.  
\end{remark}

\begin{remark}
Similar to Remark \ref{remark:4.3}, if we trivially upper bound $\sigma_t$'s by $R$, the regret of Algorithm \ref{alg:1} degenerate to 
$ \tilde{O}\left(C^2dR\sqrt{T}\right)$.
\end{remark}


We also provide a gap-dependent regret bound. 
\begin{theorem}\label{thm:gap dependent regret}
        Suppose that $C = \Omega(1)$, $R = \Omega(1)$, for all $t \ge 1$ and all $\ab \in \cD_t$, $\langle \ab, \bmu^* \rangle \in [-1, 1]$. Then with probability at least $1 - 3 \delta$, the regret of Algorithm \ref{alg:1} is bounded as follows: $$\regret(T) =  \tilde{O}\left(\frac{1}{\Delta}\cdot C^2R^2d + \frac{1}{\Delta}\cdot d^2C^2 \max_{t \in[T]} \sigma_t^2\right). $$ 
\end{theorem}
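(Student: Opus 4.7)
The plan is the standard sum-of-squares reduction that converts the gap-independent bound of Theorem~\ref{thm:regret} into a gap-dependent one. Let $\tilde r_t := \langle \ab_t^* - \ab_t, \bmu^*\rangle$ denote the realized per-round regret (random, because the level $f(t)$ in line~\ref{line:11} is drawn at random). By the definition of the gap in \eqref{def:gap} we have $\tilde r_t\in\{0\}\cup[\Delta,2]$, hence $\tilde r_t^{\,2}\ge \Delta\tilde r_t$ pointwise, and taking expectation yields
$$
\Delta\cdot\regret(T)\;=\;\Delta\cdot\EE\Bigl[\sum_{t=1}^T\tilde r_t\Bigr]\;\le\;\EE\Bigl[\sum_{t=1}^T\tilde r_t^{\,2}\Bigr].
$$
Thus it suffices to show $\EE\bigl[\sum_t\tilde r_t^{\,2}\bigr] = \tilde O(C^2R^2d+C^2 d^2\max_t\sigma_t^{\,2})$.

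To do this I would reopen the proof of Theorem~\ref{thm:regret}. On the high-probability event ($1-3\delta$) that $\bmu^*\in\cC_{t,\ell}$ for all $t\ge 1$ and all $\ell\ge \ell^*$ (with $\ell^*$ from \eqref{robustlayer}), that argument controls the realized $\tilde r_t$ by the width of the effective confidence set at the chosen level: once the cascade in line~\ref{line:9} has collapsed $\cC_{t,f(t)}$ to $\cC_{t,\ell^*}$ or higher, the OFU rule in line~\ref{line:12} yields an inequality of the shape
$$
\tilde r_t\;\le\;2\beta_{T,\ell^*}\|\ab_t\|_{\bSigma_t^{-1}} \;+\; 2\gamma_{T,\ell^*}\|\ab_t\|_{\bSigma_{t,\ell^*}^{-1}}.
$$
Rather than combining this with Cauchy--Schwarz to produce an $A\sqrt T$ bound (which would square to $A^2 T$), I would square the per-round inequality directly, then apply the weighted elliptical potential lemma $\sum_t\|\ab_t\|^2_{\bSigma_t^{-1}}/\oline\sigma_t^{\,2}=\tilde O(d)$ (and its analogue for $\bSigma_{t,\ell^*}$ restricted to rounds with $f(t)=\ell^*$), together with $\oline\sigma_t^{\,2}\le \max_t\sigma_t^{\,2}+R^2/d$, to obtain
$$
\sum_t\tilde r_t^{\,2}\;=\;\tilde O\!\Bigl(\bigl(\beta_{T,\ell^*}^{\,2}+\gamma_{T,\ell^*}^{\,2}\bigr)\bigl(d\max_t\sigma_t^{\,2}+R^2\bigr)\Bigr).
$$
Reading $\beta_{T,\ell^*}=\tilde O(C\sqrt d)$ (since $2^{\ell^*}\le 2C$) and $\gamma_{T,\ell^*}=\tilde O(\sqrt d)$ off \eqref{def:beta}--\eqref{def:gamma} collapses the right-hand side to $\tilde O(C^2 d^2\max_t\sigma_t^{\,2}+C^2 R^2 d)$; dividing by $\Delta$ finishes the proof.

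The main obstacle is the squaring step. A naive squaring of the $\tilde O(C^2\sqrt T)$ bound of Theorem~\ref{thm:regret} would give $\tilde O(C^4 T)$ and push the $C$-dependence in the gap-dependent bound to $C^4/\Delta$; avoiding this requires running the elliptical-potential bound in its quadratic form directly on the squared confidence width, so that $\beta_{T,\ell^*}^{\,2}$ contributes exactly one factor of $C^2$ rather than being paired with an extra $C$ from Cauchy--Schwarz. A secondary concern is the few early rounds where the cascade in lines~\ref{line:6}--\ref{line:10} has not yet reduced the chosen confidence set to $\cC_{t,\ell^*}$; for those rounds only the trivial bound $\tilde r_t\le 2$ is available, but they contribute at most $\tilde O(C)$ in expectation to $\sum_t\tilde r_t^{\,2}$ and are absorbed into the $\tilde O$-notation.
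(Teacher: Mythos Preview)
Your high-level strategy is exactly the paper's: use $\tilde r_t\in\{0\}\cup[\Delta,2]$ to reduce to $\EE\bigl[\sum_t\tilde r_t^{\,2}\bigr]$, square the per-round optimism bound, and apply the elliptical-potential lemma in its quadratic form rather than through Cauchy--Schwarz. Where your proposal breaks down is the per-round inequality you write.

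The paper does not prove a uniform bound $\tilde r_t \le 2\beta_{T,\ell^*}\|\ab_t\|_{\bSigma_t^{-1}} + 2\gamma_{T,\ell^*}\|\ab_t\|_{\bSigma_{t,\ell^*}^{-1}}$, and this is not derivable from the construction. On rounds with $f(t)<\ell^*$ the chosen set $\cC_{t,f(t)}$ need not contain $\bmu^*$, so the $\gamma$-radius OFU inequality is simply unavailable; and $\bSigma_{t,\ell^*}$ is updated only when $f(t)=\ell^*$, so $\sum_t\|\ab_t\|_{\bSigma_{t,\ell^*}^{-1}}^2$ over rounds with $f(t)\ne\ell^*$ is not controlled by any elliptical-potential argument. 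Your reading of the cascade in lines~\ref{line:6}--\ref{line:10} as eventually ``collapsing'' $\cC_{t,f(t)}$ to $\cC_{t,\ell^*}$ is incorrect: the fallback is a within-round device (if $\cC_{t,\ell}'=\varnothing$ use $\cC_{t,\ell+1}$), not a temporal one, and on the good event $\cC_{t,f(t)}'$ may be nonempty for $f(t)<\ell^*$ while still excluding $\bmu^*$. There are no ``early rounds'' after which this issue disappears.

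What the paper actually does is split by level via Lemmas~\ref{sec6:error:ell} and~\ref{sec6:error:ell2}. For $f(t)\le\ell^*$ the bound is $\tilde r_t \le 2\beta_{t,\ell^*}\bigl(\|\ab_t\|_{\bSigma_t^{-1}}+\|\ab_{t,\ell^*}\|_{\bSigma_t^{-1}}\bigr)$, where $\ab_{t,\ell^*}$ is the \emph{counterfactual} optimistic action at level $\ell^*$, not the realized $\ab_t$. Since $\bSigma_t$ is updated with $\ab_t$ rather than $\ab_{t,\ell^*}$, the sum of $\|\ab_{t,\ell^*}\|_{\bSigma_t^{-1}}^2$ is not directly amenable to elliptical potential; the paper converts it to $\|\ab_t\|_{\bSigma_t^{-1}}^2$ in expectation via the coupling in \eqref{I1:4.10} (conditionally on $\cF_t$, $\ab_{t,\ell^*}$ is deterministic and $\PP(f(t)\le\ell^*)\le 2^{\ell^*}\PP(f(t)=\ell^*)$), paying a factor $2^{\ell^*}=O(C)$. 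For $f(t)=\ell>\ell^*$ the bound is $\tilde r_t \le 2\gamma_{t,\ell}\|\ab_t\|_{\bSigma_{t,\ell}^{-1}}$, and each such level runs its own elliptical-potential sum over the rounds with $f(t)=\ell$. Your proposal is missing both the counterfactual-action device for the low levels and the per-level decomposition for the high levels; without these two ingredients the bound on $\sum_t\tilde r_t^{\,2}$ cannot be completed.
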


\begin{remark}
Theorem~\ref{thm:gap dependent regret} automatically suggests an $\tilde O(R^2d^2C^2/\Delta)$ regret bound, by the fact $\sigma_t = O(R)$. Compared with previous result $\tilde O(d^{5/2}C/\Delta + d^6/\Delta^2)$ \citep{lee2021achieving}, our result has a better dependence on the dimension $d$ but a worse dependence on the corruption level $C$. As Remark \ref{remark:regret1} suggests, we focus on a more challenging linear contextual bandit setting, and the worse dependence on $C$ might be due to this.  
\end{remark}

\section{Proof Outline} \label{section:6}


First, we have the following lemma, which is a corruption-tolerant variant of Bernstein inequality for self-normalized vector-valued martingales introduced in \cite{Zhou2020NearlyMO}. 

\begin{lemma}[Bernstein inequality for vector-valued martingales with corruptions]\label{sec6:Bernstein with corruptions(bandit)}
    Let $\{\mcal{G}_t\}_{t = 1}^\infty$ be a filtration, $\{\xb_t, \eta_t\}_{t \ge 1}$ a stochastic process so that $\mathbf{x}_t \in \mbb{R}^d$ is $\mcal{G}_t$-measurable and $\eta_t \in \mbb{R}$ is $\mcal{G}_{t + 1}$-measurable. Fix $R, L, \sigma, \lambda > 0, \ \bmu^* \in \mbb{R}^d$. For $t \ge 1$ let ${y}_t^\stoch = \langle \mathbf{\bmu}^*, {\mathbf{x}}_t \rangle + {\eta}_t$ and suppose that ${\eta}_t, {\mathbf{x}}_t$ also satisfy $$|{\eta}_t| \le R, \mbb{E}[{\eta}_t|\mcal{G}_t] = 0, \mbb{E}[{\eta}_t^2|\mcal{G}_t] \le \sigma^2, \|{\mathbf{x}}_t\|_2 \le L. $$ 

    Suppose $\{y_t\}$ is a sequence such that $\sum_{i = 1}^t |y_i - y_i^\stoch| = C(t)$ for all $t \ge 1$.  Then, for any $0 < \delta < 1$, with probability at least $1 - \delta$ we have $\forall t > 0$, \begin{align*} \|\bmu_t - \bmu^*\|_{\mathbf{Z}_t} \le \beta_t + C(t) + \sqrt{\lambda} \|\bmu^*\|_2, \end{align*} where for $t \ge 1$, $\bmu_t = \mathbf{Z}_t^{-1}\mathbf{b}_t$, $\mathbf{Z}_t = \lambda \mathbf{I} + \sum_{i = 1}^t {\mathbf{x}}_i {\mathbf{x}}_i^\top$, $\mathbf{b}_t = \sum_{i = 1}^t {y}_i{\mathbf{x}}_i$, and $$\beta_t = 8\sigma\sqrt{d\log \frac{d\lambda + tL^2}{d\lambda}\log(4t^2 / \delta)} + 4R\log(4t^2 / \delta). $$
\end{lemma}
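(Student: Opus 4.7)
The plan is to split the estimation error into a stochastic piece (the error that would arise if the rewards were uncorrupted) and a corruption piece (the shift introduced by replacing $y_i^{\stoch}$ with $y_i$), and to bound these separately: the stochastic piece using the (corruption-free) vector-valued Bernstein inequality of \citet{Zhou2020NearlyMO}, and the corruption piece by an elementary deterministic argument exploiting the self-normalizing structure of $\mathbf{Z}_t$.

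Concretely, I would introduce the ``clean'' estimator $\tilde{\bmu}_t = \mathbf{Z}_t^{-1}\sum_{i=1}^t y_i^{\stoch}\mathbf{x}_i$, so that $\bmu_t - \tilde{\bmu}_t = \mathbf{Z}_t^{-1}\sum_{i=1}^t (y_i - y_i^{\stoch})\mathbf{x}_i$. A triangle inequality in the $\mathbf{Z}_t$-norm then reduces the task to controlling $\|\tilde{\bmu}_t - \bmu^*\|_{\mathbf{Z}_t}$ and $\|\bmu_t - \tilde{\bmu}_t\|_{\mathbf{Z}_t}$ separately. Since the process $\{\eta_i, \mathbf{x}_i\}$ satisfies exactly the hypotheses of the uncorrupted Bernstein inequality of \citet{Zhou2020NearlyMO}, that result yields $\|\tilde{\bmu}_t - \bmu^*\|_{\mathbf{Z}_t} \le \beta_t + \sqrt{\lambda}\|\bmu^*\|_2$ with probability at least $1 - \delta$, uniformly in $t$.

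For the corruption term I would use the identity $\|\mathbf{Z}_t^{-1}\mathbf{v}\|_{\mathbf{Z}_t} = \|\mathbf{v}\|_{\mathbf{Z}_t^{-1}}$, followed by the triangle inequality in the $\mathbf{Z}_t^{-1}$-seminorm, reducing everything to $\sum_{i=1}^t |y_i - y_i^{\stoch}|\cdot \|\mathbf{x}_i\|_{\mathbf{Z}_t^{-1}}$. The key observation is that $\mathbf{x}_i\mathbf{x}_i^\top \preceq \mathbf{Z}_i \preceq \mathbf{Z}_t$ for every $i \le t$, so $\|\mathbf{x}_i\|_{\mathbf{Z}_t^{-1}} \le 1$; the regularizer $\lambda\mathbf{I}$ together with the cumulative design structure of $\mathbf{Z}_t$ is precisely what makes this hold. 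Plugging in gives the bound $C(t)$ on the corruption term, and combining with the bound on the stochastic term completes the proof.

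The main obstacle will be this last step: a naive estimate such as $\|\mathbf{x}_i\|_{\mathbf{Z}_t^{-1}} \le L/\sqrt{\lambda}$ would produce an extraneous $L/\sqrt{\lambda}$ factor in front of $C(t)$ and destroy the clean linear-in-$C(t)$ guarantee needed downstream (for instance, it would invalidate the $+\,2^{\ell}\sqrt{d}$ calibration of $\beta_{t,\ell}$ and $\gamma_{t,\ell}$ used by Algorithm \ref{alg:1}). The self-normalized bound $\|\mathbf{x}_i\|_{\mathbf{Z}_t^{-1}} \le 1$ is what restores the coefficient $1$ and matches the normalization of the corruption level used elsewhere in the paper, so this is the non-trivial piece worth flagging.
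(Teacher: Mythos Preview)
Your proposal is correct and follows essentially the same approach as the paper: introduce the uncorrupted estimator $\tilde{\bmu}_t=\mathbf{Z}_t^{-1}\sum_{i\le t}y_i^{\stoch}\mathbf{x}_i$, apply the Bernstein inequality of \citet{Zhou2020NearlyMO} to bound $\|\tilde{\bmu}_t-\bmu^*\|_{\mathbf{Z}_t}$, and control $\|\bmu_t-\tilde{\bmu}_t\|_{\mathbf{Z}_t}$ via the triangle inequality together with $\|\mathbf{x}_i\|_{\mathbf{Z}_t^{-1}}\le 1$. The paper uses exactly this decomposition and the same self-normalized bound $\|\mathbf{x}_i\|_{\mathbf{Z}_t^{-1}}\le 1$ that you correctly flagged as the crucial step.
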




Next, we have that with high probability, all the levels satisfying $\ell \ge \ell^*$ are only influenced by a limited amount of corruptions as mentioned in Section \ref{section:4}. 

\begin{lemma}\label{sec6:sb-corrupt-bandit}
    Let $\text{Corruption}_{t, \ell}$ be defined in \eqref{def:sbcorruption}. Then we have with probability at least $1 - \delta$, for all $\ell \ge \ell^*$, $t \ge 1$:
    $$\text{Corruption}_{t, \ell} \le \oline{C}_\ell = \log(2\ell^2 / \delta) + 3. $$
    
    Let $\cE_{\text{sub}}$ be the event that the above inequality holds. 
\end{lemma}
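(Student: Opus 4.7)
The plan is to view $\text{Corruption}_{t,\ell}$ as the sum of non-negative random variables $c_i \mathds{1}(f(i)=\ell)$ whose expectation is small when $\ell \geq \ell^*$, and then apply a Bernstein-type martingale concentration inequality combined with a union bound over levels.

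\textbf{Setup.} Write $c_i := \frac{1}{R+1}\sup_{\ab \in \cD_i}|r_i(\ab)-r_i'(\ab)|$ so that $\text{Corruption}_{t,\ell} = \sum_{i=1}^t c_i \mathds{1}(f(i) = \ell)$, and observe that $\sum_{i=1}^T c_i \leq C$ by the corruption-level constraint while $c_i \in [0,1]$ by the reward-range normalization (this is the role of the factor $R+1$ in the definition of $C$). The interaction protocol orders the adversary's commitment to $r_i$ (Step 3) \emph{before} the agent samples $f(i)$ (Step 4). Letting $\mathcal{G}_i$ denote the $\sigma$-algebra generated by all randomness up to and including the adversary's decision at round $i$, it follows that $c_i$ is $\mathcal{G}_i$-measurable while $f(i)$ is independent of $\mathcal{G}_i$ with $\Pr(f(i) = \ell) = 2^{-\ell}$ for $\ell > 1$.

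\textbf{Key steps.} First, set $X_i := c_i \mathds{1}(f(i) = \ell)$ and $p_\ell := 2^{-\ell}$, so $\mathbb{E}[X_i \mid \mathcal{G}_i] = p_\ell c_i$. Then $\{X_i - p_\ell c_i\}$ is a martingale-difference sequence with $|X_i - p_\ell c_i| \leq c_i \leq 1$ and conditional variance at most $p_\ell c_i^2 \leq p_\ell c_i$, so the cumulative predictable variance is bounded by $p_\ell C$. Second, apply Freedman's inequality to obtain, with probability at least $1 - \delta_\ell$, \[\text{Corruption}_{T,\ell} \leq p_\ell C + \sqrt{2 p_\ell C \log(1/\delta_\ell)} + \tfrac{1}{3}\log(1/\delta_\ell).\] Third, use the hypothesis $\ell \geq \ell^* = \max\{2, \lceil \log_2 C\rceil\}$ to conclude $p_\ell C = 2^{-\ell} C \leq 1$; plugging this in and applying AM--GM to the square-root term bounds the right-hand side by $\log(1/\delta_\ell) + O(1)$, with the residual constant absorbed into the ``$+3$'' in $\overline{C}_\ell$. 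Fourth, union-bound over levels by choosing $\delta_\ell = \delta/(2\ell^2)$: the total failure probability is $\sum_{\ell \geq 2}\delta/(2\ell^2)\leq\delta$, and $\log(1/\delta_\ell) = \log(2\ell^2/\delta)$, yielding $\text{Corruption}_{T,\ell}\leq \overline{C}_\ell$ simultaneously for every $\ell \geq \ell^*$. Finally, since each $X_i \geq 0$, $\text{Corruption}_{t,\ell}$ is non-decreasing in $t$, so the $t=T$ bound immediately transfers to every $t \leq T$.

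\textbf{Expected obstacle.} The main conceptual care lies in justifying the martingale structure: $f(i)$ must be independent of $c_i$ at the moment it is sampled, which relies squarely on the protocol ordering (adversary commits to $r_i$ \emph{before} the agent randomizes over levels). A secondary but essential point is the per-round bound $c_i \leq 1$; without it, a single round with $c_i = \Theta(C)$ landing in level $\ell^*$ would blow the Bernstein bound up to $\Omega(C \log(1/\delta))$, which is incompatible with the logarithmic target $\overline{C}_\ell$. Once these two foundations are in place, tracking the constants through Freedman's inequality and the level-wise union bound is the only remaining technical chore.
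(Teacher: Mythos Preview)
Your proposal is correct and follows the same route as the paper: the paper simply invokes Lemma~3.3 of \citet{lykouris2018stochastic} as a black box for the per-level concentration bound $\text{Corruption}_{t,\ell}\le \log(1/\delta)+3$ (that lemma is itself proved by exactly the Bernstein/Freedman-type martingale argument you spell out) and then applies the identical level-wise union bound with $\delta_\ell=\delta/(2\ell^2)$. Your write-up merely unwinds the cited lemma; the one subtlety you flag---that the argument needs a per-round bound $c_i\le O(1)$---is equally required (and equally tacit) for the paper's invocation of \citet{lykouris2018stochastic} to apply in this model.
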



We define the following event to further show that our candidate confidence sets with $\ell \ge \ell^*$ are ``robust'' enough, i.e., $\cC_{t, \ell}$ contains $\bmu^*$ with high probability. 

\begin{definition} 
Let $\ell^*$ be defined in \eqref{robustlayer}. 
We introduce the event $\cE_1$ as follows. 


\begin{align} \cE_{1} := \{\forall \ell \ge \ell^*\ \text{and}\ t\ge 1, \|\bmu^* - \bmu_t\|_{\bSigma_t} \le \beta_{t, \ell} \ \text{and}\ \|\bmu^* - \bmu_{t,\ell}\|_{\bSigma_{t, \ell}} \le \gamma_{t, \ell} \}. \label{sec6:event:glsb}
\end{align}
where $\beta_{t, \ell}$, $\gamma_{t, \ell}$ are defined in \eqref{def:beta} and \eqref{def:gamma}.

\end{definition}

Next lemma suggests that the event $\cE_1$ happens with high probability.

\begin{lemma} \label{sec6:confidence}
Let $\cE_{1}$ be defined in \eqref{sec6:event:glsb}. For any $0 < \delta < 1 / 3$, we have $\PP(\cE_{1}) \ge 1 - 3\delta$. 
\end{lemma}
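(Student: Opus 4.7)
The plan is to invoke Lemma \ref{sec6:Bernstein with corruptions(bandit)} twice---once for the global estimator $\bmu_t$ to obtain the first ellipsoid, and once for each sub-sampled estimator $\bmu_{t,\ell}$ with $\ell \ge \ell^*$ to obtain the second---after recasting both as ordinary $\lambda$-regularized ridge regressions with bounded heteroscedastic noise, and then to close with a union bound.

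First I would set $\tilde{\ab}_i = \ab_i / \oline{\sigma}_i$, $\tilde{r}_i = r_i / \oline{\sigma}_i$, $\tilde{r}_i^{\stoch} = r_i'(\ab_i)/\oline{\sigma}_i$ and $\tilde{\eta}_i = \epsilon_i(\ab_i)/\oline{\sigma}_i$, so that $\bmu_t$ becomes precisely the ridge estimator of Lemma \ref{sec6:Bernstein with corruptions(bandit)} built from $\{(\tilde{\ab}_i, \tilde{r}_i)\}$. Because $\oline{\sigma}_i \ge (R+1)/\sqrt{d}$ and $\oline{\sigma}_i \ge \sigma_i$, the hypotheses of Lemma \ref{sec6:Bernstein with corruptions(bandit)} hold with $\sigma_{\mathrm{Bern}}^2 = 1$, $R_{\mathrm{Bern}} = \sqrt{d}$, and $L = A\sqrt{d}/(R+1)$; plugging these in reproduces exactly the first two terms of $\beta_{t,\ell}$, while $\sqrt{\lambda}\|\bmu^*\|_2 \le \sqrt{\lambda}B$ yields the last term. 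The weighted corruption budget is $\sum_{i\le t}|\tilde{r}_i - \tilde{r}_i^{\stoch}| \le (\sqrt{d}/(R+1))\cdot (R+1)C = C\sqrt{d} \le 2^\ell\sqrt{d}$ for every $\ell \ge \ell^*$ by definition of $\ell^*$ in \eqref{robustlayer}, producing the corruption term $2^\ell \sqrt{d}$. This gives the first ellipsoid uniformly in $t$ with probability $\ge 1-\delta$.

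For the second ellipsoid I would apply the same reduction to the subsequence $\{i \le t : f(i) = \ell\}$. Since $f(i)$ in line \ref{line:11} is an exogenous randomization independent of $\epsilon_i$, the process $\{\tilde{\ab}_i \mathds{1}(f(i)=\ell),\,\tilde{\eta}_i \mathds{1}(f(i)=\ell)\}$ remains a martingale difference sequence with respect to the enlarged filtration that includes $\{f(j)\}_{j\le i}$, with the same variance and boundedness constants. On the event $\cE_{\mathrm{sub}}$ of Lemma \ref{sec6:sb-corrupt-bandit}, the weighted corruption experienced at level $\ell$ is bounded by $\text{Corruption}_{t,\ell}\cdot\sqrt{d}\le \oline{C}_\ell \sqrt{d}$. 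Invoking Lemma \ref{sec6:Bernstein with corruptions(bandit)} on this subsequence with confidence $\delta/\ell_{\max}$ and taking a union bound over $\ell \in [\ell_{\max}]$ absorbs the factor $\ell_{\max} \le \lceil \log_2 2T\rceil$ into the logarithm, which accounts for the additional $T$ inside $\log(8t^2 T/\delta)$ in \eqref{def:gamma}.

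Combining the three failure events---the global Bernstein ($\le\delta$), the union over $\ell$ of the sub-sampled Bernstein ($\le \delta$), and Lemma \ref{sec6:sb-corrupt-bandit} ($\le \delta$)---gives a total failure probability of at most $3\delta$, as desired. The main obstacle I expect is verifying the martingale conditions after sub-sampling: one has to argue carefully that, given the protocol ordering in steps \ref{model:step:1}--\ref{model:step:6}, the noise $\epsilon_i(\ab_i)$ is still conditionally mean-zero with variance $\le \sigma_i^2$ in the enlarged filtration containing $\{f(j)\}_{j\le i}$, and that the corruption sequence, once conditioned on $\cE_{\mathrm{sub}}$, may legitimately be treated as the externally chosen sequence $\{y_i\}$ required by Lemma \ref{sec6:Bernstein with corruptions(bandit)}.
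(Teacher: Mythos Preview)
Your proposal is correct and follows essentially the same route as the paper: it too splits $\cE_1$ into the global-estimator event and the sub-sampled-estimator event, invokes the corruption-tolerant Bernstein inequality (Lemma~\ref{sec6:Bernstein with corruptions(bandit)}) once for $\bmu_t$ and once per level $\ell$ on the event $\cE_{\mathrm{sub}}$ of Lemma~\ref{sec6:sb-corrupt-bandit}, and union-bounds to lose at most $3\delta$. Your explicit rescaling $\tilde{\ab}_i = \ab_i/\oline{\sigma}_i$ and your remark on the enlarged filtration are more careful than the paper's presentation, but the argument is the same.
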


For simplicity, we define $\ab_{t, \ell} = \argmax_{\ab \in \cD_t}\max_{\bmu \in \cC_{t, \ell}}\langle \bmu, \ab \rangle$ for each level $\ell$. $\ab_t$ can be seen as an action vector randomly chosen from $\ab_{t, \ell}$, $\ell \in [\ell_{\max}]$. Next two lemmas suggest that under event $\cE_1$, at each round, the gap between the optimal reward and the selected reward can be upper bounded by some bonus terms related to $\ab_{t, \ell}$. 

\begin{lemma} \label{sec6:error:ell}
    On event $\cE_1$, if $f(t) \le \ell^*$, we have $ \langle\ab_t^* -  \ab_t, \bmu^* \rangle \le 2\beta_{t, \ell^*} \|\ab_t\|_{\bSigma_{t}^{-1}} + 2\beta_{t, \ell^*} \|\ab_{t, \ell^*}\|_{\bSigma_t^{-1}}. $ 
\end{lemma}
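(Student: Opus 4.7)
The plan is to combine optimism at the ``safe'' level $\ell^*$---where $\bmu^*$ is guaranteed to belong to $\cC_{t,\ell^*}$ by the event $\cE_1$---with the fact that $\ab_t$ is chosen greedily with respect to the (possibly smaller) set $\cC_{t,f(t)}$. For each $\ell$ let $\tilde{\bmu}_{t,\ell}\in\cC_{t,\ell}$ be such that $\langle\tilde{\bmu}_{t,\ell},\ab_{t,\ell}\rangle=\max_{\ab\in\cD_t}\max_{\bmu\in\cC_{t,\ell}}\langle\bmu,\ab\rangle$; in particular $\ab_t=\ab_{t,f(t)}$ and $\tilde{\bmu}_{t,f(t)}$ is the optimistic parameter selected for action $\ab_t$. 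The starting point is the algebraic identity
\begin{equation*}
\langle\ab_t^*-\ab_t,\bmu^*\rangle
=\underbrace{\bigl[\langle\ab_t^*,\bmu^*\rangle-\langle\ab_t,\tilde{\bmu}_{t,f(t)}\rangle\bigr]}_{T_1}
+\underbrace{\langle\ab_t,\tilde{\bmu}_{t,f(t)}-\bmu^*\rangle}_{T_2},
\end{equation*}
which I will control termwise via Cauchy--Schwarz in the $\bSigma_t$ metric.

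For $T_2$, the crucial observation is that the radius $\beta_{t,\ell}$ in \eqref{def:beta} is nondecreasing in $\ell$, so when $f(t)\le\ell^*$ we have $\|\tilde{\bmu}_{t,f(t)}-\bmu_t\|_{\bSigma_t}\le\beta_{t,f(t)}\le\beta_{t,\ell^*}$, while $\cE_1$ gives $\|\bmu^*-\bmu_t\|_{\bSigma_t}\le\beta_{t,\ell^*}$; a triangle inequality plus Cauchy--Schwarz then yields $T_2\le 2\beta_{t,\ell^*}\|\ab_t\|_{\bSigma_t^{-1}}$. For $T_1$ I invoke optimism at level $\ell^*$ to write $\langle\ab_t^*,\bmu^*\rangle\le\max_{\bmu\in\cC_{t,\ell^*}}\langle\bmu,\ab_t^*\rangle\le\langle\tilde{\bmu}_{t,\ell^*},\ab_{t,\ell^*}\rangle$, and the greedy rule defining $\ab_t$ to write $\langle\tilde{\bmu}_{t,f(t)},\ab_t\rangle\ge\langle\tilde{\bmu}_{t,f(t)},\ab_{t,\ell^*}\rangle$ (since $\tilde{\bmu}_{t,f(t)}\in\cC_{t,f(t)}$ is a feasible choice for the inner maximization and $\ab_t$ maximizes the outer one). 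Subtracting these two bounds gives $T_1\le\langle\tilde{\bmu}_{t,\ell^*}-\tilde{\bmu}_{t,f(t)},\ab_{t,\ell^*}\rangle$; the same monotonicity argument as for $T_2$ bounds both $\|\tilde{\bmu}_{t,\ell^*}-\bmu_t\|_{\bSigma_t}$ and $\|\tilde{\bmu}_{t,f(t)}-\bmu_t\|_{\bSigma_t}$ by $\beta_{t,\ell^*}$, so Cauchy--Schwarz produces $T_1\le 2\beta_{t,\ell^*}\|\ab_{t,\ell^*}\|_{\bSigma_t^{-1}}$. Summing the two terms yields exactly the claimed inequality.

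The main obstacle is that straightforward optimism at level $f(t)$ fails: $\bmu^*$ need not lie in $\cC_{t,f(t)}$ when $f(t)<\ell^*$, so we cannot compare $\langle\ab_t^*,\bmu^*\rangle$ to $\langle\ab_t,\tilde{\bmu}_{t,f(t)}\rangle$ directly. The mix-level device above resolves this by always using $\cC_{t,\ell^*}$ as the optimism anchor while using $\cC_{t,f(t)}$ only to exploit the greedy choice of $\ab_t$, and the monotonicity $\beta_{t,f(t)}\le\beta_{t,\ell^*}$ built into the definition of the first ellipsoid is precisely what glues the two levels together. A secondary subtlety is the cascading rule $\cC_{t,\ell}\leftarrow\cC_{t,\ell+1}$ when $\cC_{t,\ell}'=\varnothing$, but under $\cE_1$ the set $\cC_{t,\ell^*}'$ contains $\bmu^*$ and is therefore nonempty, so for $f(t)\le\ell^*$ the cascade terminates no later than $\ell^*$; hence $\tilde{\bmu}_{t,f(t)}$ still satisfies $\|\tilde{\bmu}_{t,f(t)}-\bmu_t\|_{\bSigma_t}\le\beta_{t,\ell^*}$ and the argument goes through unchanged.
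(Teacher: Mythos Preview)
Your proof is correct and follows essentially the same route as the paper's. The paper argues by a chain of six inequalities that pivots repeatedly through the ellipsoid center $\bmu_t$, whereas you package the same steps into the decomposition $T_1+T_2$ and bound each term via a single triangle inequality plus Cauchy--Schwarz; the underlying ingredients---optimism at level $\ell^*$ (via $\bmu^*\in\cC_{t,\ell^*}$ on $\cE_1$), the greedy rule $\langle\tilde{\bmu}_{t,f(t)},\ab_t\rangle\ge\langle\tilde{\bmu}_{t,f(t)},\ab_{t,\ell^*}\rangle$, the monotonicity $\beta_{t,f(t)}\le\beta_{t,\ell^*}$, and the cascade stopping by level $\ell^*$---are identical.
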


\begin{lemma}\label{sec6:error:ell2} On event $\cE_1$, if $f(t) = \ell > \ell^*$, we have $\langle \ab_t^* - \ab_t, \bmu^* \rangle \le 2 \gamma_{t, \ell} \|\ab_{t}\|_{\bSigma_{t, \ell}^{-1}}. $
\end{lemma}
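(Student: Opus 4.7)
The plan is to invoke the OFU principle relative to the single (ellipsoidal) confidence set $\cC_{t,\ell}$ used to pick the action in round $t$, and then bound the witness error by a standard Cauchy--Schwarz plus triangle-inequality argument in the $\bSigma_{t,\ell}$-norm. The key preliminary observation is that because we are on $\cE_1$ and because $\ell > \ell^\ast \ge \ell^\ast$, Lemma~\ref{sec6:confidence} guarantees both $\|\bmu^\ast - \bmu_t\|_{\bSigma_t}\le \beta_{t,\ell}$ and $\|\bmu^\ast - \bmu_{t,\ell}\|_{\bSigma_{t,\ell}}\le \gamma_{t,\ell}$. Hence $\bmu^\ast \in \cC_{t,\ell}'$, so $\cC_{t,\ell}'$ is nonempty and by the definition in line~8 of Algorithm~\ref{alg:1} we have $\cC_{t,\ell}=\cC_{t,\ell}'$. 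In particular $\bmu^\ast \in \cC_{t,\ell}$. Moreover, $f(t)=\ell$ gives $\ab_t = \ab_{t,\ell}$.

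Next I would apply optimism. By the definition $\ab_{t,\ell}=\argmax_{\ab\in\cD_t}\max_{\bmu\in\cC_{t,\ell}}\langle\bmu,\ab\rangle$ and $\bmu^\ast \in \cC_{t,\ell}$, there exists $\tilde\bmu\in\cC_{t,\ell}$ with
\begin{equation*}
\langle \tilde\bmu,\ab_{t,\ell}\rangle \;=\; \max_{\bmu\in\cC_{t,\ell}}\langle\bmu,\ab_{t,\ell}\rangle \;\ge\; \max_{\bmu\in\cC_{t,\ell}}\langle\bmu,\ab_t^\ast\rangle \;\ge\; \langle\bmu^\ast,\ab_t^\ast\rangle.
\end{equation*}
Using $\ab_t=\ab_{t,\ell}$, this rearranges to the optimistic gap bound $\langle \ab_t^\ast - \ab_t,\bmu^\ast\rangle \le \langle \tilde\bmu - \bmu^\ast,\ab_t\rangle$.

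The remaining step is to bound the right-hand side by $2\gamma_{t,\ell}\|\ab_t\|_{\bSigma_{t,\ell}^{-1}}$. Since both $\tilde\bmu,\bmu^\ast\in\cC_{t,\ell}=\cC_{t,\ell}'$, they both lie in the ellipsoid centered at $\bmu_{t,\ell}$ with radius $\gamma_{t,\ell}$ in the $\bSigma_{t,\ell}$-norm, so by triangle inequality $\|\tilde\bmu - \bmu^\ast\|_{\bSigma_{t,\ell}} \le 2\gamma_{t,\ell}$. A Cauchy--Schwarz step with $\bSigma_{t,\ell}$ and its inverse then gives $\langle\tilde\bmu-\bmu^\ast,\ab_t\rangle \le \|\tilde\bmu-\bmu^\ast\|_{\bSigma_{t,\ell}}\|\ab_t\|_{\bSigma_{t,\ell}^{-1}}\le 2\gamma_{t,\ell}\|\ab_t\|_{\bSigma_{t,\ell}^{-1}}$, which closes the argument.

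There is no real obstacle here; the only subtlety is the bookkeeping that on $\cE_1$ and for $\ell > \ell^\ast$ the branch $\cC_{t,\ell}=\cC_{t,\ell}'$ is actually taken, so that the second ellipsoid in the intersection is available to supply the $\gamma_{t,\ell}$-bound. Note that, in contrast to Lemma~\ref{sec6:error:ell}, only the $\gamma_{t,\ell}$-ellipsoid (and not the global $\beta_{t,\ell}$-ellipsoid) is needed, which is precisely why a single bonus term appears on the right-hand side.
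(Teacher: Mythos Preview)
Your proposal is correct and follows essentially the same optimism-plus-Cauchy--Schwarz argument as the paper's (very terse) proof. You are in fact more careful than the paper on one point: you explicitly verify that on $\cE_1$ with $\ell>\ell^\ast$ the set $\cC_{t,\ell}'$ is nonempty and hence $\cC_{t,\ell}=\cC_{t,\ell}'\subseteq\{\bmu:\|\bmu-\bmu_{t,\ell}\|_{\bSigma_{t,\ell}}\le\gamma_{t,\ell}\}$, which is what makes the $\gamma_{t,\ell}$-bound available; the paper's proof uses this implicitly without spelling it out.
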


Equipped with the above lemmas, we provide the proof sketch of Theorem \ref{thm:regret}. 
\begin{proof}[Proof sketch of Theorem \ref{thm:regret}
]
    Suppose $\cE_1$ occurs. The main idea to bound the regret is to decompose the total rounds $[T]$ into two non-overlapping parts, based on which individual learner is selected at that round. In detail, we have
    \begin{align}
        \regret(T)\notag &= \EE\left[\sum_{t = 1}^T \left(\langle \ab_t^*, \bmu^*\rangle - \langle \ab_t, \bmu^* \rangle \right)\right] \notag
        \\&= \underbrace{\EE\left[\sum_{t = 1}^T \mathds{1}(f(t) \le \ell^*)\left(\langle \ab_t^*, \bmu^*\rangle - \langle \ab_t, \bmu^* \rangle \right)\right]}_{I_1}\notag \\\qquad &+ \sum_{\ell = \ell^* + 1}^{\ell_{\max}}\underbrace{\EE\left[\sum_{t = 1}^T \mathds{1}(f(t)= \ell)\left(\langle \ab_t^*, \bmu^*\rangle - \langle \ab_t, \bmu^* \rangle \right)\right]}_{I_2(\ell)}.  \label{sketch:I1+I2}
    \end{align} 
    Here $I_1$ represents the regret where the the ``low-level'' learner is selected, and the corruption level is beyond the learner level. In this case, by Lemma \ref{sec6:error:ell}, we can directly show that
    \begin{align}
        I_1 \le \EE\Big[\sum_{t = 1}^T \mathds{1}(f(t) \le \ell^*) \min\left\{2, 2\beta_{t, \ell^*}\|\ab_{t, \ell^*}\|_{\bSigma_t^{-1}} + 2\beta_{t, \ell^*}\|\ab_{t}\|_{\bSigma_t^{-1}} \right\}\Big].\label{xx1}
    \end{align}
    We further bound \eqref{xx1}. Let $\cF_t$ be the $\sigma$-algebra generated by $\ab_s, r_{s}, \sigma_s, f(s)$ for $s \le t - 1$. Then by the property of our partition scheme (note that $\PP(f(t) = \ell^*) = 2^{-\ell^*}$), we can show that 
    $\EE\left[\mathds{1}(f(t)\le \ell^*) \|\ab_{t, \ell^*}\|_{\bSigma_t^{-1}} | \cF_{t}\right] \le 2^{\ell^*} \EE\left[\|\ab_{t}\|_{\bSigma_t^{-1}} | \cF_{t}\right]
    $. Therefore, we can further bound $I_1$ by 
    
    
    \begin{align}
        I_1 \le 4\cdot 2^{\ell^*}\EE\underbrace{\left[\sum_{t = 1}^T \min\left\{2, \beta_{T, \ell^*} \|\ab_t\|_{\bSigma_t^{-1}}\right\}\right]}_{I_3}. \label{sketch:I1:4.12}
    \end{align}
    To further bound $I_3$, we split $[T]$ into 2 parts, $\cI_1 = \{t \in [T] | \|\ab_t / \oline{\sigma}_t\|_{\bSigma_t^{-1}} > 1\}, \cI_2 = \{t \in [T] | \|\ab_t / \oline{\sigma}_t\|_{\bSigma_t^{-1}} \le 1\}$. To bound $\cI_1$ part, the intuition is that the cardinality of $\cI_1$ is bounded, and the sum of terms with $t \in \cI_2$ can be bounded using Cauchy-Schwarz inequality. 
    

    \begin{align}
        &\sum_{t \in \cI_1} \min\left\{2, \beta_{T, \ell^*}\|\ab_t\|_{\bSigma_t^{-1}}\right\} \le 2 |\cI_1| \notag 
        \\&\le 2\sum_{t = 1}^T \min\left\{1, \|\ab_t / \oline{\sigma}_t\|_{\bSigma_t^{-1}}^2\right\} \notag
        \\&\le 4d \log \frac{(R + 1)^2\lambda + TA^2}{(R + 1)^2 \lambda},  \label{sketch:I3:4.15}
    \end{align}
    where the first inequality holds since $\min\left\{2, \beta_{T, \ell^*}\|\ab_t\|_{\bSigma_t^{-1}}\right\} \le 2$, the second inequality follows from the definition of $\cI_1$, and the third inequality holds by Lemma \ref{lemma:summation11}. 
To bound $\cI_2$ part, we have


    \begin{align}
        &\sum_{t \in \cI_2} \min\left\{2, \beta_{T, \ell^*}\|\ab_t\|_{\bSigma_t^{-1}}\right\} \notag \\&\le \beta_{T, \ell^*} \sqrt{\sum_{t \in \cI_2} \oline{\sigma}_t^2} \cdot \sqrt{\sum_{t \in \cI_2} \min\left\{1, \|\ab_t / \oline{\sigma}_t\|_{\bSigma_t^{-1}}^2\right\}} \notag
        \\&\le \beta_{T, \ell^*} \sqrt{(R + 1)^2T / d + \sum_{t = 1}^T {\sigma}_t^2}\cdot \tilde{O}(\sqrt{d}),  \label{sketch:I3:4.17}
    \end{align}
    where the first inequality follows from Cauchy-Schwarz inequality, the second inequality follows from the definition of $\oline{\sigma}_t$ and Lemma \ref{lemma:summation11}. 

    Substituting \eqref{sketch:I3:4.15} and \eqref{sketch:I3:4.17} into \eqref{xx1},  we have 
\begin{align}
    I_1 = \tilde{O}\left(C^2d\sqrt{\sum_{t = 1}^T \sigma_t^2} + C^2R\sqrt{dT}\right).\label{yy0}
\end{align}
    Now it remains to bound $I_2(\ell)$. By Lemma \ref{sec6:error:ell2}, we have 
    \begin{align}
       I_2(\ell) &\le 2 \EE\underbrace{\left[\sum_{t = 1}^T \mathds{1}(f(t) = \ell) \min\left\{1,  \gamma_{t, \ell} \|\ab_{t, \ell}\|_{\bSigma_{t, \ell}^{-1}}\right\}\right]}_{I_4} \notag \\&= \tilde{O}\left(R\sqrt{Td} + d\sqrt{\sum_{t = 1}^T \sigma_t^2}\right),\label{yy1}
    \end{align}
    where the second equality can be proved by an analysis similar to that of \eqref{sketch:I3:4.15} and \eqref{sketch:I3:4.17}. Finally, substituting \eqref{yy0} and \eqref{yy1} into \eqref{sketch:I1+I2} completes our proof. 

    
\end{proof}

\section{Model Selection-based Approaches for Unknown Corruption}

Over the past years, there has been works which design adaptive master algorithms that perform nearly as well as the best base algorithm \citep{odalric2011adaptive, agarwal2017corralling, cheung2019learning, locatelli2018adaptivity, foster2019model, pacchiano2020model, foster2021adapting}. This is also known as model selection in bandits. With these approaches, one can design an algorithm that can adapt to the ground-truth model even when it is unknown. In this section, we discuss what regret upper bound can be achieved if we apply these model selection-based approaches to handle the unknown-$C$ case in our problem. 

\subsection{Base Algorithms}

We have shown in Section \ref{section:warmup} that if $C$ is known a simple algorithm can achieve a regret upper bound of $\tilde{O}(CRd\sqrt{T})$. Hence, it is natural to apply a master algorithm to do model selection from $\lceil \log_2T \rceil$ base algorithms, where the $i$-th base algorithm assumes that the corruption level is under $2^i$. 
\subsection{Choosing Master Algorithm}

\cite{odalric2011adaptive} are probably the earliest work to study model-selection in the bandit problem. They essentially used EXP4 as the master algorithm and their master algorithm suffers $\tilde{O}(T^{2/3})$ regret. \cite{agarwal2017corralling} improved and generalized their result with a master algorithm called CORRAL. 
\citet{agarwal2017corralling}, showed that CORRAL can achieve a regret upper bound of $\tilde{O}(\sqrt{MT} + MR_i(T))$, where $R_i(T)$ is the regret bound of the $i$-th algorithm, $M$ is the number of the base algorithms. To achieve this bound, however, we need to set the learning rate $\eta$ (a hyper-parameter in CORRAL) to be $\min\left\{\frac{1}{40 R_i(T) \ln T}, \sqrt{\frac{M}{T}}\right\}$. This indicates that we cannot apply the original CORRAL algorithm here since $R_i(T)$ has an multiplicative dependence on the unknown $C$ if $2^i \ge C$ (if $2^i < C$, $R_i(T)$ has no theoretical gaurantees). 

Fortunately, \cite{pacchiano2020model} resolved this issue when $R_i(T)$ is unknown a priori. They proposed a stochastic version of CORRAL with a regret guarantee under certain assumptions. 

\begin{assumption}[Section 2, \cite{pacchiano2020model}] \label{iidcontext}
    Let $\cA \subseteq \RR^d$ be a set of actions. Let $S$ be the set of all subsets of $\cA$ and let $\cD_S$ be a distribution over $S$. Assume that the decision set at each round is sampled independently from $\cD_S$. 
\end{assumption}

\begin{theorem}[Theorem 3.2, \cite{pacchiano2020model}] \label{thm:corral}
    If the regret of the optimal base is upper bounded by $U_*(T, \delta) = O(c(\delta)T^\alpha)$ for some function $c: \RR \to \RR$ and constant $\alpha \in [\frac{1}{2}, 1)$, the regrets of master algorithms EXP3.P and CORRAL are: 
    
    \vspace{0.5em}
    \centering
    \begin{tabular}{|c|c|c|}
        \hline
        & Known $\alpha$ and $c(\delta)$ & Known $\alpha$,  Unknown $c(\delta)$ \\
        \hline
        EXP3.P & $\tilde{O}(T^{\frac{1}{2 - \alpha}}c(\delta)^\frac{1}{2 - \alpha})$ & $\tilde{O}(T^{\frac{1}{2 - \alpha}} c(\delta))$ \\
        \hline
        CORRAL & $\tilde{O}(T^\alpha c(\delta))$ & $\tilde{O}(T^{\alpha} c(\delta)^\frac{1}{\alpha})$ \\ \hline
    \end{tabular}
\end{theorem}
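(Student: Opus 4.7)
The plan is to view the full algorithm as a master–base system with $M=\lceil\log_2 T\rceil$ base algorithms, and decompose the total regret against the best policy as
$$\regret(T) \;\le\; R_{\mathrm{master}}(T) \;+\; \EE\!\left[R_{i^*}(N_{i^*})\right],$$
where $i^*$ is the ``optimal'' base (the smallest one whose assumed corruption budget covers the true $C$), $N_{i^*}$ is the random number of rounds in which base $i^*$ is actually selected, and $R_{\mathrm{master}}$ is the regret of the master measured against the policy that always plays base $i^*$. Under Assumption~\ref{iidcontext} the contexts $\cD_t$ are i.i.d., so conditional on the selection schedule the rounds routed to base $i^*$ look (in distribution) like an undisturbed run of that base of length $N_{i^*}$, which lets us invoke the base's high-probability regret bound $c(\delta)\,N_{i^*}^\alpha$ without a stability argument.

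For EXP3.P as master, the standard exponential-weights analysis over the $M$ bases with sampling distribution $\{p_j\}$ gives $R_{\mathrm{master}}(T)=\tilde O(\sqrt{MT/p_{i^*}})$. Since $\EE[N_{i^*}]=Tp_{i^*}$, importance weighting yields a base contribution of order $c(\delta)T^\alpha p_{i^*}^{\alpha-1}$ to the master's scale. First, I would balance these two terms by picking $p_{i^*}\asymp (M/(c(\delta)^2 T^{2\alpha-1}))^{1/(2-\alpha)}$, which gives $\tilde O(T^{1/(2-\alpha)}c(\delta)^{1/(2-\alpha)})$ when $c(\delta)$ is known. When $c(\delta)$ is unknown, one must either choose $p_{i^*}$ with $c(\delta)$ replaced by $1$ or use a uniform grid; a straightforward calculation then shows that the $c(\delta)^{1/(2-\alpha)}$ factor degrades to $c(\delta)$, matching the first row of the table.

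For CORRAL the analysis is analogous but leverages the negative stability term of log-barrier OMD: the master regret is $\tilde O(M/\eta + \eta T - \EE[1/(\eta p_{i^*})])$, and a smoothed transition of $p_{i^*}$ ensures that base $i^*$ is actually passed $\Theta(T p_{i^*})$ rounds on which its high-probability bound applies. The crucial cancellation is that the $-\EE[1/(\eta p_{i^*})]$ term absorbs exactly the $1/p_{i^*}^{1-\alpha}$ blow-up coming from applying the base guarantee on a thinned subsequence, after which optimizing $\eta$ gives $\tilde O(T^\alpha c(\delta))$ in the known-$c(\delta)$ case, and a conservative tuning raises the exponent of $c(\delta)$ to $1/\alpha$ in the unknown case. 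I would carry out this bookkeeping separately for the two columns of the table and then take a union bound over $M$ bases to convert the high-probability base bounds into expected regret.

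The main obstacle I expect is the transfer of a base algorithm's regret guarantee to the (random, adaptively chosen) subsequence of rounds on which it is actually played: in general this requires a stability property of the base, but Assumption~\ref{iidcontext} saves us by making the played rounds i.i.d. exchangeable, so the base simply sees a shorter i.i.d. stream. The remaining technicalities — choosing $\eta$ or $\{p_j\}$ without knowledge of $i^*$, handling the negative OMD term carefully in the CORRAL bound, and converting the high-probability form of $U_*(\cdot,\delta)$ into expectation — are routine once the decomposition and the i.i.d.\ subsequence argument are in place.
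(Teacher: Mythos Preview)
This theorem is not proved in the paper at all: it is quoted verbatim as Theorem~3.2 of \cite{pacchiano2020model} and used as a black box in Section~8 to discuss what regret a model-selection master could achieve when combined with the known-$C$ base algorithm of Section~\ref{section:warmup}. There is therefore no ``paper's own proof'' to compare your proposal against.

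Your sketch is a reasonable high-level outline of the argument in \cite{pacchiano2020model}---the master--base regret decomposition, the role of Assumption~\ref{iidcontext} in making the subsampled rounds look like a shorter i.i.d.\ run of the base, the balancing of $\eta$ or the sampling probabilities, and the negative log-barrier term in CORRAL---but none of this belongs in the present paper. If you were asked to supply a proof here, the correct response is simply to cite \cite{pacchiano2020model}; any attempt to reproduce their analysis would be both out of scope and redundant.
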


To achieve a $\sqrt{T}$ regret upper bound, we can use the stochastic CORRAL \citep{pacchiano2020model} as the master algorithm. By Theorem \ref{thm:corral}, the resulting regret bound will be $\tilde{O}(C^2d^2 R \sqrt{T})$. This is actually worse than the regret of our Algorithm \ref{alg:1} by a factor of $d$. More importantly, this regret only holds under Assumption \ref{iidcontext}, which is a very restrictive assumption that significantly downgrades the generality of linear contextual bandits. 

\section{Experiments}

In this section, we conduct experiments and evaluate the performance our algorithms Multi-level weighted OFUL and Robust weighted OFUL, along with the baselines, OFUL \citep{AbbasiYadkori2011ImprovedAF}, Weighted OFUL \citep{Zhou2020NearlyMO} and the greedy algorithm proposed by \cite{pmlr-v130-bogunovic21a} under different corruption levels. We repeat each baseline algorithm for 10 times and plot their regrets w.r.t. number of rounds in Figure \ref{fig:1}.

\subsection{Experimental Setup}

Following \cite{pmlr-v130-bogunovic21a}, we let the adversary always corrupt the first $k$ rounds, and leave the rest $T - k$ rounds intact. According to our definition in \eqref{def:corruption}, our design can simulate the cases where corruption level is $2k$. 

\noindent\textbf{Model parameters. } Recall that corrupted linear contextual bandits defined in Section \ref{section:3}, we consider $B = 1$, $A = 1$ 
$d = 20$ and $R = 0.5$ and fix $\bmu^*$ as $\left(\frac{1}{\sqrt{d}}, \cdots, \frac{1}{\sqrt{d}}\right)^\top$. We set $\sigma_t$ as a random variable which is independently and uniformly chosen from $[0, 0.05]$ in each round $t$. Note that $\langle \ab, \bmu^*\rangle \in [-1, 1]$ always hold for any eligible $\ab$ under our setting of parameters. 

\noindent\textbf{Attack method. } In the first $k$ rounds, the adversary always trick the learner by flipping the value of $\bmu^*$, i.e., $r_t(\ab) = -\langle \ab, \bmu^* \rangle + \epsilon_t(\ab)$ for all $t \in [k]$ and $\ab \in \cD_t$. 

\noindent\textbf{Decision set. } We consider $|\cD_t| = 20$ for all $t \ge 1$. In each of the first $k$ rounds, we generate the 20 actions in $\cD_t$ independently, each having entries drawn i.i.d. from the uniform distribution on $\left[-\frac{1}{\sqrt{d}}, \frac{1}{\sqrt{d}}\right]$. For the following uncorrupted rounds, however, we use a fixed $\cD$ generated in the same way. 

Intuitively, non-robust algorithm will ``learn'' the flipped $\bmu^*$ faster with diversified action vectors. As a result, the learner is likely to select the same nonoptimal action for a huge number of rounds afterwards, making it even more difficult to learn the true $\bmu^*$.  

\noindent\textbf{Noise synthesis. } We generate identical noises $\epsilon_t$ for all $\ab \in \cD_t$ at each round $t$, i.e., $\epsilon_t(\ab) = \epsilon_t$. To generate $\epsilon_t$, we first generate $\epsilon_t'$ subject to $\cN(0, \sigma_t^2)$ and let \[\epsilon_t = \begin{cases}
-R, & \epsilon_t' < -R \\
R, & \epsilon_t' > R \\
\epsilon_t', & \text{otherwise}
\end{cases}.\] 

\subsection{Results and Discussion}

\begin{figure}[ht!]
\begin{center}
\subfigure[$C = 0$]{\includegraphics[width=0.32\textwidth]{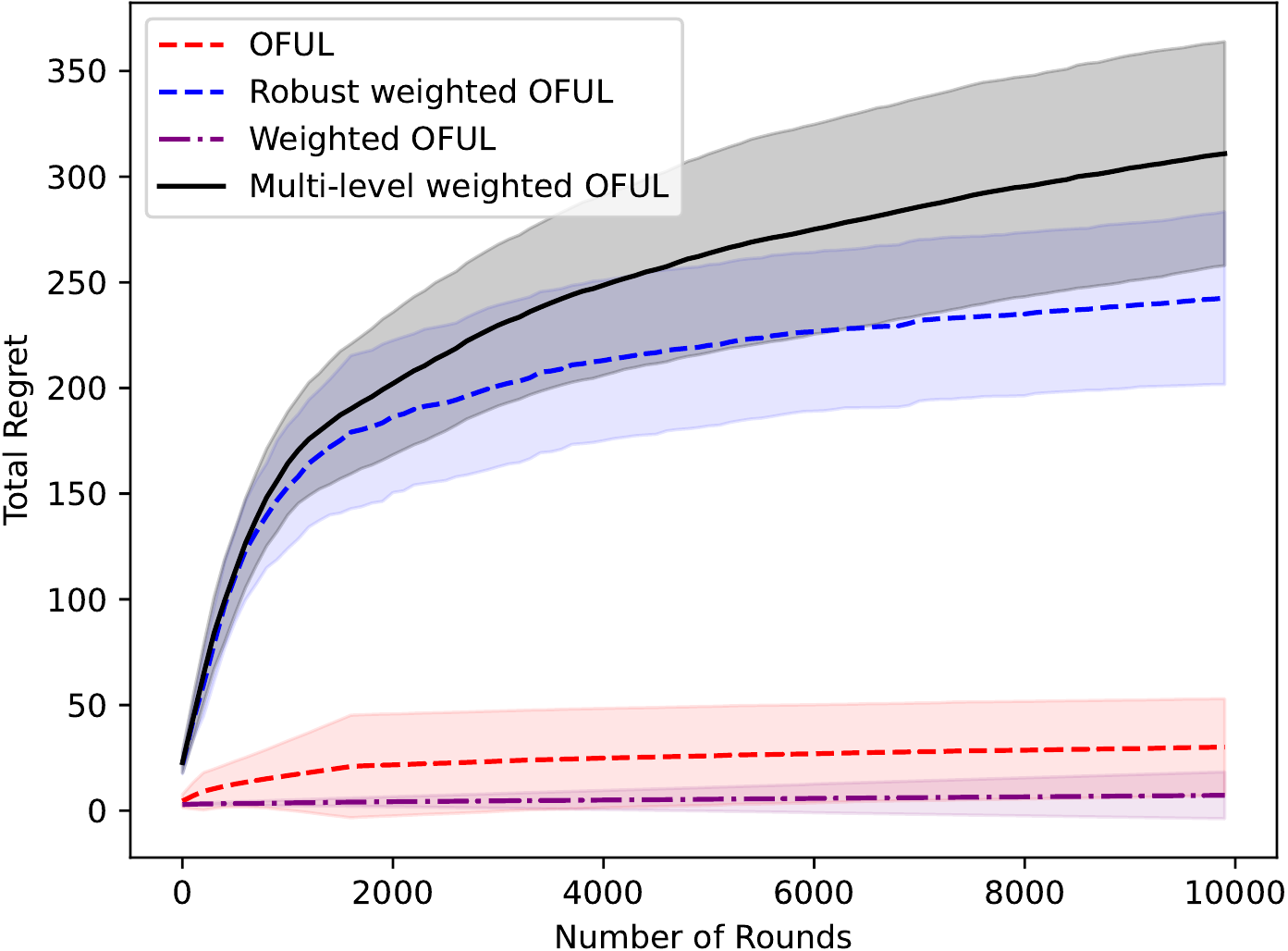}
		\label{fig:a}}
\subfigure[$C = 60$]{\includegraphics[width=0.32\textwidth]{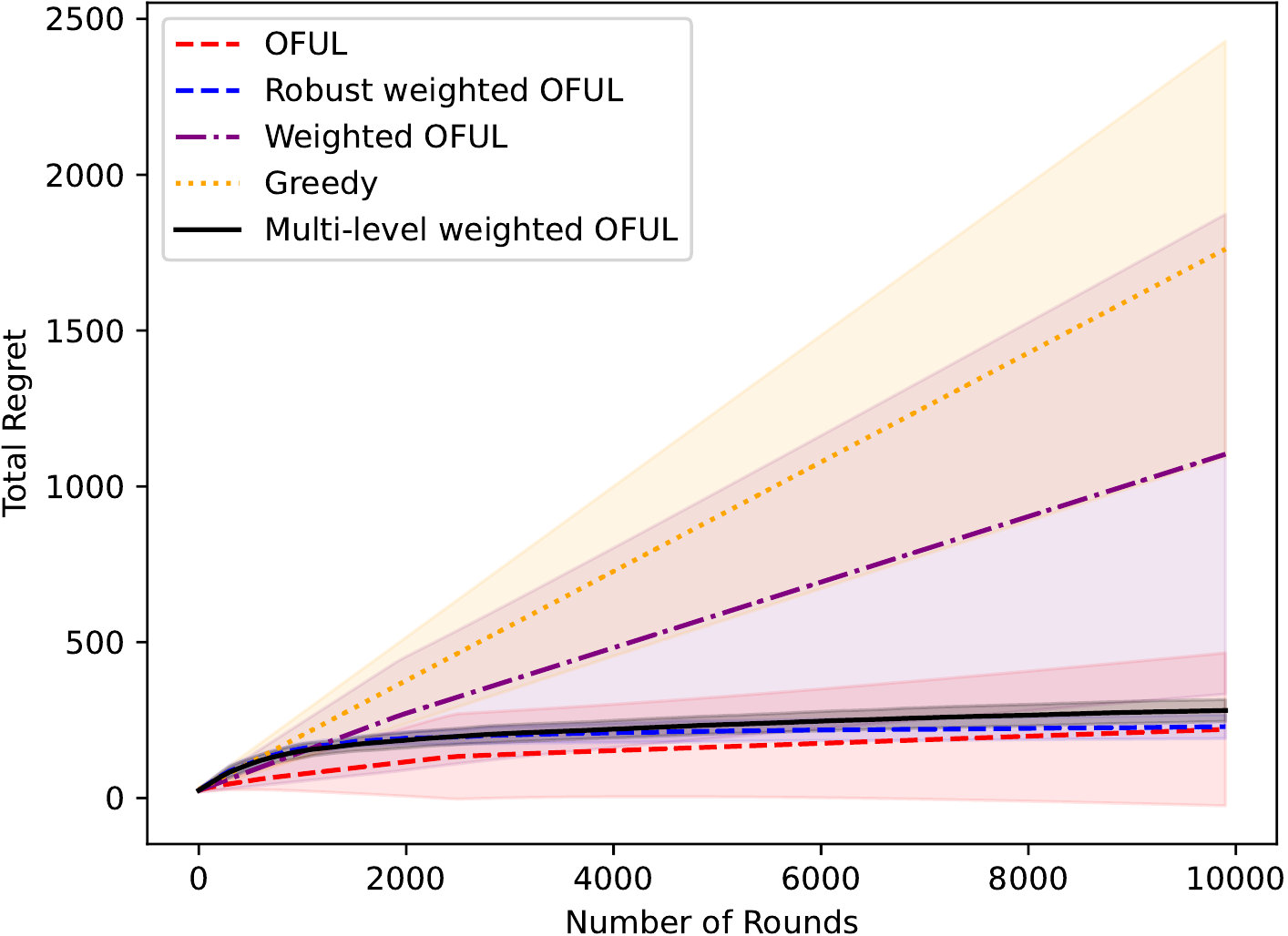}
		\label{fig:b}}
\subfigure[$C = 180$]{\includegraphics[width=0.32\textwidth]{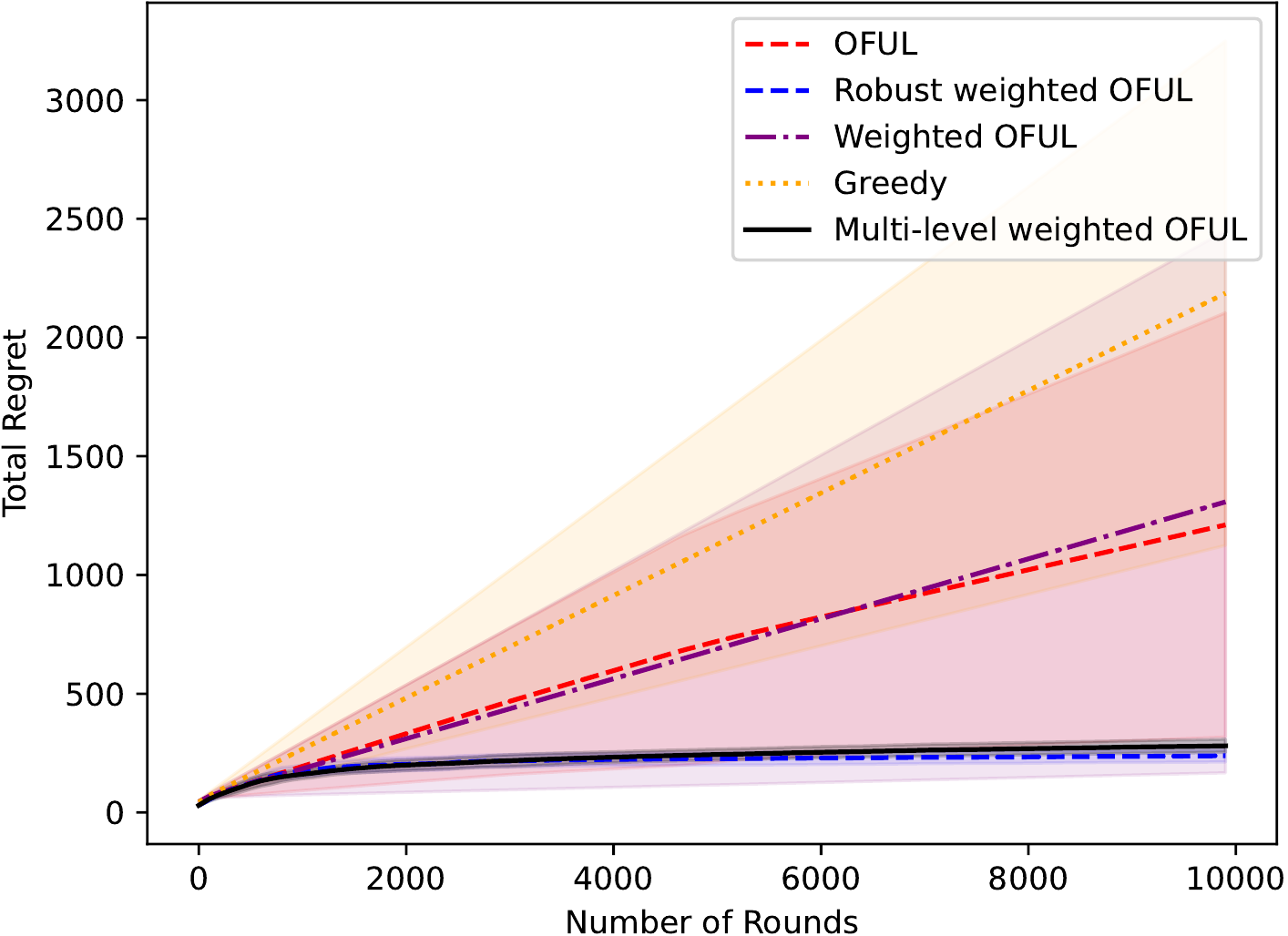}
		\label{fig:c}}
\subfigure[$C = 300$]{\includegraphics[width=0.32\textwidth]{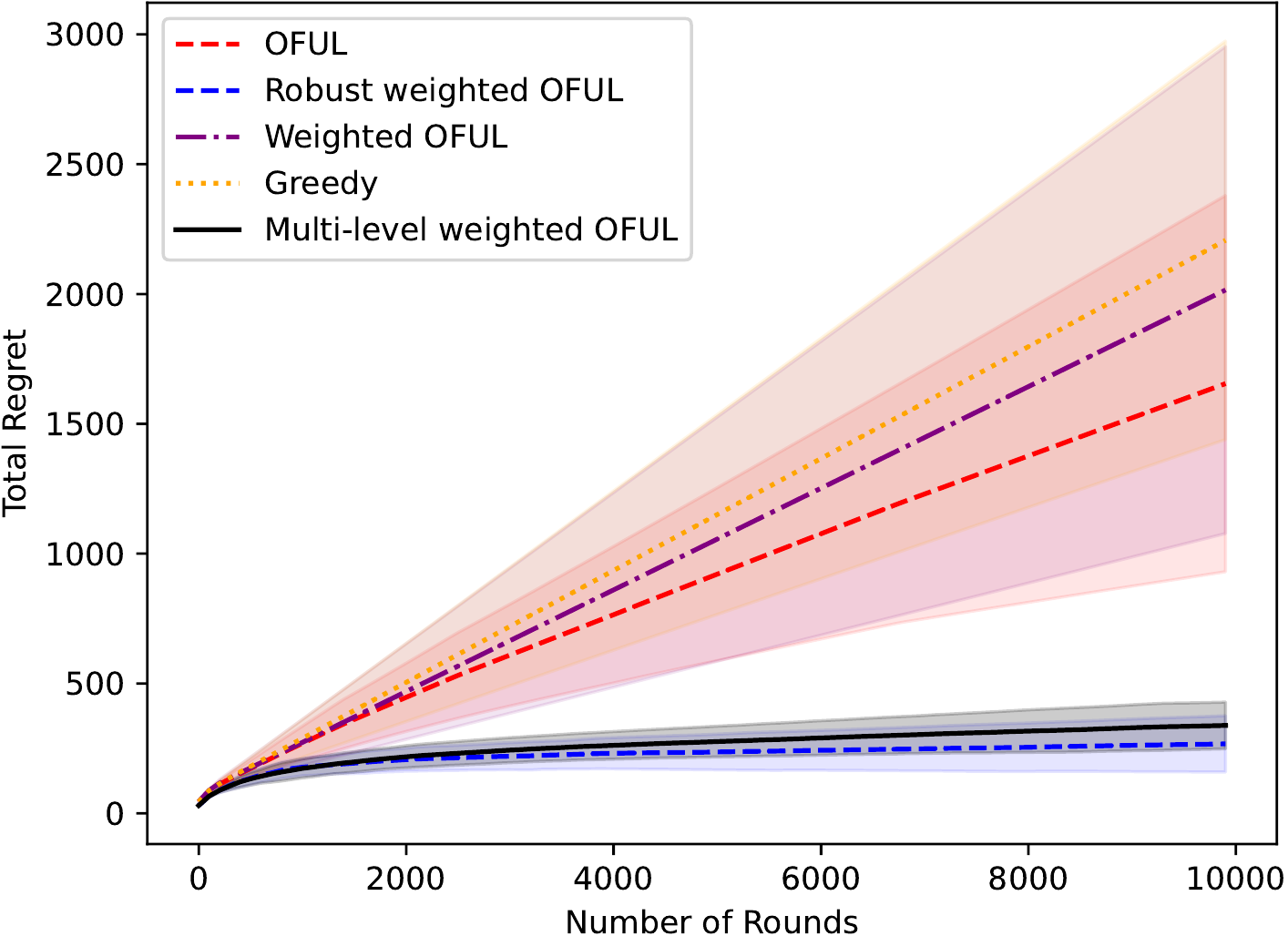}
		\label{fig:d}}
\subfigure[$C = 600$]{\includegraphics[width=0.32\textwidth]{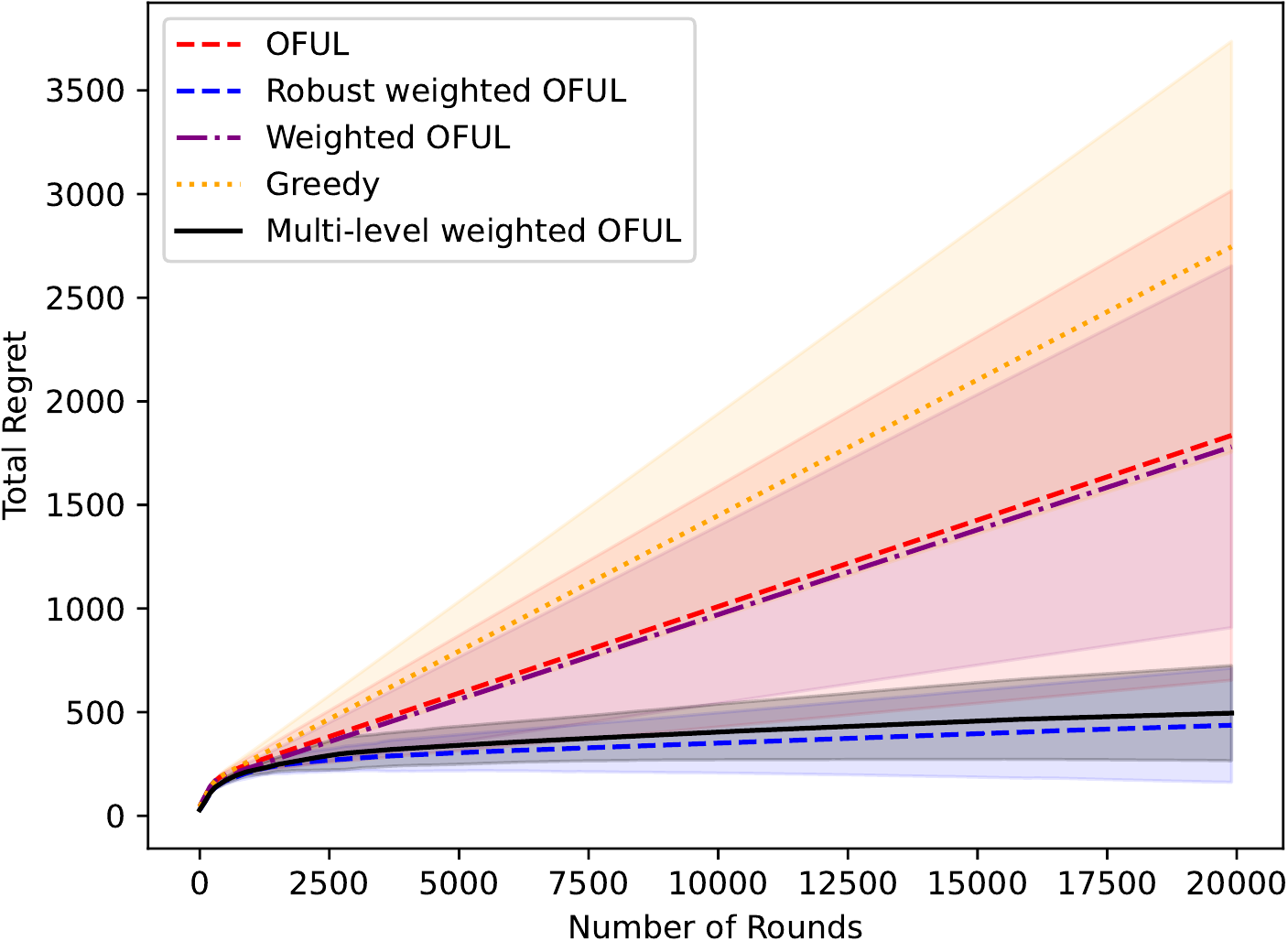}
		\label{fig:e}}
\subfigure[$C = 900$]{\includegraphics[width=0.32\textwidth]{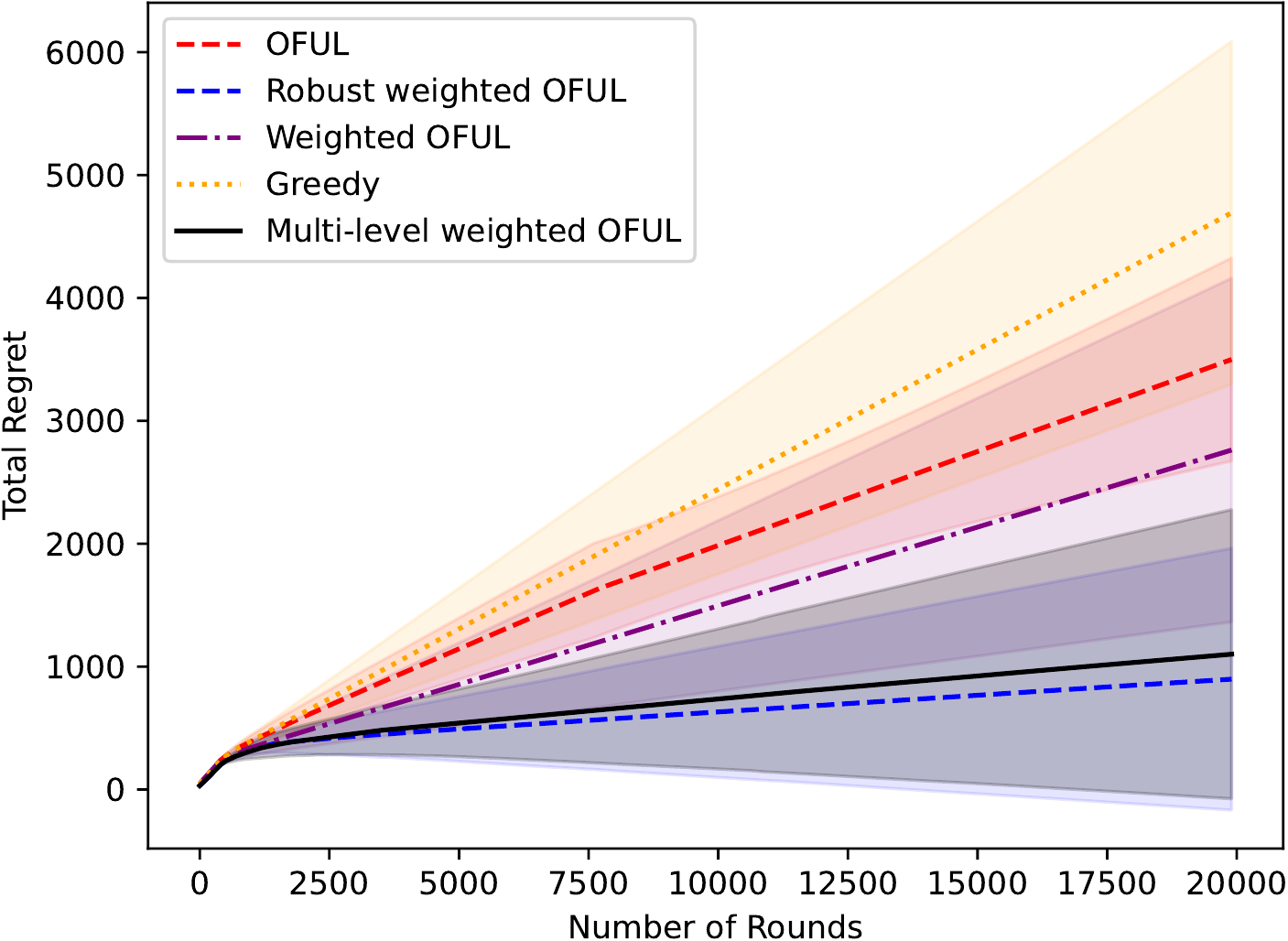}
		\label{fig:f}}
\caption{Regret plot against number of rounds under corruption level from 0 to 900 averaged in 10 trials. \label{fig:1}}
\end{center}
\end{figure}


We plot the regret with respect to the number of rounds in Figure \ref{fig:1}. The results are averaged over 10 trials. In the setting where $C = 0$ (Figure \ref{fig:a}), we only plot the regret of OFUL, Weighted OFUL and Multi-level weighted OFUL, robust weighted OFUL and do not plot the regret of the greedy algorithm since its regret is much worse than the other four algorithms. 

We have the following observations from Figure \ref{fig:1}. For the corruption-free case $C = 0$ (Figure~\ref{fig:a}), our proposed Multi-level weighted OFUL behaves worse than Weighted OFUL and OFUL, which is not surprising since Multi-level weighted OFUL has additional algorithm design to deal with the corruption and it may pay additional price in regret in the absence of corruption. Weighted OFUL outperforms OFUL remarkably since it takes advantage of the information concerning the variance of noise. For the corruption case with $C \ge 180$ (Figure \ref{fig:c} to \ref{fig:f}), our Multi-level weighted OFUL outperforms other baseline algorithms by a large margin and suffers from minor additional regret compared with Robust weighted OFUL, which suggests that it can deal with the corruption successfully. 

\section{Conclusion and Future Work}

In this paper, we have considered the linear contextual bandit problem in the presence of adversarial corruptions. We propose a Multi-level weighted OFUL algorithm, which is provably robust to the adversarial attacks. 

We leave it as an open question that whether the multiplicative dependence on $C^2$ in the regret upper bounds can be removed without making additional assumptions in our setting. 


\appendix

\section{Proofs from Section \ref{section:warmup}} \label{proof:warmup}

\subsection{Proof of Lemma \ref{lemma:oful1}}

This lemma can be proved by a direct application of Lemma \ref{Bernstein with corruptions(bandit)}. 

\begin{proof}
    Applying Lemma \ref{Bernstein with corruptions(bandit)}, we have \begin{align*}
        \|\bmu_t - \bmu^*\|_{\mathbf{\Sigma}_t} &\le 8\sqrt{d\log \frac{(R + 1)^2\lambda + tA^2}{(R + 1)^2\lambda}\log(4t^2 / \delta)} + 4\sqrt{d}\log(4t^2 / \delta) + C{\sqrt{d}} + \sqrt{\lambda} \|\bmu^*\|_2 \\&\le 
        8\sqrt{d\log \frac{(R + 1)^2\lambda + tA^2}{(R + 1)^2\lambda}\log(4t^2 / \delta)} + 4\sqrt{d}\log(4t^2 / \delta) + C{\sqrt{d}} + \sqrt{\lambda} B
    \end{align*} for all $t \ge 1$
    with probability at least $1 - \delta$. 
\end{proof}

\subsection{Proof of Theoreom \ref{thm:alg:1}}

\begin{proof}

Suppose $\bmu^* \in \cC_t$ for all $t \ge 1$. Then we can bound the total regret as follows: 
\begin{align} 
    \regret(T) &= \EE\left[\sum_{t = 1}^T \left(\langle \ab_t^*, \bmu^* \rangle - \langle \ab_t, \bmu^* \rangle \right)\right] \notag
    \\&\le \EE\left[\sum_{t = 1}^T \min\left(2, \langle \ab_t, \bmu_t \rangle - \langle \ab_t, \bmu^* \rangle\right)\right] \notag
    \\&\le \EE\left[\sum_{t = 1}^T \min\left(2, \|\ab_t\|_{\Sigma_t^{-1}} \alpha_t\right)\right] \notag
    \\&\le  \EE\underbrace{\left[\sum_{t = 1}^T \min \left(2, \alpha_T\|\ab_t\|_{\Sigma_t^{-1}}\right)\right]}_{I} \label{eq:B.1}
\end{align}
where the first inequality holds since $\bmu^* \in \cC_t$ and $\ab_t = \argmax_{\ab \in \cD_t} \max_{\bmu \in \cC_t} \langle \ab, \bmu \rangle$, the second inequality holds due to the definition of $\cC_t$ in \eqref{eq:def:ct} and the third inequality follows from the monotonicity of $\{\alpha_t\}$. 

We split $[T]$ into two parts to bound $I$. 

Let $\cI_1 = \{t \in [T] | \|\ab_t / \oline{\sigma}_t\|_{\bSigma_t^{-1}} > 1\}, \cI_2 = \{t \in [T] | \|\ab_t / \oline{\sigma}_t\|_{\bSigma_t^{-1}} \le 1\}. $

We calculate \begin{align} 
|\cI_1| &\le \sum_{t = 1}^T \min\left(1, \|\ab_t / \oline{\sigma}_t\|_{\bSigma_{t}^{-1}}^2\right) \notag
\\&\le 2d \log \frac{(R + 1)^2\lambda + TA^2}{(R + 1)^2 \lambda} \label{eq:reg:part1}
\end{align} where the second inequality holds due to Lemma \ref{lemma:summation11}. 

\begin{align} 
    \sum_{t \in \cI_2} \min\left(2, \alpha_T \|\ab_t\|_{\bSigma_t^{-1}}\right) &\le \alpha_T \sum_{t \in \cI_2} \|\ab_t\|_{\bSigma_t^{-1}} \notag
    \\&= \alpha_T \sum_{t \in \cI_2} \oline{\sigma}_t \|\ab_t / \oline{\sigma}_t\|_{\bSigma_t^{-1}} \notag
    \\&\le \alpha_T \sqrt{\sum_{t \in \cI_2} \oline{\sigma}_t^2} \cdot \sqrt{\sum_{t \in \cI_2} \|\ab_t / \oline{\sigma}_t\|_{\bSigma_t^{-1}}^2} \notag
    \\&\le \alpha_T \left(\sqrt{\sum_{t = 1}^T \sigma_t^2} + \sqrt{R^2 T / d}\right) \cdot \sqrt{2d \log \frac{(R + 1)^2\lambda + TA^2}{(R + 1)^2 \lambda}} \label{eq:reg:part2}
\end{align} where the second inequality follows from Cauchy Schwarz inequality, the third inequality holds due to the definition of $\oline{\sigma}_t$ and Lemma \ref{lemma:summation11}. 

We can further complete the proof by substituting \eqref{eq:reg:part1} and \eqref{eq:reg:part2} into \eqref{eq:B.1}. 

\end{proof}

\section{Proofs of Results in Section \ref{section:4}} \label{section:7}

\subsection{Proof of Theorem \ref{thm:regret}}

We first prove the following lemma which is a corruption-tolerant variant of Bernstein inequality for self-normalized vector-valued martingales introduced in \cite{Zhou2020NearlyMO}. 

\begin{lemma}[Restatement of Lemma \ref{sec6:Bernstein with corruptions(bandit)}]\label{Bernstein with corruptions(bandit)}
    Let $\{\mcal{G}_t\}_{t = 1}^\infty$ be a filtration, $\{\xb_t, \eta_t\}_{t \ge 1}$ a stochastic process so that $\mathbf{x}_t \in \mbb{R}^d$ is $\mcal{G}_t$-measurable and $\eta_t \in \mbb{R}$ is $\mcal{G}_{t + 1}$-measurable. Fix $R, L, \sigma, \lambda > 0, \ \bmu^* \in \mbb{R}^d$. For $t \ge 1$ let ${y}_t^\stoch = \langle \mathbf{\bmu}^*, {\mathbf{x}}_t \rangle + {\eta}_t$ and suppose that ${\eta}_t, {\mathbf{x}}_t$ also satisfy $$|{\eta}_t| \le R, \mbb{E}[{\eta}_t|\mcal{G}_t] = 0, \mbb{E}[{\eta}_t^2|\mcal{G}_t] \le \sigma^2, \|{\mathbf{x}}_t\|_2 \le L. $$ 

    Suppose $\{y_t\}$ is a sequence such that $\sum_{i = 1}^t |y_i - y_i^\stoch| = C(t)$ for all $t \ge 1$.  Then, for any $0 < \delta < 1$, with probability at least $1 - \delta$ we have $\forall t > 0$, \begin{align*} \|\bmu_t - \bmu^*\|_{\mathbf{Z}_t} \le \beta_t + C(t) + \sqrt{\lambda} \|\bmu^*\|_2, \end{align*} where for $t \ge 1$, $\bmu_t = \mathbf{Z}_t^{-1}\mathbf{b}_t$, $\mathbf{Z}_t = \lambda \mathbf{I} + \sum_{i = 1}^t {\mathbf{x}}_i {\mathbf{x}}_i^\top$, $\mathbf{b}_t = \sum_{i = 1}^t {y}_i{\mathbf{x}}_i$, and $$\beta_t = 8\sigma\sqrt{d\log \frac{d\lambda + tL^2}{d\lambda}\log(4t^2 / \delta)} + 4R\log(4t^2 / \delta). $$
\end{lemma}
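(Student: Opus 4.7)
The plan is to reduce to the corruption-free version of the self-normalized Bernstein inequality via a \emph{ghost-estimator} decoupling argument: separate the total error into a pure concentration piece (involving only the stochastic rewards) and a purely algebraic corruption piece. Concretely, I would introduce the hypothetical ridge estimator that \emph{would} have been obtained had the uncorrupted stream $\{y_i^\stoch\}$ been fed to the algorithm, namely $\tilde{\bmu}_t := \mathbf{Z}_t^{-1}\sum_{i=1}^t y_i^\stoch\xb_i$. Because $\mathbf{Z}_t$ is built from the $\xb_i$'s only (and not from the $y_i$'s), this ghost estimator is a legitimate ridge estimator on a clean stochastic sequence, so existing martingale tools apply without modification.

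The first step would be to establish the clean bound $\|\tilde{\bmu}_t-\bmu^*\|_{\mathbf{Z}_t}\le \beta_t+\sqrt{\lambda}\|\bmu^*\|_2$ uniformly in $t$ with probability at least $1-\delta$. Expanding $\tilde{\bmu}_t - \bmu^* = \mathbf{Z}_t^{-1}\sum_i(\langle\bmu^*,\xb_i\rangle+\eta_i)\xb_i - \bmu^*$ and using $\sum_i\xb_i\xb_i^\top=\mathbf{Z}_t-\lambda I$ yields the identity $\tilde{\bmu}_t-\bmu^* = -\lambda\mathbf{Z}_t^{-1}\bmu^* + \mathbf{Z}_t^{-1}\sum_i\eta_i\xb_i$. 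Taking the $\mathbf{Z}_t$-norm and applying the triangle inequality splits the bound into a bias term, controlled by $\sqrt{\lambda}\|\bmu^*\|_2$, and a noise term $\|\sum_i\eta_i\xb_i\|_{\mathbf{Z}_t^{-1}}$ that is exactly the self-normalized quantity controlled by the (uncorrupted) Bernstein inequality for vector-valued martingales from Zhou (2021), giving $\beta_t$ uniformly over $t$ by the union bound already internal to that lemma.

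Next I would control the corruption-induced perturbation $\bmu_t-\tilde{\bmu}_t = \mathbf{Z}_t^{-1}\sum_i(y_i-y_i^\stoch)\xb_i$ deterministically. Using $\|\mathbf{Z}_t^{-1}\mathbf{w}\|_{\mathbf{Z}_t}=\|\mathbf{w}\|_{\mathbf{Z}_t^{-1}}$ and the triangle inequality, $\|\bmu_t-\tilde{\bmu}_t\|_{\mathbf{Z}_t}\le \sum_i|y_i-y_i^\stoch|\cdot\|\xb_i\|_{\mathbf{Z}_t^{-1}}$. The single key linear-algebraic fact I would invoke here is that $\|\xb_i\|_{\mathbf{Z}_t^{-1}}\le 1$ for every $i\le t$: since $\mathbf{Z}_t\succeq \lambda I+\xb_i\xb_i^\top$, one gets $\xb_i^\top\mathbf{Z}_t^{-1}\xb_i\le \|\xb_i\|^2/(\lambda+\|\xb_i\|^2)\le 1$. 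This immediately yields $\|\bmu_t-\tilde{\bmu}_t\|_{\mathbf{Z}_t}\le C(t)$ by the definition of the corruption budget.

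A final triangle inequality combining the two pieces gives the claimed bound $\|\bmu_t-\bmu^*\|_{\mathbf{Z}_t}\le \beta_t+C(t)+\sqrt{\lambda}\|\bmu^*\|_2$. I do not expect a genuine obstacle: the corruption enters linearly through $\mathbf{b}_t$ and is \emph{additive} in the estimator identity, so it decouples cleanly from the martingale concentration. The one point to be careful about is ensuring the probability bound is uniform in $t$, which is inherited for free from the uncorrupted lemma since the corruption-handling part is deterministic.
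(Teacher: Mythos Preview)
Your proposal is correct and mirrors the paper's proof essentially step for step: the paper defines the same ghost estimator $\bmu_t^{\stoch}=\Zb_t^{-1}\sum_i y_i^{\stoch}\xb_i$, invokes the uncorrupted Bernstein inequality (their Lemma~\ref{Bernstein(bandit)}) for the clean piece, bounds $\|\bmu_t-\bmu_t^{\stoch}\|_{\Zb_t}\le\sum_i|y_i-y_i^{\stoch}|\|\xb_i\|_{\Zb_t^{-1}}\le C(t)$ using $\|\xb_i\|_{\Zb_t^{-1}}\le 1$, and finishes with the triangle inequality. Your write-up is slightly more explicit about the bias--noise decomposition and the justification of $\|\xb_i\|_{\Zb_t^{-1}}\le 1$, but the argument is the same.
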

\begin{proof}
    See Appendix \ref{proof:6.1}. 
\end{proof}



Then we prove that with high probability, all the level $\ell \ge \ell^*$ only influenced by limited amount of corruptions as mentioned in Section \ref{section:4}. 

\begin{lemma}[Restatement of Lemma \ref{sec6:sb-corrupt-bandit}] \label{sb-corrupt-bandit}
    Let $\text{Corruption}_{t, \ell}$ be defined in \eqref{def:sbcorruption}. Then we have with probability at least $1 - \delta$, for all $\ell \ge \ell^*$, $t \ge 1$:
    $$\text{Corruption}_{t, \ell} \le \oline{C}_\ell = \log(2\ell^2 / \delta) + 3. $$
    
    We denote by $\cE_{\text{sub}}$ the event that the above inequality holds. 
\end{lemma}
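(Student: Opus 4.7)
The plan is to view $\text{Corruption}_{t,\ell}$ as a weighted sum of conditionally independent Bernoulli indicators, apply a Freedman-type concentration inequality uniformly in $t$, and then union bound over the relevant levels $\ell \ge \ell^*$.

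Concretely, let $c_i := \sup_{\ab \in \cD_i}|r_i(\ab) - r_i'(\ab)|$ and let $\cF_i'$ denote the $\sigma$-algebra generated by all randomness up to and including the adversary's choice of $r_i$, i.e.\ just before the agent draws $f(i)$. By the protocol in Section~\ref{section:3}, $c_i$ is $\cF_i'$-measurable while $\mathds{1}(f(i) = \ell)$ is $\mathrm{Bernoulli}(p_\ell)$ with $p_\ell = 2^{-\ell}$, drawn independently of $\cF_i'$. Hence $X_i := \mathds{1}(f(i)=\ell)\, c_i/(R+1)$ satisfies $\EE[X_i \mid \cF_i'] = p_\ell\, c_i/(R+1)$, and summing and using \eqref{def:corruption},
\begin{align*}
\sum_{i=1}^t \EE[X_i \mid \cF_i'] \;=\; p_\ell \sum_{i=1}^t \frac{c_i}{R+1} \;\le\; p_\ell C \;\le\; 1,
\end{align*}
where the last inequality uses $\ell \ge \ell^* \ge \lceil \log_2 C\rceil$ so that $p_\ell \le 1/C$. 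I would then apply Freedman's inequality to the martingale difference sequence $X_i - \EE[X_i \mid \cF_i']$, with conditional variance proxy $\le p_\ell (c_i/(R+1))^2$ and per-round jump $\le c_i/(R+1)$, to obtain a tail bound of the form $\text{Corruption}_{t,\ell} \le 1 + O\!\big(\sqrt{\log(1/\delta_\ell)} + \log(1/\delta_\ell)\big)$ uniformly in $t$ with probability at least $1-\delta_\ell$. Choosing $\delta_\ell = \delta/(2\ell^2)$ and summing $\sum_{\ell \ge 1} 1/\ell^2 < 2$, a union bound over $\ell \ge \ell^*$ then produces $\text{Corruption}_{t,\ell} \le \log(2\ell^2/\delta) + 3 = \oline C_\ell$ simultaneously for all $t \ge 1$ and all $\ell \ge \ell^*$, which is exactly the claim.

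The main obstacle is controlling the per-round jump $c_i/(R+1)$ and the sum $\sum_i p_\ell (c_i/(R+1))^2$ that appear in Freedman's inequality: the only global fact supplied by \eqref{def:corruption} is $\sum_i c_i/(R+1) \le C$ together with the trivial pointwise bound $c_i/(R+1) \le C$, and plugging these directly into Freedman leaves a residual deviation of order $\sqrt{C\log(1/\delta)} + C\log(1/\delta)$, which destroys the desired $O(\log(\ell^2/\delta))$ scaling. I would resolve this by exploiting that $|r_i'(\ab)| \le R+1$ (from $|\langle \ab,\bmu^*\rangle|\le 1$ and $|\epsilon_i(\ab)|\le R$), so that any single-round corruption with $c_i > 2(R+1)$ is equivalent, up to a constant factor in the budget, to clipping $r_i(\ab)$ back into $[-R-1,R+1]$; one may therefore assume without loss of generality that $c_i/(R+1) \le 2$. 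With this uniform per-round bound the variance sum is at most $2 p_\ell C \le 2$ and the maximum jump is at most $2$, and Freedman's inequality closes as above. If this clipping reduction is not available, the fallback is a peeling argument over dyadic buckets $T_k = \{i: c_i/(R+1) \in [2^{k-1},2^k)\}$, using $|T_k| \le 2C/2^k$ and a multiplicative Chernoff bound on $|\{i\in T_k : f(i)=\ell\}|$ per bucket, which is more delicate but avoids any per-round assumption.
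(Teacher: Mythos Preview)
Your high-level route matches the paper's exactly: for each fixed $\ell\ge\ell^*$ show $\text{Corruption}_{t,\ell}\le \log(1/\delta_\ell)+3$ uniformly in $t$, then union bound with $\delta_\ell=\delta/(2\ell^2)$. The paper simply invokes Lemma~3.3 of \cite{lykouris2018stochastic} for the per-level statement and does the same union bound; you are essentially re-deriving that lemma via Freedman rather than citing it.

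Two comments on the technical content. First, you are right that the whole argument hinges on the normalized per-round corruption $c_i/(R+1)$ being $O(1)$: this is exactly what Lykouris et al.\ have for free because their rewards live in $[0,1]$, and it is the implicit assumption behind the paper's direct appeal to their lemma. Your clipping observation (that $|r_i'(\ab)|\le R+1$, so one may truncate $r_i(\ab)$ to $[-(R{+}1),R{+}1]$ at no loss and thereby force $c_i/(R+1)\le 2$) is precisely the right way to justify this, and it is what makes both proofs go through. Second, your peeling fallback does \emph{not} rescue the lemma without a per-round bound: if the adversary concentrates the entire budget in a single round, $c_1/(R+1)=C$, then only one bucket is nonempty, containing a single element, and $\PP(f(1)=\ell^*)=2^{-\ell^*}$ can be as large as $\Theta(1/C)$; on that event $\text{Corruption}_{t,\ell^*}=C$, which for moderate $\delta$ exceeds $\log(2(\ell^*)^2/\delta)+3$. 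No bucketed Chernoff bound can push the failure probability below $2^{-\ell^*}$ here, so the clipping reduction (or an equivalent boundedness assumption) is genuinely needed, not optional.
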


\begin{proof}
    The proof of this lemma is based on Lemma \ref{sb-corrupt-bandit:helper} introduced by \cite{lykouris2018stochastic}; for details see Appendix \ref{proof:6.2}. 
\end{proof}

We define the following event to further show that our candidate confidence sets with $\ell \ge \ell^*$ are ``robust'' enough, i.e. $\cC_{t, \ell}$ contains $\bmu^*$ with high probability. 

\begin{definition} 
Let $\ell^*$ be defined in \eqref{robustlayer}. 
We introduce the event $\cE_1$ as follows. 

\begin{align} \cE_{1} := \left\{\forall \ell \ge \ell^*\ \text{and}\ t\ge 1, \|\bmu^* - \bmu_t\|_{\bSigma_t} \le \beta_{t, \ell} \ \text{and}\ \|\bmu^* - \bmu_{t,\ell}\|_{\bSigma_{t, \ell}} \le \gamma_{t, \ell} \right\}. \label{event:glsb}
\end{align}
Recall that \begin{align}
    \beta_{t, \ell} = 8\sqrt{d\log \frac{(R + 1)^2\lambda + tA^2}{(R + 1)^2\lambda}\log(4t^2 / \delta)} + 4\sqrt{d}\log(4t^2 / \delta) + 2^\ell{\sqrt{d}} + \sqrt{\lambda}B, \\
    \gamma_{t, \ell} = 8\sqrt{d\log \frac{(R + 1)^2\lambda + tA^2}{(R + 1)^2\lambda}\log(8t^2 T / \delta)} + 4\sqrt{d}\log(8t^2 T / \delta) + \oline{C}_\ell {\sqrt{d}} + \sqrt{\lambda}B. 
\end{align}

\end{definition}

\begin{lemma}[Restatement of Lemma \ref{sec6:confidence}] \label{confidence}
Let $\cE_{1}$ be defined in \eqref{event:glsb}. For any $0 < \delta < 1$, we have $\PP(\cE_{1}) \ge 1 - 3\delta$. 
\end{lemma}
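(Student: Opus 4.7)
The plan is to obtain $\cE_1$ by union-bounding three events, each holding with probability at least $1-\delta$: the sub-sample corruption event $\cE_{\text{sub}}$ from Lemma~\ref{sb-corrupt-bandit}, a concentration event for the global estimator $\bmu_t$, and a simultaneous concentration event for the sub-sampled estimators $\{\bmu_{t,\ell}\}_{\ell \ge \ell^*}$. All three concentration statements will be produced by instantiating the corruption-tolerant Bernstein inequality (Lemma~\ref{Bernstein with corruptions(bandit)}) on the variance-weighted regression variables.

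For the global confidence ellipsoid, I would cast the weighted ridge regression defining $\bmu_t$ in the form demanded by Lemma~\ref{Bernstein with corruptions(bandit)} via the substitution $\xb_i = \ab_i/\oline{\sigma}_i$, $\eta_i = \epsilon_i(\ab_i)/\oline{\sigma}_i$, $y_i^{\stoch} = r_i'(\ab_i)/\oline{\sigma}_i$, and $y_i = r_i/\oline{\sigma}_i$. The floor $\oline{\sigma}_i \ge (R+1)/\sqrt{d}$ from line~\ref{line:13} yields $|\eta_i| \le \sqrt{d}$, $\EE[\eta_i^2 \mid \cF_i] \le 1$, and $\|\xb_i\|_2 \le A\sqrt{d}/(R+1)$, while the aggregate corruption in the $y$-sequence satisfies $\sum_i |y_i - y_i^{\stoch}| \le \sum_i |r_i(\ab_i) - r_i'(\ab_i)|\sqrt{d}/(R+1) \le C\sqrt{d}$ by the definition~\eqref{def:corruption}. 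Invoking Lemma~\ref{Bernstein with corruptions(bandit)} with these parameters and $\lambda = 1/B^2$ produces, with probability at least $1-\delta$, the bound
$$\|\bmu_t - \bmu^*\|_{\bSigma_t} \le 8\sqrt{d\log\tfrac{(R+1)^2\lambda + tA^2}{(R+1)^2\lambda}\log(4t^2/\delta)} + 4\sqrt{d}\log(4t^2/\delta) + C\sqrt{d} + \sqrt{\lambda}B.$$
Since $2^\ell \ge 2^{\ell^*} \ge C$ for every $\ell \ge \ell^*$, this right-hand side is at most $\beta_{t,\ell}$ uniformly in such $\ell$, delivering the first half of $\cE_1$.

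For the sub-sampled estimators, I would repeat the argument on the thinned regression sequence $\xb_i = \mathds{1}(f(i)=\ell)\,\ab_i/\oline{\sigma}_i$. Under event $\cE_{\text{sub}}$, Lemma~\ref{sb-corrupt-bandit} guarantees $\text{Corruption}_{t,\ell} \le \oline{C}_\ell$, which translates into an aggregate corruption of at most $\oline{C}_\ell\sqrt{d}$ for the $\bmu_{t,\ell}$ regression, playing exactly the role that $C\sqrt{d}$ played in the global step. Applying Lemma~\ref{Bernstein with corruptions(bandit)} with per-level failure probability $\delta/\ell_{\max}$ and taking a union bound over the at most $\ell_{\max} = \lceil \log_2 2T \rceil$ levels with $\ell \ge \ell^*$ replaces the $\log(4t^2/\delta)$ by $\log(8t^2T/\delta)$, recovering precisely the expression for $\gamma_{t,\ell}$ given in \eqref{def:gamma}. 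Combining the three events $\cE_{\text{sub}}$, the global concentration, and the level-wise concentration via union bound then gives $\PP(\cE_1) \ge 1-3\delta$.

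The main obstacle will be verifying adaptedness for the filtration required by Lemma~\ref{Bernstein with corruptions(bandit)}: the coin flip $f(i)$ is drawn at line~\ref{line:11} after $\cD_i$ is revealed but before $\ab_i$ is chosen, while $\oline{\sigma}_i$ depends on $\sigma_i$ which is only revealed after the adversary returns the corrupted reward. I would accordingly enlarge $\cF_i$ from Section~\ref{section:3} so that it contains $\cD_i$, $f(i)$ and $\sigma_i$, ensuring that $\xb_i$ in both the global and sub-sampled forms is $\cF_i$-measurable and that the mean-zero noise $\eta_i$ is $\cF_{i+1}$-measurable. A secondary subtlety is that the adversary may correlate the corruption budget it spends with the realized levels $f(\cdot)$; but Lemma~\ref{sb-corrupt-bandit} has already absorbed this worst-case correlation into $\oline{C}_\ell$, so once we condition on $\cE_{\text{sub}}$ the Bernstein application is unchanged.
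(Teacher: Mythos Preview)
Your proposal is correct and follows essentially the same route as the paper: the paper also decomposes $\cE_1$ into a global-estimator event (its Lemma~\ref{confidence:gl}, your second paragraph) and a sub-sampled-estimator event (its Lemma~\ref{confidence:sb}, your third paragraph, which itself absorbs the $\cE_{\text{sub}}$ event from Lemma~\ref{sb-corrupt-bandit}), both proved via Lemma~\ref{Bernstein with corruptions(bandit)} with the same variance-weighted substitutions you describe. Your explicit treatment of the filtration enlargement and the $\delta/\ell_{\max}$ budgeting is more careful than the paper's terse argument, but the structure and the resulting $1-3\delta$ bound are identical.
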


\begin{proof}
    See Appendix \ref{proof:6.4}. 
\end{proof}

\begin{definition}
    For simplicity, we define $\ab_{t, \ell} = \argmax_{\ab \in \cD_t}\max_{\bmu \in \cC_{t, \ell}}\langle \bmu, \ab \rangle$ for each level $\ell$. 
\end{definition}

With this definition, $\ab_t$ can be seen as an action vector randomly chosen from $\ab_{t, \ell}$, $\ell \in [\ell_{\max}]$. In the following part of this section, we show how to derive the instance-independent regret upper bound using this notation. 

\begin{lemma}[Restatement of Lemms \ref{sec6:error:ell}]\label{error:ell}
    Suppose $\cE_1$ occurs. If $f(t) \le \ell^*$, we have $ \langle\ab_t^* -  \ab_t, \bmu^* \rangle \le 2\beta_{t, \ell^*} \|\ab_t\|_{\bSigma_{t}^{-1}} + 2\beta_{t, \ell^*} \|\ab_{t, \ell^*}\|_{\bSigma_t^{-1}}. $ 
\end{lemma}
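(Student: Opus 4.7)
The plan is to leverage that, on event $\cE_1$, $\bmu^*$ lies both in the ``big'' ball $B_{t, \ell^*} := \{\bmu : \|\bmu - \bmu_t\|_{\bSigma_t} \le \beta_{t, \ell^*}\}$ and in the candidate set $\cC_{t, \ell^*}$, and to tie the two OFU actions $\ab_t$ (optimistic at level $f(t)$) and $\ab_{t, \ell^*}$ together through this common ball. A preliminary structural observation is that $\cC_{t, f(t)}$ is nonempty and contained in $B_{t, \ell^*}$: on $\cE_1$ we have $\cC_{t, \ell^*} = \cC_{t, \ell^*}' \ni \bmu^*$, so unrolling the recursive definition in lines~\ref{line:6}--\ref{line:10} shows that $\cC_{t, f(t)} = \cC_{t, \ell'}'$ for the smallest $\ell' \in [f(t), \ell^*]$ with $\cC_{t, \ell'}' \neq \varnothing$ (such $\ell'$ exists because $\cC_{t, \ell^*}'$ is nonempty). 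Since $\cC_{t, \ell'}' \subseteq \{\bmu : \|\bmu - \bmu_t\|_{\bSigma_t} \le \beta_{t, \ell'}\}$ and $\beta_{t, \cdot}$ is monotone in $\ell$, this places $\cC_{t, f(t)}$ inside $B_{t, \ell^*}$.

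With these facts in hand, I would derive two complementary estimates through $B_{t, \ell^*}$. For the optimum, optimism at level $\ell^*$ gives
\begin{align*}
\langle \ab_t^*, \bmu^* \rangle \le \max_{\bmu \in \cC_{t, \ell^*}} \langle \ab_t^*, \bmu \rangle \le \max_{\bmu \in \cC_{t, \ell^*}} \langle \ab_{t, \ell^*}, \bmu \rangle \le \langle \ab_{t, \ell^*}, \bmu_t \rangle + \beta_{t, \ell^*} \|\ab_{t, \ell^*}\|_{\bSigma_t^{-1}},
\end{align*}
using $\bmu^* \in \cC_{t, \ell^*}$, the definition of $\ab_{t, \ell^*}$, and $\cC_{t, \ell^*} \subseteq B_{t, \ell^*}$ respectively. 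On the selected-action side, $\bmu^* \in B_{t, \ell^*}$ yields $\langle \ab_t, \bmu^* \rangle \ge \langle \ab_t, \bmu_t \rangle - \beta_{t, \ell^*} \|\ab_t\|_{\bSigma_t^{-1}}$. Subtracting,
\begin{align*}
\langle \ab_t^* - \ab_t, \bmu^* \rangle \le \langle \ab_{t, \ell^*} - \ab_t, \bmu_t \rangle + \beta_{t, \ell^*}\bigl(\|\ab_{t, \ell^*}\|_{\bSigma_t^{-1}} + \|\ab_t\|_{\bSigma_t^{-1}}\bigr).
\end{align*}

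The main obstacle is to control the residual cross term $\langle \ab_{t, \ell^*} - \ab_t, \bmu_t \rangle$, since $\ab_t$ is optimistic only with respect to $\cC_{t, f(t)}$, which need not contain $\bmu^*$ when $f(t) < \ell^*$, so one cannot directly invoke OFU with $\bmu^*$ at level $f(t)$. I would extract the required bound from the defining OFU property of $\ab_t$, namely $\max_{\bmu \in \cC_{t, f(t)}} \langle \ab_t, \bmu \rangle \ge \max_{\bmu \in \cC_{t, f(t)}} \langle \ab_{t, \ell^*}, \bmu \rangle$. Because $\cC_{t, f(t)} \subseteq B_{t, \ell^*}$, the left-hand side is at most $\langle \ab_t, \bmu_t \rangle + \beta_{t, \ell^*} \|\ab_t\|_{\bSigma_t^{-1}}$; choosing any $\bmu_0 \in \cC_{t, f(t)} \subseteq B_{t, \ell^*}$ (which exists by the nonemptiness established above) and applying Cauchy--Schwarz shows the right-hand side is at least $\langle \ab_{t, \ell^*}, \bmu_t \rangle - \beta_{t, \ell^*} \|\ab_{t, \ell^*}\|_{\bSigma_t^{-1}}$. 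Rearranging gives $\langle \ab_{t, \ell^*} - \ab_t, \bmu_t \rangle \le \beta_{t, \ell^*}(\|\ab_{t, \ell^*}\|_{\bSigma_t^{-1}} + \|\ab_t\|_{\bSigma_t^{-1}})$, and plugging this back into the preceding display produces exactly the claimed bound $2\beta_{t, \ell^*} \|\ab_t\|_{\bSigma_t^{-1}} + 2\beta_{t, \ell^*} \|\ab_{t, \ell^*}\|_{\bSigma_t^{-1}}$.
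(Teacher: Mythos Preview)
Your proof is correct and follows essentially the same approach as the paper: both arguments hinge on the inclusion $\cC_{t,f(t)} \subseteq \cA_{t,\ell^*} := \{\bmu : \|\bmu - \bmu_t\|_{\bSigma_t} \le \beta_{t,\ell^*}\}$ (which you justify more explicitly than the paper does), the fact that $\bmu^* \in \cC_{t,\ell^*}$ on $\cE_1$, and bouncing between the OFU maximizers at levels $f(t)$ and $\ell^*$ through the center $\bmu_t$. The only difference is presentational---the paper runs one long chain of inequalities via the maximizer $\bmu_t^m \in \cC_{t,f(t)}$, while you split the same steps into an ``optimum side'' estimate and a ``cross term'' estimate---but the ingredients and the final bound are identical.
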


\begin{lemma}[Restatement of Lemms \ref{sec6:error:ell2}]\label{error:ell2} On event $\cE_1$, if $f(t) = \ell > \ell^*$, we have $\langle \ab_t^* - \ab_t, \bmu^* \rangle \le 2 \gamma_{t, \ell} \|\ab_{t}\|_{\bSigma_{t, \ell}^{-1}}. $
\end{lemma}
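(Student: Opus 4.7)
\textbf{Proof proposal for Lemma \ref{error:ell2}.} The plan is to follow the standard optimism-in-the-face-of-uncertainty regret-decomposition argument, but applied to the second ellipsoid defining $\cC_{t,\ell}$ (the one with radius $\gamma_{t,\ell}$ based on the sub-sampled estimator $\bmu_{t,\ell}$). The key structural fact I will use throughout is that on $\cE_1$, for every $\ell \ge \ell^*$ we have $\bmu^* \in \{\bmu : \|\bmu - \bmu_t\|_{\bSigma_t} \le \beta_{t,\ell}\} \cap \{\bmu : \|\bmu - \bmu_{t,\ell}\|_{\bSigma_{t,\ell}} \le \gamma_{t,\ell}\} = \cC_{t,\ell}'$, so in particular $\cC_{t,\ell}'$ is non-empty and hence $\cC_{t,\ell} = \cC_{t,\ell}'$ by line 9 of Algorithm \ref{alg:1}.

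First I would unwind the definition of $\ab_t$ on the event $\{f(t) = \ell\}$: by line \ref{line:12} of Algorithm \ref{alg:1}, $\ab_t = \ab_{t,\ell}$, and there exists $\tilde\bmu_{t,\ell} \in \cC_{t,\ell}$ (a maximizer of $\langle \bmu, \ab_{t,\ell}\rangle$ over $\cC_{t,\ell}$) such that $\max_{\bmu\in\cC_{t,\ell},\,\ab\in\cD_t}\langle \bmu,\ab\rangle = \langle \tilde\bmu_{t,\ell}, \ab_t\rangle$. Since $\bmu^* \in \cC_{t,\ell}$ by the preceding paragraph, the joint maximization immediately gives the optimism inequality
\begin{equation*}
\langle \ab_t^*, \bmu^*\rangle \;\le\; \max_{\bmu\in\cC_{t,\ell}}\langle \bmu, \ab_t^*\rangle \;\le\; \langle \tilde\bmu_{t,\ell}, \ab_t\rangle,
\end{equation*}
so $\langle \ab_t^* - \ab_t, \bmu^*\rangle \le \langle \tilde\bmu_{t,\ell} - \bmu^*, \ab_t\rangle$.

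Next I would bound the right-hand side by Cauchy--Schwarz in the $\bSigma_{t,\ell}$-geometry:
\begin{equation*}
\langle \tilde\bmu_{t,\ell} - \bmu^*, \ab_t\rangle \;\le\; \|\tilde\bmu_{t,\ell} - \bmu^*\|_{\bSigma_{t,\ell}}\cdot \|\ab_t\|_{\bSigma_{t,\ell}^{-1}}.
\end{equation*}
Since both $\tilde\bmu_{t,\ell}$ and $\bmu^*$ lie in $\cC_{t,\ell} \subseteq \{\bmu : \|\bmu - \bmu_{t,\ell}\|_{\bSigma_{t,\ell}} \le \gamma_{t,\ell}\}$, the triangle inequality in the $\bSigma_{t,\ell}$-norm yields $\|\tilde\bmu_{t,\ell} - \bmu^*\|_{\bSigma_{t,\ell}} \le 2\gamma_{t,\ell}$, and the claim follows.

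I do not expect any genuine obstacle here; the proof is essentially a one-ellipsoid instance of the standard OFUL regret-per-step bound. The only subtlety worth stating carefully is the reduction $\cC_{t,\ell} = \cC_{t,\ell}'$ on $\cE_1$ (so that the $\gamma_{t,\ell}$-ellipsoid really does contain $\bmu^*$), and the fact that we apply Cauchy--Schwarz with $\bSigma_{t,\ell}$ rather than the global $\bSigma_t$, so that the resulting bonus matches the $\|\ab_t\|_{\bSigma_{t,\ell}^{-1}}$ term that the subsequent regret analysis (in particular the bound on $I_2(\ell)$ in the proof sketch of Theorem \ref{thm:regret}) is designed to handle.
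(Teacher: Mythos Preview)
Your proposal is correct and follows essentially the same optimism-plus-Cauchy--Schwarz argument as the paper's proof; the paper's version is just terser, writing $\langle \ab_t^* - \ab_t, \bmu^* \rangle \le \max_{\bmu \in \cC_{t,\ell}} \langle \ab_t, \bmu\rangle - \langle \ab_t, \bmu^*\rangle \le 2\gamma_{t,\ell}\|\ab_t\|_{\bSigma_{t,\ell}^{-1}}$ in one line. Your explicit observation that $\cE_1$ forces $\cC_{t,\ell}' \neq \varnothing$ and hence $\cC_{t,\ell} = \cC_{t,\ell}'$ (so that the $\gamma_{t,\ell}$-ellipsoid really contains $\cC_{t,\ell}$) is a point the paper leaves implicit but is indeed needed for the second inequality.
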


\begin{proof}[Proof of Theorem  \ref{thm:regret}]
    Suppose $\cE_1$ occurs. We divide regret into two parts, 
    \begin{align}
        \regret(T) &= \EE\left[\sum_{t = 1}^T \left(\langle \ab_t^*, \bmu^*\rangle - \langle \ab_t, \bmu^* \rangle \right)\right] \notag
        \\&= \underbrace{\EE\left[\sum_{t = 1}^T \mathds{1}(f(t) \le \ell^*)\left(\langle \ab_t^*, \bmu^*\rangle - \langle \ab_t, \bmu^* \rangle \right)\right]}_{I_1}\notag \\&\quad + \sum_{\ell = \ell^* + 1}^{\ell_{\max}}\underbrace{\EE\left[\sum_{t = 1}^T \mathds{1}(f(t)= \ell)\left(\langle \ab_t^*, \bmu^*\rangle - \langle \ab_t, \bmu^* \rangle \right)\right]}_{I_2(\ell)},  \label{I1+I2}
    \end{align}
    where the first equality holds by definition in \eqref{regret}. 
    
    By Lemma \ref{6.8:6.5}, we have
    \begin{align}
        I_1 \le \EE\left[\sum_{t = 1}^T \mathds{1}(f(t) \le \ell^*) \min\left\{2, 2\beta_{t, \ell^*}\|\ab_{t, \ell^*}\|_{\bSigma_t^{-1}} + 2\beta_{t, \ell^*}\|\ab_{t}\|_{\bSigma_t^{-1}} \right\}\right].  \label{I1:4.7}
    \end{align}

    Let $\cF_t$ be the $\sigma$-algebra generated by $\ab_s, r_{s}, \sigma_s, f(s)$ for $s \le t - 1$. 
    Note that 
    \begin{align} 
    	\EE\left[\mathds{1}(f(t)\le \ell^*) \|\ab_{t, \ell^*}\|_{\bSigma_t^{-1}} | \cF_{t}\right] &= \PP(f(t) \le \ell^*) \|\ab_{t, \ell^*}\|_{\bSigma_t^{-1}} \notag \\
    	&\le 2^{\ell^*} \PP(f(t) = \ell^*) \|\ab_{t, \ell^*}\|_{\bSigma_t^{-1}}
    \notag\\&\le 2^{\ell^*} \EE\left[\|\ab_{t}\|_{\bSigma_t^{-1}} | \cF_{t}\right],   \label{I1:4.10}
    \end{align}
    
    where the first equality holds since $\ab_{t, \ell^*}$ and $\bSigma_t$ is deterministic given $\cF_t$, the first inequality holds since $\PP(f(t) = \ell^*) = 2^{-\ell^*}$, the last inequality holds due to  the fact that $\PP(f(t) = \ell^*) \|\ab_{t, \ell^*}\|_{\bSigma_t^{-1}} = \EE\left[\mathds{1}(f(t) = \ell^*)\|\ab_{t}\|_{\bSigma_t^{-1}} | \cF_{t}\right]. $ 
    
    Substituting \eqref{I1:4.10} into \eqref{I1:4.7}, we have 
    \begin{align}
        I_1 \le 2^{\ell^*} \EE\left[\sum_{t = 1}^T 4\min\{\beta_{t, \ell^*} \|\ab_t\|_{\bSigma_t^{-1}}, 2\}\right]
        \le 4\cdot 2^{\ell^*}\EE\underbrace{\left[\sum_{t = 1}^T \min\left\{2, \beta_{T, \ell^*} \|\ab_t\|_{\bSigma_t^{-1}}\right\}\right]}_{I_3} \label{I1:4.12}
    \end{align}

    We split $[T]$ into 2 parts to bound $I_3$. 
    
    Let $\cI_1 = \{t \in [T] | \|\ab_t / \oline{\sigma}_t\|_{\bSigma_t^{-1}} > 1\}, \cI_2 = \{t \in [T] | \|\ab_t / \oline{\sigma}_t\|_{\bSigma_t^{-1}} \le 1\}. $

    \begin{align}
        \sum_{t \in \cI_1} \min\left\{2, \beta_{T, \ell^*}\|\ab_t\|_{\bSigma_t^{-1}}\right\} \le 2 |\cI_1| 
        \le 2\sum_{t = 1}^T \min\left\{1, \|\ab_t / \oline{\sigma}_t\|_{\bSigma_t^{-1}}^2\right\}
        \le 4d \log \frac{(R + 1)^2\lambda + TA^2}{(R + 1)^2 \lambda},  \label{I3:4.15}
    \end{align}
    where the first inequality holds since $\min\left\{2, \beta_{T, \ell^*}\|\ab_t\|_{\bSigma_t^{-1}}\right\} \le 2$, the second inequality follows from the definition of $\cI_1$, the third inequality holds by Lemma \ref{lemma:summation11}. 

    \begin{align}
        \sum_{t \in \cI_2} \min\left\{2, \beta_{T, \ell^*}\|\ab_t\|_{\bSigma_t^{-1}}\right\} &\le \beta_{T, \ell^*} \sqrt{\sum_{t \in \cI_2} \oline{\sigma}_t^2} \cdot \sqrt{\sum_{t \in \cI_2} \min\left\{1, \|\ab_t / \oline{\sigma}_t\|_{\bSigma_t^{-1}}^2\right\}} \notag
        \\&\le \beta_{T, \ell^*} \sqrt{(R + 1)^2T / d + \sum_{t = 1}^T {\sigma}_t^2}\cdot \sqrt{2d \log \frac{(R + 1)^2\lambda + TA^2}{(R + 1)^2 \lambda}},  \label{I3:4.17}
    \end{align}
    where the first inequality follows from Cauchy-Schwarz inequality, the second inequality follows from the definition of $\oline{\sigma}_t$ and Lemma \ref{lemma:summation11}. 

    Substituting \eqref{I3:4.15} and \eqref{I3:4.17} into \eqref{I1:4.12}, we have 

     \begin{align}I_1 = \tilde{O}\left(C^2d\sqrt{\sum_{t = 1}^T \sigma_t^2} + C^2R\sqrt{dT}\right). \label{I1:4.19}\end{align}
    
    By Lemma \ref{6.8:6.6}, 
    \begin{align}
        I_2(\ell) &\le \EE\left[\sum_{t = 1}^T \mathds{1}(f(t) = \ell) \min\left\{2, 2 \gamma_{t, \ell} \|\ab_{t, \ell}\|_{\bSigma_{t, \ell}^{-1}}\right\}\right] \notag
        \\&\le 2 \EE\underbrace{\left[\sum_{t = 1}^T \mathds{1}(f(t) = \ell) \min\left\{1,  \gamma_{t, \ell} \|\ab_{t, \ell}\|_{\bSigma_{t, \ell}^{-1}}\right\}\right]}_{I_4} 
        \label{I2:4.22}
    \end{align}

    Again, we divide $[T]$ into two parts. Let $\cJ_1 =  \{t \in [T] | \|\ab_{t, \ell} / \oline{\sigma}_t\|_{\bSigma_{t, \ell}^{-1}} > 1\}, \cJ_2 =  \{t \in [T] | \|\ab_{t, \ell} / \oline{\sigma}_t\|_{\bSigma_{t, \ell}^{-1}} \le 1\}. $

    \begin{align}
        \sum_{t \in \cJ_1} \mathds{1}(f(t) = \ell) \min\left\{1,  \gamma_{t, \ell} \|\ab_{t, \ell}\|_{\bSigma_{t, \ell}^{-1}}\right\} &\le  \sum_{t \in \cJ_1} \mathds{1}(f(t) = \ell) \cdot 1 \notag
        \\&\le \sum_{t = 1}^T \mathds{1}(f(t) = \ell) \min\left\{1, \|\ab_{t, \ell}\|_{\bSigma_{t, \ell}^{-1}}^2\right\} \notag
        \\&\le 2d \log \frac{(R + 1)^2\lambda + TA^2}{(R + 1)^2\lambda},  \label{I4:4.25}
    \end{align}
    where the second inequality follows from the definition of $\cJ_1$, the second inequality holds due to Lemma \ref{lemma:summation11}. 

    \begin{align}
        &\sum_{t \in \cJ_2}  \mathds{1}(f(t) = \ell) \min\left\{1,  \gamma_{t, \ell} \|\ab_{t, \ell}\|_{\bSigma_{t, \ell}^{-1}}\right\} \notag \\&\le \gamma_{T, \ell}\sqrt{\sum_{t = 1}^T \oline{\sigma}_t^2} \sqrt{\sum_{t \in \cJ_2} \min\left\{1, \|\ab_t / \oline{\sigma}_t\|_{\bSigma_{t, \ell}^{-1}}^2\right\}} \notag
        \\&\le \gamma_{T, \ell} \sqrt{(R + 1)^2T /d + \sum_{t = 1}^T \sigma_t^2}  \sqrt{2d \log \frac{(R + 1)^2\lambda + TA^2}{(R + 1)^2 \lambda}},  \label{I4:4.27}
    \end{align}
    where the first inequality follows from Cauchy-Schwarz inequality, the second inequality follows from the definition of $\oline{\sigma}_t$ and Lemma \ref{lemma:summation11}. 

    Substituting \eqref{I4:4.25} and \eqref{I4:4.27} into \eqref{I2:4.22}, we have 
    \begin{align}
    I_2(\ell) \le 4d \log \frac{(R + 1)^2\lambda + TA^2}{(R + 1)^2\lambda} + 2\gamma_{T, \ell} \sqrt{(R + 1)^2T /d + \sum_{t = 1}^T \sigma_t^2}  \sqrt{2d \log \frac{(R + 1)^2\lambda + TA^2}{(R + 1)^2 \lambda}} \label{I2:6.33}
    \end{align}
    \begin{align} I_2(\ell) = \tilde{O}\left(R\sqrt{Td} + d\sqrt{\sum_{t = 1}^T \sigma_t^2}\right). \label{I2:4.28}\end{align}

    Substituting \eqref{I1:4.19} and \eqref{I2:4.28} into \eqref{I1+I2}, we have $$\regret(T) =  \tilde{O}\left(C^2d\sqrt{\sum_{t = 1}^T \sigma_t^2} + C^2R\sqrt{dT}\right). $$
    
\end{proof}

\subsection{Proof of Theorem \ref{thm:gap dependent regret}}
\begin{proof}[Proof of Theorem \ref{thm:gap dependent regret}]
First we decompose the regret as follows.
    \begin{align}
        \regret(T) &= \EE\left[\sum_{t = 1}^T \left(\langle \ab_t^*, \bmu^*\rangle - \langle \ab_t, \bmu^* \rangle \right)\right] \notag
        \\&\le  \frac{1}{{\Delta}}\underbrace{\EE\left[\sum_{t = 1}^T \mathds{1}(f(t) \le \ell^*)\left(\langle \ab_t^*, \bmu^*\rangle - \langle \ab_t, \bmu^* \rangle \right)^2\right]}_{I_1}\notag \\&\qquad + \sum_{\ell = \ell^* + 1}^{\ell_{\max}}\frac{1}{{\Delta}}\underbrace{\EE\left[\sum_{t = 1}^T \mathds{1}(f(t)= \ell)\left(\langle \ab_t^*, \bmu^*\rangle - \langle \ab_t, \bmu^* \rangle \right)^2\right]}_{I_2(\ell)}, \label{I1+I2:4.30}
    \end{align}
    where the first equality holds due to the definition in \eqref{regret}, the last inequality follows from the fact that either $\langle \ab_t^*, \bmu^*\rangle - \langle \ab_t, \bmu^* \rangle = 0$ or $\oline{\Delta}_T \le \langle \ab_t^*, \bmu^*\rangle - \langle \ab_t, \bmu^* \rangle$. 
    To bound $I_1$, we have
    \begin{align}
        I_1 &\le \EE\left[\sum_{t = 1}^T \mathds{1}(f(t) \le \ell^*) \min\left\{4, \left(2\beta_{t, \ell^*}\|\ab_{t, \ell^*}\|_{\bSigma_t^{-1}} + 2\beta_{t, \ell^*}\|\ab_{t}\|_{\bSigma_t^{-1}}\right)^2 \right\}\right] \notag
        \\&\le 2^{\ell^*}\EE\underbrace{\left[\sum_{t = 1}^T\min\left\{4, 16\beta_{t, \ell^*}^2\|\ab_{t}\|_{\bSigma_t^{-1}}^2 \right\}\right]}_{I_3}, \label{I3:4.32}
    \end{align}
    where the first inequality holds due to Lemma \ref{error:ell} and the second inequality follows from a similar argument as \eqref{I1:4.10}. To further bound $I_3$, we decompose $[T]$ into two non-overlapping sets: $\cI_1 = \{t \in [T] | \|\ab_t / \oline{\sigma}_t\|_{\bSigma_t^{-1}} > 1\}, \cI_2 = \{t \in [T] | \|\ab_t / \oline{\sigma}_t\|_{\bSigma_t^{-1}} \le 1\}$. For $\cI_1$, we have
    \begin{align}
        \sum_{t \in \cI_1}\min\left\{4, 16\beta_{t, \ell^*}^2\|\ab_{t}\|_{\bSigma_t^{-1}}^2 \right\} &\le 4 |\cI_1| \notag \\&\le 4 \sum_{t = 1}^T \min\left\{1, \|\ab_t / \oline{\sigma_t}\|_{\bSigma_{t}^{-1}}^2\right\} \notag
        \\&\le 8d \log \frac{(R + 1)^2 \lambda + TA^2}{(R + 1)^2 \lambda},  \label{I1:6.40}
    \end{align}
    where the third inequality holds due to Lemma \ref{lemma:summation11}. For $\cI_2$, we have
    \begin{align}
        \sum_{t \in \cI_2}\min\left\{4, 9\beta_{t, \ell^*}^2\|\ab_{t}\|_{\bSigma_t^{-1}}^2 \right\} &\le 16\beta_{T, \ell^*}^2 \max_{t \in [T]} \oline{\sigma}_t^2 \sum_{t \in \cI_2} \min\left\{1, \|\ab_t / \oline{\sigma_t}\|_{\bSigma_{t}^{-1}}^2\right\} \notag
        \\&\le 32\beta_{T, \ell^*}^2 (\max_{t \in [T]} {\sigma}_t^2 + (R + 1)^2 / d) d \log \frac{(R + 1)^2\lambda + TA^2}{(R + 1)^2\lambda},  \label{I1:6.42}
    \end{align}
    where the first inequality follows from the definition of $\cI_2$, the second inequality follows from Lemma \ref{lemma:summation11}. 
    
    Substituting \eqref{I1:6.40} and \eqref{I1:6.42} into \eqref{I3:4.32}, we have \begin{align}I_1 = \tilde{O}\left(C^2R^2d + d^2C^2 \max_{t \in[T]} \sigma_t^2\right). \label{I1:6.43} \end{align}
    To bound $I_2(\ell)$, by Lemma \ref{error:ell2}, we have
    \begin{align}
        I_2(\ell) &\le \EE\left[\sum_{t = 1}^T \mathds{1}(f(t) = \ell) \min\left\{4, 4\gamma_{t, \ell}^2\|\ab_t\|_{\bSigma_{t, \ell}^{-1}}^2 \right\}\right] \notag
        \\&\le 4 \EE\underbrace{\left[\sum_{t = 1}^T \mathds{1}(f(t) = \ell) \min\left\{1, \gamma_{t, \ell}^2\|\ab_t\|_{\bSigma_{t, \ell}^{-1}}^2 \right\}\right]}_{I_4}.  \label{I2:6.44}
    \end{align}
    
    We divide $[T]$ into two parts to calculate $I_4$. 
    Let $\cJ_1 = \{t \in [T] | \|\ab_t / \oline{\sigma}_t\|_{\bSigma_{t, \ell}^{-1}} > 1\}, \cJ_2 = \{t \in [T] | \|\ab_t / \oline{\sigma}_t\|_{\bSigma_{t, \ell}^{-1}} \le 1\}. $ For $\cJ_1$, we have
    
    \begin{align}
        \sum_{t \in \cJ_1}  \mathds{1}(f(t) = \ell) \min\left\{1, \gamma_{t, \ell}^2\|\ab_t\|_{\bSigma_{t, \ell}^{-1}}^2 \right\} &\le |\cJ_1| \notag
        \\&\le \sum_{t \in [T]} \min\left\{1, \|\ab_t\|_{\bSigma_{t, \ell}^{-1}}^2\right\} \notag
        \\&\le 2d \log \frac{(R + 1)^2\lambda + TA^2}{(R + 1)^2\lambda}, \label{I4:6.47}
    \end{align}
    where the second inequality follows from the fact that $\cJ_1 \subseteq [T]$, the third inequality holds due to Lemma \ref{lemma:summation11}. For $\cJ_2$, we have
    
    \begin{align}
        \sum_{t \in \cJ_2}  \mathds{1}(f(t) = \ell) \min\left\{1, \gamma_{t, \ell}^2\|\ab_t\|_{\bSigma_{t, \ell}^{-1}}^2 \right\} &\le \gamma_{T, \ell}^2 \max_{t \in [T]} \oline{\sigma}_t^2 \sum_{t \in \cJ_2} \|\ab_t\|_{\bSigma_{t, \ell}^{-1}}^2 \notag
        \\&\le \gamma_{T, \ell}^2 (\max_{t \in [T]} {\sigma}_t^2 + (R + 1)^2 / d) \sum_{t \in [T]} \min\left\{1, \|\ab\|_{\bSigma_{t, \ell}^{-1}}^2\right\} \notag
        \\&\le \gamma_{T, \ell}^2 (\max_{t \in [T]} {\sigma}_t^2 + (R + 1)^2 / d) 2d \log\frac{(R + 1)^2 \lambda + TA^2}{(R + 1)^2\lambda},  \label{I4:6.50}
    \end{align}
    where the second inequality follows from the definition of $\oline{\sigma}_t$ and $\cJ_2$ and the third inequality holds due to Lemma \ref{lemma:summation11}. 
    
    Substituting \eqref{I4:6.47} and \eqref{I4:6.50} into \eqref{I2:6.44}, we have \begin{align}
        I_2(\ell) = \tilde{O}(dR^2 + d^2 \max_{t \in [T]} \sigma_t^2). \label{I2:6.51}
    \end{align}
    Finally, 
    substituting \eqref{I2:6.51} and \eqref{I1:6.43} into \eqref{I1+I2:4.30}, we have $$\regret(T) = \frac{1}{\Delta} \tilde{O}\left(C^2R^2d + d^2C^2 \max_{t \in[T]} \sigma_t^2\right). $$

\end{proof}

\section{Proof of Technical Lemmas in Appendix \ref{section:7}}

\subsection{Proof of Lemma \ref{Bernstein with corruptions(bandit)}} \label{proof:6.1}

\begin{proof}

    Let $\cS(t) = \{1\leq i \le t|y_i \neq y_i^\stoch\}$, 
    $\bbb_t^\stoch = \sum_{i = 1}^t y_i^\stoch \xb_i$ and $\bmu_t^\stoch = \Zb_t^{-1} \bbb_t^\stoch$. 
    By Lemma \ref{Bernstein(bandit)}, we have that with probability at least $1 - \delta$, $\|\bmu_t^\stoch - \bmu^*\|_{\mathbf{Z}_t} \le \beta_t + \sqrt{\lambda} \|\bmu^*\|_2$ holds for all $t \ge 1$. 

    Also, we have \begin{align*}
        \|\bmu_t - \bmu_t^\stoch\|_{\Zb_t} &= \|\Zb_t^{-1}(\bbb_t - \bbb_t^\stoch)\|_{\Zb_t}
        \\&\le \sum_{i=1}^t \|\Zb_t^{-1} (y_i^\stoch - y_i)\xb_i\|_{\Zb_t}
        \\&\le \sum_{i=1}^t |y_i^\stoch - y_i| \cdot \|\xb_i\|_{\Zb_t^{-1}}
        \\&\le C(t).
    \end{align*}
    where the first inequality holds due to the triangle inequality and the last inequality holds due to $\|\xb_i\|_{\Zb_t^{-1}}\leq 1$. 

    Hence, we can obtain $$\|\bmu_t - \bmu^*\|_{\Zb_t} \le \|\bmu_t^\stoch - \bmu^*\|_{\mathbf{Z}_t} + \|\bmu_t - \bmu_t^\stoch\|_{\Zb_t} \le \beta_t + C(t) + \sqrt{\lambda} \|\bmu^*\|_2. $$
\end{proof}

\subsection{Proof of Lemma \ref{sb-corrupt-bandit}} \label{proof:6.2}

\begin{lemma}[Lemma 3.3, \citealt{lykouris2018stochastic}] \label{sb-corrupt-bandit:helper}  Define the corruption level for a level $\ell$: $$\text{Corruption}_{t, \ell} = \sum_{i = 1}^t \frac{\mathds{1}(f(i) = \ell)}{R + 1} \cdot \sup_{\ab \in \cD_i} \left|r_i(\ab) - r_i'(\ab)\right|. $$Then we have for all $\ell \ge \ell^*$, with probability at least $1 - \delta$: 
    $$\text{Corruption}_{t, \ell} \le \log(1 / \delta) + 3, \qquad \forall t \ge 1. $$

\end{lemma}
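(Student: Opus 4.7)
The plan is to fix a single level $\ell \ge \ell^*$ and bound $\text{Corruption}_{t,\ell}$ by a standard exponential supermartingale / Ville's maximal inequality argument at $\lambda=1$. The per-level (as opposed to uniform-over-$\ell$) formulation is what keeps the tail constant clean at $\log(1/\delta)+3$.

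First I would recast the quantity as a weighted Bernoulli sum. Write $c_i := \sup_{\ab\in\cD_i}|r_i(\ab)-r_i'(\ab)|/(R+1)$ so that $\text{Corruption}_{t,\ell}=\sum_{i=1}^t \mathds{1}(f(i)=\ell)\,c_i$. By the interaction protocol in Section~\ref{section:3}, the adversary commits to the corrupted reward function at step~3 before the algorithm draws $f(i)$ at line~\ref{line:11}, hence $c_i$ is measurable with respect to the pre-$f(i)$ $\sigma$-algebra $\cG_i$, while $f(i)$ is an independent coin with $\Pr(f(i)=\ell\mid\cG_i)=2^{-\ell}$. The definition of the corruption budget gives $\sum_{i=1}^T c_i\le C$; combined with $\ell^*=\max\{2,\lceil\log_2 C\rceil\}$ this yields the key identity $2^{-\ell}\sum_i c_i\le 1$ for every $\ell\ge \ell^*$. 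The assumption $\langle\ab,\bmu^*\rangle\in[-1,1]$ together with $|\epsilon_i(\ab)|\le R$ makes $|r_i'(\ab)|\le R+1$, and the natural range of the corrupted reward gives $c_i\in[0,1]$ (see the obstacle discussion below).

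Second, I would introduce the exponential process
\[
M_t \;=\; \exp\!\Big(\sum_{i=1}^t \mathds{1}(f(i)=\ell)\,c_i \;-\; (e-1)\,2^{-\ell}\sum_{i=1}^t c_i\Big), \qquad M_0=1,
\]
and verify that $\{M_t\}$ is a nonnegative $\cG_t$-supermartingale. The per-step MGF computation reduces to $\mathbb{E}\bigl[e^{\mathds{1}(f(i)=\ell)\,c_i}\,\big|\,\cG_i\bigr] = 1 + 2^{-\ell}(e^{c_i}-1) \le \exp\bigl((e-1)\,2^{-\ell}\,c_i\bigr)$, which is routine once one invokes $1+x\le e^x$ and the convex-chord bound $e^{c_i}-1\le(e-1)c_i$ valid on $c_i\in[0,1]$. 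Applying Ville's inequality to the supermartingale $\{M_t\}$ yields $\Pr(\sup_{t\ge 1}M_t\ge 1/\delta)\le\delta$, and on the complementary event, for every $t\ge 1$,
\[
\text{Corruption}_{t,\ell} \;\le\; \log(1/\delta) + (e-1)\,2^{-\ell}\sum_{i=1}^t c_i \;\le\; \log(1/\delta) + (e-1) \;<\; \log(1/\delta) + 3,
\]
which is the stated bound.

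The main obstacle will be justifying the per-round bound $c_i \le 1$ cleanly: an unconstrained adversary can pile its whole budget $C$ into one round, making $c_i$ as large as $C$, and then $e^{c_i}-1$ is no longer controlled by $(e-1)c_i$, so the $\lambda=1$ supermartingale argument breaks. Two remedies both end in the same bound: (i) use the ambient reward range $|r_i(\ab)|\le R+1$ to deduce $c_i\le 2$ and absorb the inflated constant (from $(e-1)$ to $(e^2-1)/2$) into the additive slack after adjusting the supermartingale's compensator; or (ii) truncate $c_i \wedge 1$, run the supermartingale on the truncated piece, and dispatch the at most $\lfloor C\rfloor$ rounds with $c_i>1$ by a separate union bound (each is routed to level $\ell$ with probability $2^{-\ell}\le 1/C$, so in expectation $O(1)$ such rounds land at level $\ell$ and the tail is $O(\log(1/\delta))$). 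Either route recovers the additive constant $3$ in the statement.
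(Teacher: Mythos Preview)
The paper does not supply its own proof of this lemma; it is quoted verbatim from \citet{lykouris2018stochastic} (their Lemma~3.3), where rewards lie in $[0,1]$ so that each normalized per-round corruption $c_i$ is automatically in $[0,1]$. Your exponential-supermartingale/Ville argument is precisely the proof used in that reference, and under $c_i\in[0,1]$ it is correct and delivers $\log(1/\delta)+(e-1)<\log(1/\delta)+3$ exactly as you computed.

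Your obstacle discussion concerns the transplantation to the present paper's model rather than the cited lemma itself, and here there is a real issue. Remedy~(i) invokes a bound $|r_i(\ab)|\le R+1$ on the \emph{corrupted} reward that is nowhere assumed in Section~\ref{section:3}, and even granting it yields additive constant $(e^2-1)/2\approx 3.19$, not $3$. Remedy~(ii) is not sound as written: it controls only the \emph{count} of heavy rounds landing at level $\ell$, not their corruption mass. If the adversary concentrates its entire budget in a single round ($c_1=C$, the rest zero), that round lands at level $\ell^*$ with probability $2^{-\ell^*}$, which can exceed $\delta$ while $C>\log(1/\delta)+3$; for instance $C=10$, $\delta=0.05$, $\ell^*=4$ gives failure probability $1/16>\delta$ and corruption $10>\ln 20+3$. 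So without a per-round cap on $c_i$ the stated inequality can fail. This gap belongs to the paper's silent adaptation of the cited lemma, not to your reconstruction of the original proof.
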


\begin{proof}[Proof of Lemma \ref{sb-corrupt-bandit}]
    Applying Lemma \ref{sb-corrupt-bandit:helper}, we have for all $\ell \ge \ell^*$, with probability at least $1 - \delta / (2 \ell^2)$: 
    $\text{Corruption}_{t, \ell} \le \log(2 \ell^2 / \delta) + 3, \forall t \ge 1. $
    
    Using a union bound over all $\ell \ge \ell^*$, we can prove the lemma.
\end{proof}

\subsection{Proof of Lemma \ref{confidence}} \label{proof:6.4}

To prove the lemma, we first define the following two events: \begin{align}
    \cE_{2} &:= \left\{\forall \ell \ge \ell^*\ \text{and}\ t\ge 1, \|\bmu^* - \bmu_t\|_{\bSigma_t} \le \beta_{t, \ell} \right\}\label{event:gl} \\
\cE_{3} &:= \left\{\forall \ell \ge \ell^*\ \text{and}\ t\ge 1, \|\bmu^* - \bmu_{t,\ell}\|_{\bSigma_{t, \ell}} \le \gamma_{t, \ell} \right\}\label{event:sb}
\end{align}

\begin{lemma}\label{confidence:gl}
    Let $\cE_{2}$ be defined in \eqref{event:gl}. For any $0 < \delta < 1$, we have $\PP(\cE_{2}) \ge 1 - \delta$. 
\end{lemma}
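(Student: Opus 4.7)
The plan is to apply the corruption-tolerant Bernstein inequality of Lemma~\ref{Bernstein with corruptions(bandit)} to a rescaled version of the observed data. Specifically, I would set $\xb_i = \ab_i/\oline{\sigma}_i$, $\eta_i = \epsilon_i(\ab_i)/\oline{\sigma}_i$, $y_i^{\stoch} = r_i'(\ab_i)/\oline{\sigma}_i$ and $y_i = r_i/\oline{\sigma}_i$. Under this rescaling, the global estimator $\bmu_t$ computed in line~\ref{line:14} coincides with $\mathbf{Z}_t^{-1}\mathbf{b}_t$ where $\mathbf{Z}_t = \bSigma_t$ and $\mathbf{b}_t = \sum_{i<t} y_i\xb_i$, which is exactly the object to which Lemma~\ref{Bernstein with corruptions(bandit)} refers.

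Next I would verify the required bounds with sharp constants. Because the algorithm enforces $\oline{\sigma}_i \ge (R+1)/\sqrt{d}$ (line~\ref{line:13}), one obtains $\|\xb_i\|_2 \le A\sqrt{d}/(R+1) =: L$, $|\eta_i| \le R\sqrt{d}/(R+1) \le \sqrt{d}$, and $\EE[\eta_i^2 \mid \cF_i] \le \sigma_i^2/\oline{\sigma}_i^2 \le 1$. Substituting $L$, the bound $\sqrt{d}$ on $|\eta_i|$, and variance proxy $1$ into the Bernstein constant, and then simplifying the logarithm via $\log\frac{d\lambda + tL^2}{d\lambda} = \log\frac{(R+1)^2\lambda+tA^2}{(R+1)^2\lambda}$, produces exactly the first two summands of $\beta_{t,\ell}$ in \eqref{def:beta}.

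The same truncation converts the adversarial budget. I would show
\[
C(t) \;:=\; \sum_{i=1}^t |y_i - y_i^{\stoch}| \;=\; \sum_{i=1}^t \frac{|r_i - r_i'(\ab_i)|}{\oline{\sigma}_i} \;\le\; \frac{\sqrt{d}}{R+1}\sum_{i=1}^t \sup_{\ab \in \cD_i}|r_i(\ab) - r_i'(\ab)| \;\le\; \sqrt{d}\,C,
\]
using definition~\eqref{def:corruption}. For any $\ell \ge \ell^* = \max\{2,\lceil\log_2 C\rceil\}$ we have $2^\ell \ge C$, so $C(t) \le 2^\ell\sqrt{d}$; combined with $\sqrt{\lambda}\|\bmu^*\|_2 \le \sqrt{\lambda}B$, this accounts for the remaining two summands of $\beta_{t,\ell}$. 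Applying Lemma~\ref{Bernstein with corruptions(bandit)} with failure probability $\delta$ therefore yields $\|\bmu_t - \bmu^*\|_{\bSigma_t} \le \beta_{t,\ell^*}$ simultaneously for all $t \ge 1$ on a single event of probability at least $1-\delta$; since $\beta_{t,\ell}$ is monotone increasing in $\ell$, the same event implies the inequality for every $\ell \ge \ell^*$, delivering $\PP(\cE_2)\ge 1-\delta$.

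The only delicate point, and the one I would flag explicitly, is the measurability structure: Lemma~\ref{Bernstein with corruptions(bandit)} demands that $\xb_i$ be predictable while $\eta_i$ is a martingale difference with the required conditional moments. This is guaranteed by the protocol in Section~\ref{section:3}, where step~\ref{model:step:2} generates the entire noise function $\epsilon_t(\cdot)$ before the adversary and agent act in steps~\ref{model:step:3}--\ref{model:step:4}; consequently, once $\ab_t$ is committed, $\epsilon_t(\ab_t)$ has conditional mean zero and conditional variance at most $\sigma_t^2$ relative to $\cF_t$, exactly as required. Everything else is algebraic bookkeeping.
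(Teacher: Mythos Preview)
Your proposal is correct and follows essentially the same approach as the paper: both apply Lemma~\ref{Bernstein with corruptions(bandit)} to the rescaled data $\ab_i/\oline{\sigma}_i$ to obtain the bound $\|\bmu_t-\bmu^*\|_{\bSigma_t}\le \beta_{t,\ell^*}$, and then use $2^\ell\ge C$ for $\ell\ge\ell^*$ (equivalently, the monotonicity of $\beta_{t,\ell}$ in $\ell$) to extend the conclusion to all such levels. The paper's proof is simply terser, citing the lemma directly without spelling out the rescaling, the verification of the moment conditions, or the measurability discussion that you helpfully make explicit.
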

\begin{proof}
    Applying Lemma \ref{Bernstein with corruptions(bandit)}, we have that $$\|\bmu_t - \bmu^*\|_{\mathbf{\Sigma}_t} \le 8\sqrt{d\log \frac{(R + 1)^2\lambda + tA^2}{(R + 1)^2\lambda}\log(4t^2 / \delta)} + 4\sqrt{d}\log(4t^2 / \delta) + C{\sqrt{d}} + \sqrt{\lambda} \|\bmu^*\|_2$$ for all $t \ge 1$ with probability at least $1 - \delta$. Note that $2^{\ell} \ge C$ for all $\ell \ge \ell^*$, which indicates that $\cE_{2}$ occurs with probability at least $1 - \delta$. 
\end{proof}

\begin{lemma} \label{confidence:sb}
    Let $\cE_{3}$ be defined in \eqref{event:sb}. For any $0 < \delta < 1$, we have $\PP(\cE_{3}) \ge 1 - 2\delta$. 
\end{lemma}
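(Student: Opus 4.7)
The plan is to invoke the corruption-tolerant Bernstein inequality (Lemma~\ref{Bernstein with corruptions(bandit)}) separately at each level $\ell\ge\ell^*$ and then combine via a union bound together with the event $\cE_{\text{sub}}$ from Lemma~\ref{sb-corrupt-bandit}. For a fixed $\ell$, I will apply the lemma to the rescaled process $\xb_i = \mathds{1}(f(i)=\ell)\,\ab_i/\oline\sigma_i$, $y_i = \mathds{1}(f(i)=\ell)\,r_i/\oline\sigma_i$, $y_i^{\stoch} = \mathds{1}(f(i)=\ell)\,r'_i(\ab_i)/\oline\sigma_i$, and $\eta_i = \mathds{1}(f(i)=\ell)\,\epsilon_i(\ab_i)/\oline\sigma_i$, so that the Gram matrix and estimator produced by the lemma coincide \emph{exactly} with the algorithm's $\bSigma_{t,\ell}$ and $\bmu_{t,\ell}$.

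The parameters in Lemma~\ref{Bernstein with corruptions(bandit)} are dictated by the weighting. Because $\oline\sigma_i \ge (R+1)/\sqrt d$, the noise $\eta_i$ satisfies $|\eta_i|\le \sqrt d$ and $\EE[\eta_i^2\mid\cG_i]\le 1$, and $\|\xb_i\|_2 \le A\sqrt d/(R+1)$. Plugging $\sigma=1$, $R_{\text{Bern}}=\sqrt d$, $L=A\sqrt d/(R+1)$ into the formula for $\beta_t$ in Lemma~\ref{Bernstein with corruptions(bandit)} produces exactly the first two summands of $\gamma_{t,\ell}$, provided the target failure probability $\delta$ in that lemma is replaced by $\delta/(2T)$, so that $\log(4t^2/\delta')$ becomes $\log(8t^2T/\delta)$ as in \eqref{def:gamma}; this per-level budget is enough to absorb a union bound over the at most $\ell_{\max}=O(\log T)$ active levels.

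For the corruption quantity $C(t)$ required by Lemma~\ref{Bernstein with corruptions(bandit)}, I will bound
\begin{align*}
\sum_{i\le t}\bigl|y_i-y_i^{\stoch}\bigr|
= \sum_{\substack{i\le t\\ f(i)=\ell}}\frac{|r_i(\ab_i)-r'_i(\ab_i)|}{\oline\sigma_i}
\le \frac{\sqrt d}{R+1}\cdot(R+1)\,\text{Corruption}_{t,\ell}
= \sqrt d\,\text{Corruption}_{t,\ell},
\end{align*}
which on $\cE_{\text{sub}}$ is at most $\sqrt d\,\oline C_\ell$, precisely the $\oline C_\ell\sqrt d$ term of $\gamma_{t,\ell}$; the $\sqrt\lambda B$ term absorbs $\sqrt\lambda\|\bmu^*\|_2$. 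Taking a union bound then yields $\PP(\cE_3^c)\le \PP(\cE_{\text{sub}}^c)+\sum_{\ell\ge\ell^*}\delta/(2T) \le \delta + \ell_{\max}\,\delta/(2T)\le 2\delta$.

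The main obstacle is the filtration/measurability setup, since the indicator $\mathds{1}(f(i)=\ell)$ is random, the action $\ab_i$ depends on $f(i)$, and the noise $\epsilon_i(\cdot)$ is generated earlier by the environment. I will set $\cG_i = \sigma(\cF_i,\ab_i,f(i))$, so that $\xb_i$ is $\cG_i$-measurable, and verify $\EE[\eta_i\mid\cG_i]=0$ using the fact that $\EE[\epsilon_i(\ab)\mid\cF_i]=0$ for \emph{every} $\ab\in\cD_i$ (step~\ref{model:step:2} of the protocol) and that $f(i)$ is independent external randomness generated at line~\ref{line:11}. Once this predictability step is in place, the rest reduces to routine substitutions into Lemma~\ref{Bernstein with corruptions(bandit)}.
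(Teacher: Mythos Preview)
Your proposal is correct and follows essentially the same approach as the paper: apply Lemma~\ref{Bernstein with corruptions(bandit)} at each level $\ell\ge\ell^*$ with the rescaled process, bound the per-level corruption via $\cE_{\text{sub}}$ from Lemma~\ref{sb-corrupt-bandit}, and take a union bound over levels. Your version is considerably more detailed than the paper's three-sentence sketch---in particular your explicit parameter substitutions ($\sigma=1$, $R_{\text{Bern}}=\sqrt d$, $L=A\sqrt d/(R+1)$, per-level budget $\delta/(2T)$ matching the $8t^2T/\delta$ in \eqref{def:gamma}) and your discussion of the filtration $\cG_i=\sigma(\cF_i,\ab_i,f(i))$ are exactly what the paper elides; one minor point to make fully airtight is that $\oline\sigma_i$ depends on $\sigma_i(\ab_i)$, so you may want to also include $\sigma_i(\cdot)$ in $\cG_i$ (harmless, since $\sigma_i(\cdot)$ is an $\cF_i$-measurable variance bound and does not carry information about $\epsilon_i$).
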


\begin{proof}
    Applying Lemma \ref{Bernstein with corruptions(bandit)}, we have that $\|\bmu_{t, \ell} - \bmu^*\|_{\mathbf{\Sigma}_t} \le 8\sqrt{d\log \frac{(R + 1)^2\lambda + tA^2}{(R + 1)^2\lambda}\log(4t^2 T / \delta)} + 4\sqrt{d}\log(4t^2 T / \delta) + \text{Corruption}_{t, \ell}{\sqrt{d}} + \sqrt{\lambda} \|\bmu^*\|_2$ for all $t \ge 1$ with probability at least $1 - \delta / \ell$. Here we use the fact that $\ell \le T$.
    Applying Lemma \ref{sb-corrupt-bandit} and a union bound, we have $\cE_3$ occurs with probability at least $1 - 2\delta$. 
\end{proof}

\begin{proof}[Proof of Lemma \ref{confidence}]
    This lemma can be proved by a union bound on $\cE_2$ and $\cE_3$ with Lemmas \ref{confidence:gl} and \ref{confidence:sb}. 
\end{proof}

\subsection{Proof of Lemma \ref{error:ell}}

\begin{proof}
For simplicity, let 
$\cA_{t,\ell} = \left \{\bmu|\|\bmu - \bmu_t\|_{\bSigma_{t}} \le \beta_{t, \ell}\right\}, \cB_{t,\ell}= \left\{\bmu|\|\bmu - \bmu_{t, \ell}\|_{\bSigma_{t, \ell}} \le \gamma_{t, \ell}\right\}. $
    Let $\bmu_t^m = \argmax_{\bmu \in \cC_{t, f(t)}} \langle \ab_t, \bmu\rangle. $ Then we have \begin{align}
        \langle \ab_t, \bmu^* \rangle  &\ge \langle \ab_t, \bmu_t \rangle - \beta_{t, \ell^*} \|\ab_t\|_{\bSigma_{t}^{-1}} \notag
        \\&\ge \langle \ab_t, \bmu_t^m \rangle - 2\beta_{t, \ell^*} \|\ab_t\|_{\bSigma_{t}^{-1}} \notag\\
        &\ge \langle \ab_{t, \ell^*}, \bmu_t^m\rangle - 2\beta_{t, \ell^*} \|\ab_t\|_{\bSigma_{t}^{-1}} \notag
        \\&\ge \langle \ab_{t, \ell^*}, \bmu_t\rangle - \beta_{t, \ell^*} \|\ab_{t, \ell^*}\|_{\bSigma_t^{-1}} - 2\beta_{t, \ell^*} \|\ab_t\|_{\bSigma_{t}^{-1}} \notag
         \\&\ge \max_{\bmu \in \cA_{t, \ell^*}} \langle \ab_{t, \ell^*}, \bmu \rangle -2 \beta_{t, \ell^*} \|\ab_{t, \ell^*}\|_{\bSigma_t^{-1}} - 2\beta_{t, \ell^*} \|\ab_t\|_{\bSigma_{t}^{-1}}, \notag
        \\&\ge \max_{\bmu \in \cC_{t, \ell^*}} \langle \ab_{t, \ell^*}, \bmu \rangle -2 \beta_{t, \ell^*} \|\ab_{t, \ell^*}\|_{\bSigma_t^{-1}} - 2\beta_{t, \ell^*} \|\ab_t\|_{\bSigma_{t}^{-1}},  \label{6.8:6.5}
    \end{align}
    where the first inequality holds since $\bmu^* \in \cC_{t, \ell^*} \subseteq \cA_{t, l^*}$, the second inequality holds since $\bmu_t^m \in \cC_{t, f(t)} \subseteq \cA_{t, l^*}$,  the third inequality holds by the definition of $\ab_t$ and $\bmu_t^m$, the fourth inequality holds since $\bmu_t^m \in \cA_{t, \ell^*}$, the fifth inequality holds since $\bmu_t \in \cA_{t, \ell^*}$, the last one holds since $\cC_{t, \ell^*} \subseteq \cA_{t, \ell^*}$. 
    By the definition of $\cE_1$ and $\ab_{t, \ell^*}$, we have \begin{align}
        \max_{\bmu \in \cC_{t, \ell^*}} \langle \ab_{t, \ell^*}, \bmu \rangle =  \max_{\ab \in \cD_t} \max_{\bmu \in \cC_{t, \ell^*}} \langle \ab, \bmu \rangle \ge \max_{\ab \in \cD_t} \langle \ab, \bmu^*\rangle = \langle \ab_t^*, \bmu^* \rangle.  \label{6.8:6.6}
    \end{align}
    Combining \eqref{6.8:6.5} with \eqref{6.8:6.6}, we have $\langle\ab_t^* -  \ab_t, \bmu^* \rangle \le 2\beta_{t, \ell^*} \|\ab_t\|_{\bSigma_{t}^{-1}} + 2\beta_{t, \ell^*} \|\ab_{t, \ell^*}\|_{\bSigma_t^{-1}}. $
\end{proof}

\subsection{Proof of Lemma \ref{error:ell2}}

\begin{proof}
We have
    \begin{align}
        \langle \ab_t^* - \ab_t, \bmu^* \rangle &\le
        \max_{\bmu \in \cC_{t, \ell}} \langle \ab_{t}, \bmu \rangle - \langle \ab_t, \bmu^* \rangle \notag
        \le 2 \gamma_{t, \ell} \|\ab_{t}\|_{\bSigma_{t, \ell}^{-1}}, 
    \end{align}
    where the first inequality follows from the fact that $\bmu^* \in \cC_{t, \ell}$ and the definition of $\ab_t$, the second inequality holds since $\bmu^* \in \cC_{t, \ell}$ on the event $\cE_1$.  
\end{proof}

\section{Auxiliary Lemmas}

\begin{lemma}[Theorem 4.1, \citealt{Zhou2020NearlyMO}]\label{Bernstein(bandit)}
    Let $\{\mcal{G}_t\}_{t = 1}^\infty$ be a filtration, $\{\xb_t, \eta_t\}_{t \ge 1}$ a stochastic process so that $\mathbf{x}_t \in \mbb{R}^d$ is $\mcal{G}_t$-measurable and $\eta_t \in \mbb{R}$ is $\mcal{G}_{t + 1}$-measurable. Fix $R, L, \sigma, \lambda > 0, \ \bmu^* \in \mbb{R}^d$. For $t \ge 1$ let ${y}_t = \langle \mathbf{\bmu}^*, {\mathbf{x}}_t \rangle + {\eta}_t$ and suppose that ${\eta}_t, {\mathbf{x}}_t$ also satisfy $$|{\eta}_t| \le R, \mbb{E}[{\eta}_t|\mcal{G}_t] = 0, \mbb{E}[{\eta}_t^2|\mcal{G}_t] \le \sigma^2, \|{\mathbf{x}}_t\|_2 \le L. $$

    Then, for any $0 < \delta < 1$, with probability at least $1 - \delta$ we have $\forall t > 0$, \begin{align*} \|\bmu_t - \bmu^*\|_{\mathbf{Z}_t} \le \beta_t + \sqrt{\lambda} \|\bmu^*\|_2, \end{align*} where for $t \ge 1$, $\bmu_t = \mathbf{Z}_t^{-1}\mathbf{b}_t$, $\mathbf{Z}_t = \lambda \mathbf{I} + \sum_{i = 1}^t {\mathbf{x}}_i {\mathbf{x}}_i^\top$, $\mathbf{b}_t = \sum_{i = 1}^t {y}_i{\mathbf{x}}_i$, and $$\beta_t = 8\sigma\sqrt{d\log \frac{d\lambda + tL^2}{d\lambda}\log(4t^2 / \delta)} + 4R\log(4t^2 / \delta). $$
\end{lemma}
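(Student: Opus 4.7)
The plan is to reduce the statement to a self-normalized Bernstein-type concentration inequality for a vector-valued martingale, following the blueprint of Abbasi-Yadkori et al.\ (2011) but upgraded to exploit the conditional variance bound $\sigma^2$ rather than only the range $R$.

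First, I would unpack the estimator algebraically. Since $y_i = \langle \bmu^*, \xb_i\rangle + \eta_i$, one has $\mathbf{b}_t = (\mathbf{Z}_t - \lambda\Ib)\bmu^* + \mathbf{S}_t$ where $\mathbf{S}_t := \sum_{i=1}^t \eta_i \xb_i$, so $\bmu_t - \bmu^* = \mathbf{Z}_t^{-1}\mathbf{S}_t - \lambda \mathbf{Z}_t^{-1}\bmu^*$. Taking the $\mathbf{Z}_t$-norm, applying the triangle inequality, and using $\|\mathbf{Z}_t^{-1}\bmu^*\|_{\mathbf{Z}_t} = \|\bmu^*\|_{\mathbf{Z}_t^{-1}} \le \|\bmu^*\|_2/\sqrt{\lambda}$ gives
\[
\|\bmu_t - \bmu^*\|_{\mathbf{Z}_t} \le \|\mathbf{S}_t\|_{\mathbf{Z}_t^{-1}} + \sqrt{\lambda}\,\|\bmu^*\|_2.
\]
Thus the proof reduces to showing that $\|\mathbf{S}_t\|_{\mathbf{Z}_t^{-1}} \le \beta_t$ for all $t$, with probability at least $1-\delta$.

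Next, I would establish the self-normalized Bernstein bound on $\|\mathbf{S}_t\|_{\mathbf{Z}_t^{-1}}$. For every fixed $v$ and every $|\theta| < 3/R$, Bennett's MGF bound gives $\log \EE[\exp(\theta \eta_i)\mid \cG_i] \le \theta^2\sigma^2/(2 - 2R|\theta|/3)$, which implies that $\exp\!\big(\theta v^\top \mathbf{S}_t - g(\theta)\sigma^2\sum_{i\le t}(v^\top \xb_i)^2\big)$ is a nonnegative supermartingale for an explicit $g(\theta)$. Integrating this supermartingale against a Gaussian prior $v\sim \mathcal{N}(\zero,(g(\theta)\sigma^2)^{-1}\lambda^{-1}\Ib)$ (the method of mixtures) yields, after standard Gaussian algebra, a mixed supermartingale $\mathcal{M}_t(\theta)$ in which $\|\mathbf{S}_t\|_{\mathbf{Z}_t^{-1}}^2$ appears in the exponent while $\log\det(\mathbf{Z}_t/\lambda)$ appears as a normalizer. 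A stopping-time argument plus Markov's inequality then controls $\|\mathbf{S}_t\|_{\mathbf{Z}_t^{-1}}$ for each fixed $\theta$; a geometric union bound (peeling) over $\theta_k = 2^{-k}/R$, combined with a $t^{-2}$-summable time-uniform union bound, produces the two-term expression $8\sigma\sqrt{d\log((d\lambda+tL^2)/(d\lambda))\log(4t^2/\delta)} + 4R\log(4t^2/\delta)$ after replacing $\log\det(\mathbf{Z}_t/(\lambda\Ib))$ by $d\log((d\lambda + tL^2)/(d\lambda))$ via AM-GM on the eigenvalues.

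The hard part is getting the leading term to scale with $\sigma$ rather than $R$. A pure sub-Gaussian argument (Hoeffding-type MGF) would yield only a $R\sqrt{d\log(\cdot)}$ bound; to replace $R$ by $\sigma$ in the main term I must invoke the Bernstein-style MGF bound, which is valid only in the small-$\theta$ regime. Managing the crossover between the variance-dominated and range-dominated regimes via peeling over $\theta$, and doing so while preserving the self-normalization structure under the Gaussian mixture, is the main technical obstacle; the remaining wrinkles (time-uniformity, eigenvalue-to-log-determinant conversion) are standard.
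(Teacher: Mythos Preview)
The paper does not supply its own proof of this lemma: it is listed in the ``Auxiliary Lemmas'' section as Theorem~4.1 of \citet{Zhou2020NearlyMO} and is quoted without argument. There is therefore nothing in this paper to compare your proposal against.

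For what it is worth, your outline is a reasonable blueprint for how such a statement is established: the algebraic decomposition reducing the claim to a bound on $\|\mathbf{S}_t\|_{\mathbf{Z}_t^{-1}}$ is exactly right, and some combination of a Bernstein-type MGF bound, a mixture or covering argument over directions, and a time-uniform union bound is indeed what drives the result in \citet{Zhou2020NearlyMO}. If you want the precise constants $8$ and $4$ and the specific $\log(4t^2/\delta)$ factor to fall out, you would need to follow the original proof closely (which uses a covering of the unit ball together with a scalar Bernstein/Freedman inequality rather than the Laplace method-of-mixtures you sketch); your Gaussian-mixture route would typically produce a $\sqrt{\log\det(\mathbf{Z}_t/\lambda)}$ term and slightly different constants, though the qualitative $\sigma\sqrt{d\log(\cdot)} + R\log(\cdot)$ shape would be the same.
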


\begin{lemma}[Lemma 11, \citealt{AbbasiYadkori2011ImprovedAF}] \label{lemma:summation11}
    For any $\lambda > 0$ and sequence $\{\xb_t\}_{t = 1}^T \subset \mbb{R}^d$ for $t\in 0 \cup [T]$, define $\Zb_t = \lambda \mathbf{I} + \sum_{i = 1}^t \xb_i \xb_i^\top$. Then, provided that $\|\xb_t\|_2 \le L$ holds for all $t \in [T]$, we have $$\sum_{t = 1}^T \min\{1, \|\xb_t\|_{\Zb_{t - 1}^{-1}}^2\}\le 2d \log \frac{d\lambda + TL^2}{d\lambda}. $$
\end{lemma}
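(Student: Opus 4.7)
The plan is to prove this standard elliptical potential inequality via the classical three-step argument: reduce the truncated quantity to a logarithm, telescope a product of determinants, and then apply the AM-GM inequality on the trace.

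First I would use the elementary inequality $\min\{1,u\} \le 2\log(1+u)$ for $u \ge 0$, which holds since $\log(1+u) \ge \frac{1}{2}$ when $u \ge 1$ and since $\log(1+u) \ge u/2$ for $u \in [0,1]$. Applying this with $u = \|\xb_t\|_{\Zb_{t-1}^{-1}}^2$ gives
\begin{align*}
\sum_{t=1}^T \min\bigl\{1, \|\xb_t\|_{\Zb_{t-1}^{-1}}^2\bigr\} \;\le\; 2 \sum_{t=1}^T \log\bigl(1 + \|\xb_t\|_{\Zb_{t-1}^{-1}}^2\bigr).
\end{align*}

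Next I would telescope using the matrix determinant lemma. Since $\Zb_t = \Zb_{t-1} + \xb_t \xb_t^\top$, factoring out $\Zb_{t-1}^{1/2}$ yields
\begin{align*}
\det(\Zb_t) \;=\; \det(\Zb_{t-1}) \cdot \det\bigl(\Ib + \Zb_{t-1}^{-1/2}\xb_t\xb_t^\top \Zb_{t-1}^{-1/2}\bigr) \;=\; \det(\Zb_{t-1}) \bigl(1 + \|\xb_t\|_{\Zb_{t-1}^{-1}}^2\bigr),
\end{align*}
where the last equality uses the rank-one identity $\det(\Ib + \vb\vb^\top) = 1 + \|\vb\|_2^2$. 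Taking logs and summing gives
\begin{align*}
\sum_{t=1}^T \log\bigl(1 + \|\xb_t\|_{\Zb_{t-1}^{-1}}^2\bigr) \;=\; \log\frac{\det(\Zb_T)}{\det(\Zb_0)} \;=\; \log\frac{\det(\Zb_T)}{\lambda^d}.
\end{align*}

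Finally I would bound $\det(\Zb_T)$ via AM-GM on its eigenvalues: the trace satisfies $\mathrm{tr}(\Zb_T) = d\lambda + \sum_{t=1}^T \|\xb_t\|_2^2 \le d\lambda + TL^2$ by the assumption $\|\xb_t\|_2 \le L$, so the geometric-mean bound on eigenvalues gives $\det(\Zb_T) \le \bigl(\mathrm{tr}(\Zb_T)/d\bigr)^d \le \bigl((d\lambda + TL^2)/d\bigr)^d$. Chaining the three displays yields the claimed bound $2d \log\frac{d\lambda + TL^2}{d\lambda}$.

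The argument is routine and has no real obstacle; the only pitfalls to watch are (i) applying $\min\{1,u\} \le 2\log(1+u)$ correctly (one must include the factor of $2$, which ultimately produces the leading constant), and (ii) being careful that the telescoping uses $\Zb_{t-1}$ in the quadratic form, which matches exactly the indexing in the lemma's hypothesis. Everything else is algebraic.
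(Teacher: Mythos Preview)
Your proof is correct and is precisely the standard argument for this elliptical potential lemma. The paper itself does not prove this statement; it is listed as an auxiliary lemma quoted directly from \citet{AbbasiYadkori2011ImprovedAF}, so there is no in-paper proof to compare against, but your three-step route (the $\min\{1,u\}\le 2\log(1+u)$ reduction, the determinant telescoping via the rank-one update, and the AM--GM bound on $\det(\Zb_T)$ through the trace) is exactly the original proof from that reference.
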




\bibliographystyle{ims}

\bibliography{ref}

\begin{thebibliography}{39}
\expandafter\ifx\csname natexlab\endcsname\relax\def\natexlab#1{#1}\fi
\expandafter\ifx\csname url\endcsname\relax
  \def\url#1{\texttt{#1}}\fi
\expandafter\ifx\csname urlprefix\endcsname\relax\def\urlprefix{URL }\fi

\bibitem[{Abbasi-Yadkori et~al.(2011)Abbasi-Yadkori, P{\'a}l and
  Szepesv{\'a}ri}]{AbbasiYadkori2011ImprovedAF}
\textsc{Abbasi-Yadkori, Y.}, \textsc{P{\'a}l, D.} and \textsc{Szepesv{\'a}ri,
  C.} (2011).
\newblock Improved algorithms for linear stochastic bandits.
\newblock In \textit{NIPS}, vol.~11.

\bibitem[{Abe et~al.(2003)Abe, Biermann and Long}]{abe2003reinforcement}
\textsc{Abe, N.}, \textsc{Biermann, A.~W.} and \textsc{Long, P.~M.} (2003).
\newblock Reinforcement learning with immediate rewards and linear hypotheses.
\newblock \textit{Algorithmica} \textbf{37} 263--293.

\bibitem[{Agarwal et~al.(2017)Agarwal, Luo, Neyshabur and
  Schapire}]{agarwal2017corralling}
\textsc{Agarwal, A.}, \textsc{Luo, H.}, \textsc{Neyshabur, B.} and
  \textsc{Schapire, R.~E.} (2017).
\newblock Corralling a band of bandit algorithms.
\newblock In \textit{Conference on Learning Theory}. PMLR.

\bibitem[{Auer(2002)}]{auer2002using}
\textsc{Auer, P.} (2002).
\newblock Using confidence bounds for exploitation-exploration trade-offs.
\newblock \textit{Journal of Machine Learning Research} \textbf{3} 397--422.

\bibitem[{Auer et~al.(2002)Auer, Cesa-Bianchi, Freund and
  Schapire}]{auer2002nonstochastic}
\textsc{Auer, P.}, \textsc{Cesa-Bianchi, N.}, \textsc{Freund, Y.} and
  \textsc{Schapire, R.~E.} (2002).
\newblock The nonstochastic multiarmed bandit problem.
\newblock \textit{SIAM journal on computing} \textbf{32} 48--77.

\bibitem[{Auer and Chiang(2016)}]{pmlr-v49-auer16}
\textsc{Auer, P.} and \textsc{Chiang, C.-K.} (2016).
\newblock An algorithm with nearly optimal pseudo-regret for both stochastic
  and adversarial bandits.
\newblock In \textit{Conference on Learning Theory}. PMLR.

\bibitem[{Bogunovic et~al.(2021)Bogunovic, Losalka, Krause and
  Scarlett}]{pmlr-v130-bogunovic21a}
\textsc{Bogunovic, I.}, \textsc{Losalka, A.}, \textsc{Krause, A.} and
  \textsc{Scarlett, J.} (2021).
\newblock Stochastic linear bandits robust to adversarial attacks.
\newblock In \textit{International Conference on Artificial Intelligence and
  Statistics}. PMLR.

\bibitem[{Bubeck and Cesa-Bianchi(2012)}]{bubeck2012regret}
\textsc{Bubeck, S.} and \textsc{Cesa-Bianchi, N.} (2012).
\newblock Regret analysis of stochastic and nonstochastic multi-armed bandit
  problems.
\newblock \textit{arXiv preprint arXiv:1204.5721} .

\bibitem[{Bubeck and Slivkins(2012)}]{pmlr-v23-bubeck12b}
\textsc{Bubeck, S.} and \textsc{Slivkins, A.} (2012).
\newblock The best of both worlds: Stochastic and adversarial bandits.
\newblock In \textit{Conference on Learning Theory}. JMLR Workshop and
  Conference Proceedings.

\bibitem[{Cheung et~al.(2019)Cheung, Simchi-Levi and Zhu}]{cheung2019learning}
\textsc{Cheung, W.~C.}, \textsc{Simchi-Levi, D.} and \textsc{Zhu, R.} (2019).
\newblock Learning to optimize under non-stationarity.
\newblock In \textit{The 22nd International Conference on Artificial
  Intelligence and Statistics}. PMLR.

\bibitem[{Chu et~al.(2011)Chu, Li, Reyzin and Schapire}]{chu2011contextual}
\textsc{Chu, W.}, \textsc{Li, L.}, \textsc{Reyzin, L.} and \textsc{Schapire,
  R.} (2011).
\newblock Contextual bandits with linear payoff functions.
\newblock In \textit{Proceedings of the Fourteenth International Conference on
  Artificial Intelligence and Statistics}. JMLR Workshop and Conference
  Proceedings.

\bibitem[{Dani et~al.(2008)Dani, Hayes and Kakade}]{dani2008stochastic}
\textsc{Dani, V.}, \textsc{Hayes, T.~P.} and \textsc{Kakade, S.} (2008).
\newblock Stochastic linear optimization under bandit feedback.
\newblock In \textit{COLT}.

\bibitem[{Deshpande and Montanari(2012)}]{deshpande2012linear}
\textsc{Deshpande, Y.} and \textsc{Montanari, A.} (2012).
\newblock Linear bandits in high dimension and recommendation systems.
\newblock In \textit{2012 50th Annual Allerton Conference on Communication,
  Control, and Computing (Allerton)}. IEEE.

\bibitem[{Ding et~al.(2021)Ding, Hsieh and Sharpnack}]{ding2021robust}
\textsc{Ding, Q.}, \textsc{Hsieh, C.-J.} and \textsc{Sharpnack, J.} (2021).
\newblock Robust stochastic linear contextual bandits under adversarial
  attacks.
\newblock \textit{arXiv preprint arXiv:2106.02978} .

\bibitem[{Foster et~al.(2021)Foster, Gentile, Mohri and
  Zimmert}]{foster2021adapting}
\textsc{Foster, D.~J.}, \textsc{Gentile, C.}, \textsc{Mohri, M.} and
  \textsc{Zimmert, J.} (2021).
\newblock Adapting to misspecification in contextual bandits.
\newblock \textit{arXiv preprint arXiv:2107.05745} .

\bibitem[{Foster et~al.(2019)Foster, Krishnamurthy and Luo}]{foster2019model}
\textsc{Foster, D.~J.}, \textsc{Krishnamurthy, A.} and \textsc{Luo, H.} (2019).
\newblock Model selection for contextual bandits.
\newblock \textit{arXiv preprint arXiv:1906.00531} .

\bibitem[{Garcelon et~al.(2020)Garcelon, Roziere, Meunier, Tarbouriech,
  Teytaud, Lazaric and Pirotta}]{garcelon2020adversarial}
\textsc{Garcelon, E.}, \textsc{Roziere, B.}, \textsc{Meunier, L.},
  \textsc{Tarbouriech, J.}, \textsc{Teytaud, O.}, \textsc{Lazaric, A.} and
  \textsc{Pirotta, M.} (2020).
\newblock Adversarial attacks on linear contextual bandits.
\newblock \textit{arXiv preprint arXiv:2002.03839} .

\bibitem[{Gupta et~al.(2019)Gupta, Koren and Talwar}]{pmlr-v99-gupta19a}
\textsc{Gupta, A.}, \textsc{Koren, T.} and \textsc{Talwar, K.} (2019).
\newblock Better algorithms for stochastic bandits with adversarial
  corruptions.
\newblock In \textit{Conference on Learning Theory}. PMLR.

\bibitem[{Jhalani et~al.(2016)Jhalani, Kant and Dwivedi}]{jhalani2016linear}
\textsc{Jhalani, T.}, \textsc{Kant, V.} and \textsc{Dwivedi, P.} (2016).
\newblock A linear regression approach to multi-criteria recommender system.
\newblock In \textit{International Conference on Data Mining and Big Data}.
  Springer.

\bibitem[{Jun et~al.(2018)Jun, Li, Ma and Zhu}]{jun2018adversarial}
\textsc{Jun, K.-S.}, \textsc{Li, L.}, \textsc{Ma, Y.} and \textsc{Zhu, X.~J.}
  (2018).
\newblock Adversarial attacks on stochastic bandits.
\newblock In \textit{NeurIPS}.

\bibitem[{Kannan et~al.(2018)Kannan, Morgenstern, Roth, Waggoner and
  Wu}]{Kannan2018ASA}
\textsc{Kannan, S.}, \textsc{Morgenstern, J.~H.}, \textsc{Roth, A.},
  \textsc{Waggoner, B.} and \textsc{Wu, Z.} (2018).
\newblock A smoothed analysis of the greedy algorithm for the linear contextual
  bandit problem.
\newblock In \textit{NeurIPS}.

\bibitem[{Kapoor et~al.(2019)Kapoor, Patel and Kar}]{kapoor2019corruption}
\textsc{Kapoor, S.}, \textsc{Patel, K.~K.} and \textsc{Kar, P.} (2019).
\newblock Corruption-tolerant bandit learning.
\newblock \textit{Machine Learning} \textbf{108} 687--715.

\bibitem[{Kirschner and Krause(2018)}]{kirschner2018information}
\textsc{Kirschner, J.} and \textsc{Krause, A.} (2018).
\newblock Information directed sampling and bandits with heteroscedastic noise.
\newblock In \textit{Conference On Learning Theory}. PMLR.

\bibitem[{Lee et~al.(2021)Lee, Luo, Wei, Zhang and Zhang}]{lee2021achieving}
\textsc{Lee, C.-W.}, \textsc{Luo, H.}, \textsc{Wei, C.-Y.}, \textsc{Zhang, M.}
  and \textsc{Zhang, X.} (2021).
\newblock Achieving near instance-optimality and minimax-optimality in
  stochastic and adversarial linear bandits simultaneously.
\newblock \textit{arXiv preprint arXiv:2102.05858} .

\bibitem[{Li et~al.(2010)Li, Chu, Langford and Schapire}]{li2010contextual}
\textsc{Li, L.}, \textsc{Chu, W.}, \textsc{Langford, J.} and \textsc{Schapire,
  R.~E.} (2010).
\newblock A contextual-bandit approach to personalized news article
  recommendation.
\newblock In \textit{Proceedings of the 19th international conference on World
  wide web}.

\bibitem[{Li et~al.(2019{\natexlab{a}})Li, Lou and Shan}]{Li2019StochasticLO}
\textsc{Li, Y.}, \textsc{Lou, E.~Y.} and \textsc{Shan, L.}
  (2019{\natexlab{a}}).
\newblock Stochastic linear optimization with adversarial corruption.
\newblock \textit{arXiv preprint arXiv:1909.02109} .

\bibitem[{Li et~al.(2019{\natexlab{b}})Li, Wang and Zhou}]{li2019nearly}
\textsc{Li, Y.}, \textsc{Wang, Y.} and \textsc{Zhou, Y.} (2019{\natexlab{b}}).
\newblock Nearly minimax-optimal regret for linearly parameterized bandits.
\newblock In \textit{Conference on Learning Theory}. PMLR.

\bibitem[{Liu and Shroff(2019)}]{liu2019data}
\textsc{Liu, F.} and \textsc{Shroff, N.} (2019).
\newblock Data poisoning attacks on stochastic bandits.
\newblock In \textit{International Conference on Machine Learning}. PMLR.

\bibitem[{Locatelli and Carpentier(2018)}]{locatelli2018adaptivity}
\textsc{Locatelli, A.} and \textsc{Carpentier, A.} (2018).
\newblock Adaptivity to smoothness in x-armed bandits.
\newblock In \textit{Conference on Learning Theory}. PMLR.

\bibitem[{Lykouris et~al.(2018)Lykouris, Mirrokni and
  Paes~Leme}]{lykouris2018stochastic}
\textsc{Lykouris, T.}, \textsc{Mirrokni, V.} and \textsc{Paes~Leme, R.} (2018).
\newblock Stochastic bandits robust to adversarial corruptions.
\newblock In \textit{Proceedings of the 50th Annual ACM SIGACT Symposium on
  Theory of Computing}.

\bibitem[{Neu and Olkhovskaya(2020)}]{neu2020efficient}
\textsc{Neu, G.} and \textsc{Olkhovskaya, J.} (2020).
\newblock Efficient and robust algorithms for adversarial linear contextual
  bandits.
\newblock In \textit{Conference on Learning Theory}. PMLR.

\bibitem[{Odalric and Munos(2011)}]{odalric2011adaptive}
\textsc{Odalric, M.} and \textsc{Munos, R.} (2011).
\newblock Adaptive bandits: Towards the best history-dependent strategy.
\newblock In \textit{Proceedings of the Fourteenth International Conference on
  Artificial Intelligence and Statistics}. JMLR Workshop and Conference
  Proceedings.

\bibitem[{Pacchiano et~al.(2020)Pacchiano, Phan, Abbasi-Yadkori, Rao, Zimmert,
  Lattimore and Szepesvari}]{pacchiano2020model}
\textsc{Pacchiano, A.}, \textsc{Phan, M.}, \textsc{Abbasi-Yadkori, Y.},
  \textsc{Rao, A.}, \textsc{Zimmert, J.}, \textsc{Lattimore, T.} and
  \textsc{Szepesvari, C.} (2020).
\newblock Model selection in contextual stochastic bandit problems.
\newblock \textit{arXiv preprint arXiv:2003.01704} .

\bibitem[{Rusmevichientong and Tsitsiklis(2010)}]{rusmevichientong2010linearly}
\textsc{Rusmevichientong, P.} and \textsc{Tsitsiklis, J.~N.} (2010).
\newblock Linearly parameterized bandits.
\newblock \textit{Mathematics of Operations Research} \textbf{35} 395--411.

\bibitem[{Seldin and Lugosi(2017)}]{pmlr-v65-seldin17a}
\textsc{Seldin, Y.} and \textsc{Lugosi, G.} (2017).
\newblock An improved parametrization and analysis of the exp3++ algorithm for
  stochastic and adversarial bandits.
\newblock In \textit{Conference on Learning Theory}. PMLR.

\bibitem[{Seldin and Slivkins(2014)}]{pmlr-v32-seldinb14}
\textsc{Seldin, Y.} and \textsc{Slivkins, A.} (2014).
\newblock One practical algorithm for both stochastic and adversarial bandits.
\newblock In \textit{International Conference on Machine Learning}. PMLR.

\bibitem[{Villar et~al.(2015)Villar, Bowden and Wason}]{villar2015multi}
\textsc{Villar, S.~S.}, \textsc{Bowden, J.} and \textsc{Wason, J.} (2015).
\newblock Multi-armed bandit models for the optimal design of clinical trials:
  benefits and challenges.
\newblock \textit{Statistical science: a review journal of the Institute of
  Mathematical Statistics} \textbf{30} 199.

\bibitem[{Zhou et~al.(2020)Zhou, Gu and Szepesvari}]{Zhou2020NearlyMO}
\textsc{Zhou, D.}, \textsc{Gu, Q.} and \textsc{Szepesvari, C.} (2020).
\newblock Nearly minimax optimal reinforcement learning for linear mixture
  markov decision processes.
\newblock \textit{arXiv preprint arXiv:2012.08507} .

\bibitem[{Zimmert and Seldin(2019)}]{pmlr-v89-zimmert19a}
\textsc{Zimmert, J.} and \textsc{Seldin, Y.} (2019).
\newblock An optimal algorithm for stochastic and adversarial bandits.
\newblock In \textit{The 22nd International Conference on Artificial
  Intelligence and Statistics}. PMLR.

\end{thebibliography}

\end{document}